\newtheorem{assumption}[theorem]{Assumption}
\def\bR{{\mathbb{R}}}
\def\bN{{\mathbb{N}}}
\def\cC{{\mathcal{C}}}
\def\cG{{\mathcal{G}}}
\def\cF{{\mathcal{F}}}
\def\cP{{\mathcal{P}}}
\def\cM{{\mathcal{M}}}
\def\cK{{\mathcal{K}}}
\def\hmu{\widehat{\mu}}
\def\hpsi{\widehat{\psi}}
\def\hvphi{{\widehat{\varphi}}}
\newcommand{\exR}{\overline{\bR}}
\newcommand{\KL}{\mathrm{KL}}
\newcommand{\JS}{{\mathrm{JS}}}
\newcommand{\Lip}{{\mathrm{Lip}}}
\def\IPM{{\mathrm{IPM}}}
\newcommand{\abs}[1]{\left|#1\right|}
\def\gatSum{{\widetilde{C}^N_\cK}}
\def\gatSumvar{{C^N_\cK}}
\def\Spsi{S_{1}}
\def\Smu{S^{\prime}}
\def\Svar{S_{2}}
\def\Sf{S^{\prime \prime}}
\def\normpsi{\cC(X), 1}
\def\normmu{\cM(X)}
\def\normvar{\cC(X), 2}
\def\Spsic{S_{1,c}}
\def\Svarc{S_{2,c}}
\def\Sfc{S^{\prime \prime}_{c}}
\newcommand{\Add}[1]{\textcolor{red}{#1}}
\newcommand{\Addsue}[1]{\textcolor{magenta}{#1}}
\newcommand{\Erase}{\bgroup\markoverwith{\textcolor{red}{\rule[.5ex]{2pt}{0.4pt}}}\ULon}
\newcommand{\Erasefur}{\bgroup\markoverwith{\textcolor{red}{\rule[.5ex]{2pt}{0.4pt}}}\ULon}
\newcommand{\Eraseoku}{\bgroup\markoverwith{\textcolor{orange}{\rule[.5ex]{2pt}{0.4pt}}}\ULon}
\newcommand{\Erasesaw}{\bgroup\markoverwith{\textcolor{blue}{\rule[.5ex]{2pt}{0.4pt}}}\ULon}
\newcommand{\Erasesue}{\bgroup\markoverwith{\textcolor{magenta}{\rule[.5ex]{2pt}{0.4pt}}}\ULon}
\begin{document}

\title{Convergences for Minimax Optimization Problems over Infinite-Dimensional Spaces Towards Stability in Adversarial Training}

\author[1,a, *]{\rm Takashi Furuya}
\author[2,b, *]{\rm Satoshi Okuda}
\author[3,c]{\rm Kazuma Suetake}
\author[2,d]{\rm Yoshihide Sawada}

\affil[1]{{\small Shimane University}}
\affil[a]{{\small Email: takashi.furuya0101@gmail.com}\vspace{2mm}}

\affil[2]{{\small Tokyo Research Center, Aisin Corporation}}
\affil[b]{{\small Email: satoshi.okuda@aisin.co.jp}\vspace{2mm}}

\affil[3]{{\small AISIN SOFTWARE, Japan}}
\affil[c]{{\small Email: kazuma.suetake@aisin-software.com}\vspace{2mm}}

\affil[d]{{\small Email: yoshihide.sawada@aisin.co.jp}\vspace{2mm}}

\affil[*]{{\footnotesize These two authors contributed equally to this work}}

\editor{}

\maketitle

\begin{abstract}
Training neural networks that require adversarial optimization, such as generative adversarial networks (GANs) and unsupervised domain adaptations (UDAs), suffers from instability. This instability problem comes from the difficulty of the minimax optimization, and there have been various  approaches in GANs and UDAs to overcome this problem. In this study, we tackle this problem theoretically through a functional analysis. Specifically, we show the convergence property of the minimax problem by the gradient descent over the infinite-dimensional spaces of continuous functions and probability measures under certain conditions.
Using this setting, we can discuss GANs and UDAs comprehensively, which have been studied independently.
In addition, we show that the conditions necessary for the convergence property are interpreted as stabilization techniques of adversarial training such as the spectral normalization and the gradient penalty.
\end{abstract}

\begin{keywords}
Minimax, Non-convex Optimization, Convergence Analysis,  Adversarial Training, Functional Analysis. 
\end{keywords}







%
%
%
%
%

\section{Introduction}\label{Introduction}

With the increased computational resources and available data, neural networks (NNs) trained by adversarial training have emerged prominently in various fields.
An example is the application of generative adversarial networks (GANs) in generative tasks. GANs train the generator to captures the data distribution in an adversarial manner against the discriminator, which distinguishes between data generated by the generator and the dataset ~\citep{goodfellow2014generative}.
Another example is the utilization of adversarial training in unsupervised domain adaptations (UDAs) as generalization techniques. 
UDAs transfer knowledge from source domains to the target domain by extracting domain-invariant features against the domain critic that distinguish between data from source and target domains in an adversarial manner~\citep{ganin2015unsupervised}.
Despite the effectiveness of GANs and UDAs, both pose challenges as nonconvex-nonconcave minimax problems, leading to inherent instability~\citep{salimans2016improved}. This instability, though insufficiently explored theoretically, complicates the widespread deployment of these models and hinders their practical application.
To address and pave the way for more robust applications, we analyze the instability problem from a functional analysis perspective.

As instability is related to the convergence properties of the gradient descent algorithm~\citep{chu2020smoothness}, we aim to clarify the convergence conditions for adversarial optimization problems.
To facilitate the derivation of these conditions from the functional analysis perspective, we begin by considering the ideal setting.
In our study, the ideal setting is derived from the dual formula of the minimization of a functional over probability distributions, leading to the \textit{minimax} problem over \textit{infinite-dimensional} spaces of continuous functions or probability measures. 
By exploring this minimax problem over infinite-dimensional spaces, we can prove the convergence to a minimax solution for a convex-concave setting~(Section~\ref{Convex-concave setting}) and a stationary point for a nonconvex-concave setting~(Section~\ref{Minimax analysis-nonconvex}) under appropriate assumptions.

Throughout the convergence analyses, we maintain the assumption that the discrepancy measure, appearing in both GANs and UDAs, is strongly convex and $L$-smooth for the convergence. 
Achieving strong convexity involves confining the discriminator to a suitable subset within Lipschitz continuous function spaces. 
This concept aligns with the spectral normalization \citep{miyato2018spectral}.
To ensure $L$-smoothness, we utilize the inf-convolution with a regularizer, such as the squared maximum mean discrepancy (MMD) in the reproducing kernel Hilbert spaces (RKHS) with the Gaussian kernel. This process corresponds to the gradient penalty \citep{gulrajani2017improved}.
Therefore, we can theoretically interpret widely-used stabilization techniques in adversarial training as the desired condition for achieving convergence properties.

{\bf Contributions}
\begin{itemize}
\item[(A)]
We show the convergence of the minimax solution for a convex-concave setting and the stationary point for a nonconvex-concave setting over infinite-dimensional spaces of continuous functions or probability measures. 
This analysis is motivated by adversarial training in the scheme of the gradient descent (Section~\ref{sec:Minimax analysis}).

%
\item[(B)] 
We verify 
the fulfillment of sufficient conditions for the convergence properties in certain GANs and UDAs settings (Section~\ref{Application}), providing a theoretical interpretation of existing techniques such as the spectral normalization and gradient penalty.
%
\end{itemize}
%
%

%
%

\section{Related Work}

GAN training often exhibits an unstable trajectory, resulting in poor solutions~\citep{goodfellow2014generative,metz2016unrolled}.
To address this instability, various stabilization techniques have been proposed, including the Wasserstein GAN~\citep{arjovsky2017wasserstein}, gradient penalty~\citep{gulrajani2017improved}, and spectral normalization~\citep{miyato2018spectral}. 
The effectiveness of these techniques in stabilizing GAN training has been theoretically demonstrated~\citep{chu2020smoothness}.
This theoretical result implies that the instability of GANs is due to adversarial training. 
Thus, UDAs with adversarial training are expected to encounter similar instability during training. 
Notably, \citet{chu2020smoothness} provides theoretical insight into GANs, interpreting stabilization techniques as conditions from the perspective of minimization problem over finite-dimensional spaces. 
On the other hand, our work provides similar theoretical insight from the viewpoint of the {\it {minimax}} problem in the {\it{infinite}}-dimensional spaces. 
Considering the minimax problem offers a setting closer to adversarial training than the minimization problem, and analyzing infinite-dimensional spaces provides comprehensive framework for both GANs and UDAs settings.


Numerous references delve into the minimax optimization problem over finite-dimensional spaces, often treated as specific cases of Hilbert spaces. For instance, \citet{cherukuri2017saddle, mokhtari2020unified, du2019linear} explore the convex-concave setting, while \citet{huang2021efficient, thekumparampil2019efficient, lin2020gradient} focus on the nonconvex-concave setting.
Although the minimax problem over Hilbert spaces has received extensive attention, with works such as \citet{bauschke2017correction,boct2022accelerated,bot2023relaxed}, the exploration of the minimax problem over spaces of probability measures or continuous functions—distinct from Hilbert spaces—remains relatively limited. 
On the other hand, our work delves into the minimax optimization problem for infinite-dimensional spaces of probability measures or continuous functions.

\section{Preliminary}\label{Notation}
This section describes the mathematical tools required in this paper.

Let $\mathbb{N}_{0}$  be the set of natural numbers including zero, $X \subset \mathbb{R}^d$ be a compact set, and $\overline{\mathbb{R}}=\mathbb{R}\cup \{ - \infty, +\infty \}$ be the extended real number.
We denote by $\mathcal{M}(X)$, $\mathcal{M}^{+}(X)$, and $\cP(X)$ the set of all finite signed measures on $X$, the set of all non-negative finite measure on $X$, and the set of Borel probability measures on $X$, respectively.
Let $\mathcal{C}(X)$ be the set of all continuous functions $X \to \mathbb{R}$.
As shown in \citet[Section 5.14]{aliprantis2006infinite}, $\left<\mathcal{M}(X), \mathcal{C}(X)\right>$ is a dual pair equipped with the bilinear functional
\[
\left<\mu, \varphi \right>:= \int \varphi d\mu, \quad \mu \in \mathcal{M}(X), \ \varphi \in \mathcal{C}(X),
\]
and the topological dual of $\mathcal{M}(X)$ with respect to weak topology is $\mathcal{C}(X)$ 
\citep[Theorem 5.93]{aliprantis2006infinite}.
In the context of machine learning, 
we restrict $\mathcal{M}(X)$ to $\mathcal{P}(X)$.
As $X$ is a compact subset of $\mathbb{R}^d$, $\mathcal{P}(X)$ is a compact subset in $\mathcal{M}(X)$ (see e.g., \citet[Theorem 15.11]{aliprantis2006infinite}). 

Let $\left\| \cdot \right\|_{\mathcal{M}(X)}$ and $\left\| \cdot \right\|_{\mathcal{C}(X)}$ be norms induced by inner products in $\mathcal{M}(X)$ and $\mathcal{C}(X)$, respectively. 
Then, we first define the dual norms, convex conjugation, and strong convexity as follows:

\paragraph{Dual Norms}
We denote dual norms $\left\| \cdot \right\|_{\mathcal{M}(X)}^{\star}$ and $\left\| \cdot \right\|_{\mathcal{C}(X)}^{\star}$ of $\left\| \cdot \right\|_{\mathcal{M}(X)}$ and $\left\| \cdot \right\|_{\mathcal{C}(X)}$ by, respectively, 
\begin{align}
\left\| \varphi \right\|_{\mathcal{M}(X)}^{\star}
&= \sup \left\{ \left| \int \varphi d\mu \right| \ : \ \left\| \mu \right\|_{\mathcal{M}(X)} \leq 1, \  \mu \in \mathcal{M}(X)\right\},
\quad
\varphi \in \mathcal{C}(X),\label{dual-norm-M}
\\
\left\| \mu \right\|_{\mathcal{C}(X)}^{\star}
&= \sup \left\{ \left| \int \varphi d\mu \right| \ : \ \left\| \varphi \right\|_{\mathcal{C}(X)} \leq 1, \  \varphi \in \mathcal{C}(X)\right\},
\quad
\mu \in \mathcal{M}(X).
\label{dual-norm-C}
\end{align}

%
\paragraph{Convex Conjugation}
The convex conjugates $F^\star$ and $G^\star$ of each functionals $F:\cC(X)\to\exR$ and $G: \cM(X)\to\exR$ are defined by, respectively, 
\begin{align}
F^{\star}(\mu)
&=\sup
_{\varphi \in \mathcal{C}(X)}
\int \varphi d\mu - F(\varphi), 
\quad\mu \in \mathcal{M}(X),
\label{dual-eta}\\
G^{\star}(\varphi)
&= \sup_{\mu \in \mathcal{M}(X)} \int \varphi d\mu - G(\mu), 
\quad 
\varphi \in \mathcal{C}(X).
\label{dual-phi}
\end{align}


%
\paragraph{Strong Convexity}
Let $S_{\cC} \subset \cC(X)$ and $S_{\cM} \subset \cM(X)$.
We say that $F: \cC(X)\to\exR$ and $G: \cM(X)\to\exR$ are $\beta$-strongly convex~($\beta > 0$) with respect to $\left\| \cdot \right\|_{\mathcal{C}(X)}$ and $\left\| \cdot \right\|_{\mathcal{M}(X)}$ over $S_{\cC}$ and $S_{\cM}$, respectively, if it holds that
for any $\alpha\in[0, 1]$
\begin{align}
F(\alpha \psi + (1-\alpha) \varphi ) 
&\leq \alpha F(\psi) + (1-\alpha) F(\varphi) 
-\frac{\alpha (1-\alpha)\beta}{2} \left\| \psi - \varphi \right\|_{\mathcal{C}(X)}^{2}, \quad \psi, \varphi \in S_{\cC},
\label{SC-psi}\\
G(\alpha \mu + (1-\alpha) \nu ) 
&\leq \alpha G(\mu) + (1-\alpha) G(\nu) 
-\frac{\alpha (1-\alpha)\beta}{2} \left\| \mu - \nu \right\|_{\mathcal{M}(X)}^{2}, \quad \mu, \nu \in S_{\cM}.
\label{SC-mu}
\end{align}
%

Next, we review 
G\^{a}teaux differentials, Bregman divergences, and $L$-smoothness in order.

The G\^{a}teaux differential is a generalization of the concept of directional derivative in finite-dimensional differential calculus.
Let
$F : \mathcal{C}(X) \to \overline{\mathbb{R}}$ and
$G : \mathcal{M}(X) \to \overline{\mathbb{R}}$ 
, then 
G\^{a}teaux differentials are defined as follows.
\begin{definition}
\label{G\^{a}teaux differentials over measure and continuous function spaces}
We define G\^{a}teaux differentials 
$dF_{\varphi} : \mathcal{C}(X) \to \overline{\mathbb{R}}$ 
and 
$dG_{\mu} : \mathcal{M}(X) \to \overline{\mathbb{R}}$ of the functionals $F$ and $G$ 
at $\varphi \in \mathcal{C}(X)$ and $\mu \in \mathcal{M}(X)$ in the direction $\lambda \in \mathcal{C}(X)$ and $\chi \in \mathcal{M}(X)$ by, respectively,
\begin{align*}
    dF_{\varphi}(\lambda) &:= \lim_{\epsilon \to +0} \frac{F(\varphi+\epsilon \lambda) - F(\varphi)}{\epsilon},\\
    dG_{\mu}(\chi) &:= \lim_{\epsilon \to +0} \frac{G(\mu+\epsilon \chi) - G(\mu)}{\epsilon}.
\end{align*}
\end{definition}
We note that if $F$ and $G$ are proper convex functionals, then for $\varphi \in \mathcal{C}(X)$ and $\mu \in \mathcal{M}(X)$ there exist G\^{a}teaux differentials
$dF_{\varphi} : \mathcal{C}(X) \to \overline{\mathbb{R}}$ and
$dG_{\mu} : \mathcal{M}(X) \to \overline{\mathbb{R}}$, respectively~\citep[Lemma 7.14]{aliprantis2006infinite}.

Then, we review the Bregman divergences. The Bregman divergences over spaces of measures and continuous functions measure between two points defined in terms of convex functions.
\begin{definition}
\label{Bregman divergences over measure and continuous function spaces}
Let
$F : \mathcal{C}(X) \to \overline{\mathbb{R}}$ and
$G : \mathcal{M}(X) \to \overline{\mathbb{R}}$ be proper, lower semi-continuous, and convex functionals.
Then, $F$-Bregman divergence $D_{F} :\mathcal{M}(X) \times \mathcal{M}(X) \to \mathbb{R}_{+}$ and $G$-Bregman divergence $D_{G} :\mathcal{C}(X) \times \mathcal{C}(X) \to \mathbb{R}_{+}$ are defined by, respectively,
\[
D_{F}(\nu | \mu):=F(\nu) - F(\mu) -dF_{\mu}(\nu-\mu), \ \mu, \nu \in \mathcal{M}(X),
\]
\[
D_{G}(\psi | \varphi):=G(\psi) - G(\varphi) -dG_{\varphi}(\psi-\varphi), \ \varphi, \psi \in \mathcal{C}(X).
\]
\end{definition}
Finally, we review the $L$-smoothness. The $L$-smoothness over spaces of measures and continuous functions are defined using the Bregman divergence as follows.
\begin{definition}
\label{Relative smoothness for measure and continuous function spaces}
Let $S_{\cC} \subset \cC(X)$ and $S_{\cM} \subset \cM(X)$ be subsets, and $F : \mathcal{C}(X) \to \overline{\mathbb{R}}$ and $G : \mathcal{M}(X) \to \overline{\mathbb{R}}$ be proper, lower semi-continuous, and convex.
Then, we say that $F$ and $G$ are $L$-smooth ($L > 0$) with respect to $\left\| \cdot \right\|_{\mathcal{C}(X)}$ and $\left\| \cdot \right\|_{\mathcal{M}(X)}$ over $S_{\cC}$ and $S_{\cM}$ if it holds that, respectively,
\begin{align*}
D_{F}(\psi | \varphi) &\leq \frac{L}{2} \left\|\psi - \varphi\right\|^2_{\cC(X)}, \quad \varphi, \psi \in S_{\cC}, \\
D_{G}(\nu | \mu) &\leq \frac{L}{2} \left\|\nu - \mu \right\|^2_{\cM(X)}, \quad \mu, \nu \in S_{\cM}.
\end{align*}
\end{definition}
%
\section{Problem Setting}
This section describes the problem setup of GAN and UDA training, 
building upon the reformulation introduced by \citet{chu2019probability} as the foundation for our theoretical framework.

In their work, \citet{chu2019probability} reformulated GAN training as a minimization problem with an objective function $J_{\nu_0}(\mu)$ over the set of probability measures, which represents a discrepancy measure between a generated distribution $\mu$ and an unknown true distribution $\nu_0$.
Moreover, the adversarial loss can be obtained through the Fenchel-Moreau theorem.
Consequently, they showed that various GAN models can be constructed by identifying particular discrepancy measures on an infinite dimensional space, such as the ordinal GAN~\citep{goodfellow2014generative}, maximal mean discrepancy (MMD) GAN~\citep{li2015generative}, $f$-GAN~\citep{nowozin2016f}, and Wasserstein GAN~\citep{arjovsky2017wasserstein}.
Building upon this formulation, we extend it to unsupervised domain adaptation by adversarial training.

The UDA can be regarded as a simultaneous optimization problem for a source risk $R: \cC(X) \times \cP(X) \to \bR$ and a discrepancy measure $J_{\nu_0}(\mu): \cP(X) \to \bR$ between a source distribution $\mu$ and a fixed target distribution $\nu_0$.
Then, the optimization problem for the UDA can be expressed as:
\begin{equation}
    \min_{(\psi, \mu) \in \mathcal{C}(X) \times \mathcal{P}(X)} R(\psi, \mu) + J_{\nu_0}(\mu).
    \label{min-GANs-UDAs}
\end{equation}
Here, the first variable $\psi$ in $R$ corresponds to the predictor.
A typical example of $R$ is that $R(\psi, \mu)=\int |\psi(x)- \psi_0(x)|^2d\mu(x)$ where $\psi_0$ is the true predictor. 
The particular discrepancy measures lead to the well-known models of domain adversarial neural networks (DANNs)~\citep{ganin2015unsupervised}, such as DANNs with its extensions with Wasserstein-1 distance~\citep{shen2018wasserstein}, $f$-divergence~\citep{acuna2021a}, and MMD~\citep{wu2022distribution}.
As in the case of GANs~\citep{chu2019probability}, the Fenchel-Moreau theorem yields the following formulation equal to \eqref{min-GANs-UDAs}:
\begin{equation}
    \min_{(\psi, \mu) \in \mathcal{C}(X) \times \mathcal{P}(X)} \max_{\varphi \in \mathcal{C}(X)} R(\psi, \mu) + \int \varphi d\mu - J_{\nu_0}^{\star}(\varphi).
    \label{intro-minimax-eq}
\end{equation}
This objective function is convex for $\psi$ and $\mu$, and concave for $\varphi$, where $\varphi$ corresponds to the domain classifier in the UDA.
In Section~\ref{Minimax analysis}, we delve into the convergence of this objective function in the general setting.


By omitting the source risk $R$, the formulation (\ref{intro-minimax-eq}) reduces to that of GAN :
\begin{equation}
    \min_{\mu \in \mathcal{P}(X)} \max_{\varphi \in \mathcal{C}(X)} \int \varphi d\mu - J_{\nu_0}^{\star}(\varphi),
    \label{intro-minimax-eq-gan-special}
\end{equation}
where $\varphi$ corresponds to the discriminator in the GAN.
This allows us to analyze the convergence properties in GANs and UDAs in a unified manner.
In other words, the findings of GANs, which have been extensively studied for stability, 
could be used for UDAs. In fact, the assumptions used in this paper are related to the constraints of the GANs (see Section~\ref{Application}).

However, the formulation of (\ref{intro-minimax-eq}), which extends the reformulation of \citet{chu2019probability}, deviates from minimax optimization in actual GANs and UDAs such as \citet{goodfellow2014generative,ganin2015unsupervised}, as it does not directly optimize the distribution $\mu$. 
To get more practical situations, we consider the source distribution $\mu$ as pushforward measure $f_{\sharp} \xi_{0}$ of fixed probability measure $\xi_{0} \in \cP(Z)$ by continuous function $f \in \cC(Z; X)$
, which corresponds to a generator in GANs, or a feature extractor in UDAs.
Then, the problem (\ref{intro-minimax-eq}) is reformulated as 
\begin{equation}
    \min_{\psi \in \mathcal{C}(X)} 
    \min_{f \in \mathcal{C}(Z;X)} 
    \max_{\varphi \in \mathcal{C}(X)} R(\psi, f_{\sharp}\xi_0) + \int \varphi d(f_{\sharp}\xi_0) - J_{\nu_0}^{\star}(\varphi).
    \label{intro-minimax-eq-nonconvex}
\end{equation}
This objective function is generally nonconvex for $\psi$ and $f$. In Section~\ref{Minimax analysis-nonconvex}, we explore the convergence of this objective function in the general setting.

\section{Minimax Analysis}\label{sec:Minimax analysis}
Our goal in this section is to prove the convergence of the minimax optimization problem in the scheme of the gradient descent under appropriate assumptions. 
In Section~\ref{Minimax analysis}, we will consider the convex-concave problem over spaces of continuous functions and probability measures, and prove that the sequence obtained by 
a certain gradient descent converges to the optimal minimax solution. 
While, in Section~\ref{Minimax analysis-nonconvex}, we will consider the nonconvex-concave problem over spaces of continuous functions, and show 
that the sequence obtained by 
a certain gradient descent converges to a stationary point.

Note that the objective functions in Sections~\ref{Minimax analysis} and ~\ref{Minimax analysis-nonconvex} are general forms of (\ref{intro-minimax-eq}) and (\ref{intro-minimax-eq-nonconvex}), respectively.

%
\subsection{Convex-concave setting}\label{Minimax analysis}
\label{Convex-concave setting}
This section considers the following minimax problem: 
\begin{equation}
\min_{(\psi, \mu) \in \Spsi\times \Smu}
\max_{\varphi \in \Svar} \
\mathcal{K}(\psi, \mu, \varphi),
\label{Minimax-K-eq}
\end{equation}
%
where $\Smu \subset \mathcal{P}(X)$ and $\Spsi, \Svar  \subset \mathcal{C}(X)$ are compact convex subsets and $\mathcal{K}:\mathcal{C}(X) \times \mathcal{M}(X) \times \mathcal{C}(X) \to \overline{\mathbb{R}}$ is supposed to be an objective function of GANs or UDAs.
The typical example of $\mathcal{K}$ is the objective function in (\ref{intro-minimax-eq}), that is,
\[
\mathcal{K}(\psi, \mu, \varphi)=R(\psi, \mu) + \int \varphi d\mu - J_{\nu_0}^{\star}(\varphi).
\]


We show that \textit{the sequence obtained by the gradient descent converges to the optimal solution of \eqref{Minimax-K-eq} under appropriate assumptions.} 
To do this, we first consider the joint convexity as follows:

\begin{assumption}\label{convex-concave-K}
Assume the following:
\begin{itemize}
\item[(i)] $\mathcal{K}(\cdot, \cdot, \varphi)$ is proper, lower semi-continuous, and convex over $\Spsi \times \Smu$ for each $\varphi \in  \Svar $.
\item[(ii)] $\mathcal{K}(\psi, \mu, \cdot)$ is proper, upper semi-continuous, and concave over $\Svar$ for each $\psi \in \Spsi$ and $\mu \in \Smu$.
\end{itemize}
\end{assumption}
This assumption means that the problem (\ref{Minimax-K-eq}) is a convex-concave problem.
Under this assumption, Sion's minimax theorem~\citep{sion1958general} guarantees that 
\begin{equation*}
    \min_{(\psi, \mu) \in \Spsi\times \Smu}
    \max_{\varphi \in \Svar} 
    \mathcal{K}(\psi, \mu, \varphi)
    =
    \max_{\varphi \in \Svar}\min_{(\psi, \mu) \in \Spsi\times \Smu}
    \mathcal{K}(\psi, \mu, \varphi).
\end{equation*}
Moreover, there exists at least one minimax solution $(\psi_\ast, \mu_\ast, \varphi_\ast)$ in our minimax problem (\ref{Minimax-K-eq}),
\begin{align}
    \begin{split}
    &\mathcal{K}(\psi_{\ast}, \mu_{\ast}, \varphi_{\ast}) 
    \geq \mathcal{K}(\psi_{\ast}, \mu_{\ast}, \varphi), 
    \quad
    \varphi \in \Svar,\\
    &\mathcal{K}(\psi_{\ast}, \mu_{\ast}, \varphi_{\ast}) 
    \leq \mathcal{K}(\psi, \mu, \varphi_{\ast}), 
    \quad
    (\psi, \mu) \in \Spsi \times \Smu.
    \label{eq:nash-minimax}
    \end{split}
\end{align}

Note that this assumption is in line with practical settings.
Indeed, as shown in Section~\ref{subsec:joint_convexity}, 
the source risk $R$ of UDAs can be joint convex by adding both reproducing kernel Hilbert space (RKHS) \citep{alvarez2012kernels} and maximal mean discrepancy (MMD) \citep{gretton2012kernel} constraints.

Next, we put the assumptions related to the G\^{a}teaux differentials. 
Let $\left\| \cdot \right\|_{\normpsi}$ and $\left\| \cdot \right\|_{\normvar}$ be norms induced by inner products in $\cC(X)$, and let $\left\| \cdot \right\|_{\normmu}$ be a norm induced by an inner product in $\cM(X)$. Note that both the first variable $\psi$ and the third variable $\varphi$ in $\mathcal{K}$ are continuous functions, but the inner product space $(\mathcal{C}(X), \left\| \cdot \right\|_{\normpsi})$ for $\psi$ is different from the inner product space $(\mathcal{C}(X), \left\| \cdot \right\|_{\normvar})$ for $\varphi$.


\begin{assumption}\label{exist-argmax-DA-minimax}
We assume as follows:
\begin{itemize}
\item[(i)] 
For each $\psi \in \Spsi$, $\mu \in \Smu$, and $\varphi \in \Svar$, there exist the following arguments of the maximum:
\[
N_{\psi, \mu, \varphi}
= \underset{\nu \in \mathcal{M}(X)}{\mathrm{argmax}}
\left\{ \int \psi d\nu - \mathcal{K}(\cdot, \mu, \varphi)^{\star}(\nu)\right\},
\]
\[
\varPhi_{\psi, \mu, \varphi}
= 
\underset{\phi \in \mathcal{C}(X)}{\mathrm{argmax}}
\left\{ \int \phi d\mu - \mathcal{K}(\psi, \cdot, \varphi)^{\star}(\phi)\right\},
\]
\[
\Lambda_{\psi, \mu, \varphi}
= 
\underset{\lambda \in \mathcal{M}(X)}{\mathrm{argmax}}
\left\{ \int \varphi d\lambda - \mathcal{K}(\psi, \mu, \cdot)^{\star}(\lambda)\right\}.
\]
\item[(ii)]
$N_{\psi, \mu, \varphi}$, $\Phi_{\psi, \mu, \varphi}$, and $\Lambda_{\psi, \mu, \varphi}$ are bounded with respect to 
dual norms $\left\| \cdot \right\|_{\normpsi}^{\star}$, $\left\| \cdot \right\|_{\normmu}^{\star}$, and $\left\| \cdot \right\|_{\normvar}^{\star}$ , that is, there exists $B>0$ such that, for $(\psi, \mu, \varphi) \in \Spsi\times \Smu \times \Svar$,
\begin{equation}
\left\| N_{\psi, \mu, \varphi} \right\|_{\normpsi}^{\star} \leq  B,
\quad
\left\| \Phi_{\psi, \mu, \varphi} \right\|_{\normmu}^{\star}\leq  B,
\quad
\left\| \Lambda_{\psi, \mu, \varphi} \right\|_{\normvar}^{\star}\leq  B.
\label{boundness-DA}
\end{equation}
\end{itemize}
\end{assumption}
Here, $\mathcal{K}(\cdot, \mu, \varphi)^{\star}$, $\mathcal{K}(\psi, \cdot, \varphi)^{\star}$, and $\mathcal{K}(\psi, \mu, \cdot)^{\star}$ are convex conjugates of $\mathcal{K}(\cdot, \mu, \varphi)$, $\mathcal{K}(\psi, \cdot, \varphi)$, and $\mathcal{K}(\psi, \mu, \cdot)$, respectively.
The above assumption guarantees that the existence of G\^{a}teaux differentials of $\mathcal{K}$, and provide the form of their G\^{a}teaux differentials as following lemma. 
The proof is given by the similar arguments in \citet[Theorem 2]{chu2019probability}.
\begin{lemma}\label{Gatea-diff-form-DA}
Let Assumption~\ref{exist-argmax-DA-minimax} hold.
Then, for each $\psi \in \Spsi$, $\mu \in \Smu$, and $\varphi \in \Svar$, there exist G\^{a}teaux differentials $d\mathcal{K}(\cdot, \mu, \varphi)_{\psi}$, $d\mathcal{K}(\psi, \cdot, \varphi)_{\mu}$, and $d\mathcal{K}(\psi, \mu, \cdot)_{\varphi}$ of $\mathcal{K}(\cdot, \mu, \varphi)$, $\mathcal{K}(\psi, \cdot, \varphi)$, and $\mathcal{K}(\psi, \mu, \cdot)$ at $\psi \in \Spsi$, $\mu \in \Smu$, and $\varphi \in \Svar$, and they are expressed as follows:
\begin{align*}
    d\mathcal{K}(\cdot, \mu, \varphi)_{\psi}(\eta) 
    &= \int \eta d N_{\psi, \mu, \varphi},\\
    d\mathcal{K}(\psi, \cdot, \varphi)_{\mu}(\chi) &= \int \Phi_{\psi, \mu, \varphi} d \chi,\\
    d\mathcal{K}(\psi, \mu, \cdot)_{\varphi}(\phi) 
    &= \int \phi d\Lambda_{\psi, \mu, \varphi}.
\end{align*}
\end{lemma}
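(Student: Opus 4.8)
The plan is to establish the three formulas separately; by the evident symmetry of the statement it suffices to treat one variable, say the first, and I would carry out the argument for $\mathcal{K}(\cdot, \mu, \varphi)$ with $\mu \in \Smu$ and $\varphi \in \Svar$ held fixed, the cases of $\mu$ and $\varphi$ being identical after interchanging the roles of $\cC(X)$ and $\cM(X)$ (and, for the concave variable $\varphi$, after replacing $\mathcal{K}(\psi, \mu, \cdot)$ by $-\mathcal{K}(\psi, \mu, \cdot)$, which is convex by Assumption~\ref{convex-concave-K}(ii), so that $\Lambda_{\psi,\mu,\varphi}$ plays the role of a supergradient). Since $\mathcal{K}(\cdot, \mu, \varphi)$ is proper, lower semi-continuous, and convex by Assumption~\ref{convex-concave-K}(i), the Fenchel--Moreau theorem on the dual pair $\left<\cM(X), \cC(X)\right>$ yields the biconjugate representation
\begin{equation*}
\mathcal{K}(\psi, \mu, \varphi) = \mathcal{K}(\cdot, \mu, \varphi)^{\star\star}(\psi) = \sup_{\nu \in \cM(X)} \left\{ \int \psi \, d\nu - \mathcal{K}(\cdot, \mu, \varphi)^{\star}(\nu) \right\},
\end{equation*}
and Assumption~\ref{exist-argmax-DA-minimax}(i) guarantees that the supremum is attained at $N_{\psi, \mu, \varphi}$. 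Equivalently, the Fenchel--Young equality holds at $N_{\psi, \mu, \varphi}$, so this measure is a subgradient of $\mathcal{K}(\cdot, \mu, \varphi)$ at $\psi$.

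For the lower bound on the difference quotient I would insert the fixed maximizer $N_{\psi, \mu, \varphi}$ as a feasible point in the supremum at the perturbed argument $\psi + \epsilon \eta$, obtaining
\begin{equation*}
\mathcal{K}(\psi + \epsilon \eta, \mu, \varphi) \geq \int (\psi + \epsilon \eta) \, dN_{\psi, \mu, \varphi} - \mathcal{K}(\cdot, \mu, \varphi)^{\star}(N_{\psi, \mu, \varphi}) = \mathcal{K}(\psi, \mu, \varphi) + \epsilon \int \eta \, dN_{\psi, \mu, \varphi}.
\end{equation*}
Dividing by $\epsilon > 0$ shows that every difference quotient is at least $\int \eta \, dN_{\psi, \mu, \varphi}$. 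Because $\mathcal{K}(\cdot, \mu, \varphi)$ is convex, its difference quotients are monotone nondecreasing in $\epsilon$, so the one-sided limit defining $d\mathcal{K}(\cdot, \mu, \varphi)_{\psi}(\eta)$ exists and dominates $\int \eta \, dN_{\psi, \mu, \varphi}$.

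For the matching upper bound I would instead use the maximizer $N_{\psi + \epsilon \eta, \mu, \varphi}$ at the perturbed point together with its suboptimality at $\psi$, which gives $\mathcal{K}(\psi + \epsilon \eta, \mu, \varphi) - \mathcal{K}(\psi, \mu, \varphi) \leq \epsilon \int \eta \, dN_{\psi + \epsilon \eta, \mu, \varphi}$, so that the difference quotient is bounded above by $\int \eta \, dN_{\psi + \epsilon \eta, \mu, \varphi}$. It then remains to pass to the limit $\epsilon \to +0$ on the right-hand side, and this is where Assumption~\ref{exist-argmax-DA-minimax}(ii) enters: the family $\{ N_{\psi + \epsilon \eta, \mu, \varphi} \}$ is bounded in the dual norm, hence weak-$\star$ relatively compact by Banach--Alaoglu (using that norm-bounded sets of $\cM(X)$ are weak-$\star$ compact, as recalled for $\cP(X)$ in Section~\ref{Notation}). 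Along a weak-$\star$ convergent subnet with limit $N_\infty$, I would argue from lower semi-continuity of $\mathcal{K}(\cdot, \mu, \varphi)^{\star}$ and continuity of $\epsilon \mapsto \mathcal{K}(\psi + \epsilon \eta, \mu, \varphi)$ that $N_\infty$ is again a maximizer at $\psi$; since $\eta \in \cC(X)$, weak-$\star$ convergence forces $\int \eta \, dN_{\psi + \epsilon \eta, \mu, \varphi} \to \int \eta \, dN_\infty$. Combining with the lower bound applied to the maximizer $N_\infty$ then squeezes the limit to $\int \eta \, dN_{\psi, \mu, \varphi}$.

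The main obstacle is precisely this last passage, namely the weak-$\star$ stability of the argmax map $\epsilon \mapsto N_{\psi + \epsilon \eta, \mu, \varphi}$ and the identification of its subnet limits as maximizers at $\psi$; the boundedness hypothesis in Assumption~\ref{exist-argmax-DA-minimax}(ii) is exactly what makes the compactness step available. A secondary point worth noting is that the formula expresses $d\mathcal{K}(\cdot, \mu, \varphi)_{\psi}(\eta)$ as a functional that is \emph{linear} in the direction $\eta$, so that genuine G\^{a}teaux differentiability (rather than mere existence of a one-sided directional derivative, which in general is the support function of the subdifferential) amounts to the maximizer being the unique element of the argmax; this is consistent with the single-valued notation $N_{\psi,\mu,\varphi} = \mathrm{argmax}\{\cdots\}$ adopted in Assumption~\ref{exist-argmax-DA-minimax}(i). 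The two remaining formulas for $\mu$ and $\varphi$ follow verbatim after the substitutions indicated in the first paragraph, along the same lines as \citet[Theorem 2]{chu2019probability}.
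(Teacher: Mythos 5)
Your argument is sound and reaches the stated formulas, but it takes a genuinely different route from the paper. The paper's proof is, in effect, a citation: it follows \citet[Theorem 2]{chu2019probability}, which applies the Milgrom--Segal envelope theorem \citep{milgrom2002envelope} to the value function $\epsilon \mapsto \sup_{\nu}\{\int(\psi+\epsilon\eta)\,d\nu - \mathcal{K}(\cdot,\mu,\varphi)^{\star}(\nu)\}$, obtains an integral representation of the increment with integrand $\int \eta\, dN_{\psi+t\eta,\mu,\varphi}$, differentiates, and then \emph{asserts} the continuity $\int\eta\, dN_{\psi+\epsilon\eta,\mu,\varphi}\to\int\eta\, dN_{\psi,\mu,\varphi}$. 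You instead run the classical Danskin squeeze: the Fenchel--Young lower bound from the fixed maximizer, the upper bound from the perturbed maximizer, and then a proof (rather than an assertion) of maximizer stability via Banach--Alaoglu, weak-$\star$ lower semicontinuity of the conjugate, and single-valuedness of the argmax. Your version is self-contained and makes explicit exactly where Assumption~\ref{exist-argmax-DA-minimax}(ii) and uniqueness of the maximizer enter; the paper's version is shorter but hides precisely the limit passage you identify as the main obstacle.

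Three caveats, all shared with the paper's own route, are worth stating if you write this up. First, Fenchel--Moreau requires Assumption~\ref{convex-concave-K}, which the lemma's hypothesis list omits; you correctly invoke it. Second, both proofs evaluate the argmax at perturbed points $\psi+\epsilon\eta$, while Assumption~\ref{exist-argmax-DA-minimax} only guarantees existence and the bound $B$ on $\Spsi\times\Smu\times\Svar$; the formula should therefore be read for directions $\eta$ with $\psi+\epsilon\eta\in\Spsi$ for small $\epsilon$, which suffices for Definition~\ref{mirror-descent} and Theorem~\ref{main-theorem-convergence}. Third, for the concave variable your reduction to $-\mathcal{K}(\psi,\mu,\cdot)$ is the right idea, but note that Assumption~\ref{exist-argmax-DA-minimax}(i) defines $\Lambda_{\psi,\mu,\varphi}$ through the \emph{convex} conjugate of the concave functional $\mathcal{K}(\psi,\mu,\cdot)$, to which Fenchel--Moreau does not literally apply; the argument needs the concave-conjugate (supergradient) convention, with the attendant sign adjustment to $\Lambda_{\psi,\mu,\varphi}$ made explicit.
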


In addition to these assumptions, we assume the $L$-smoothness of $\mathcal{K}$ for each variable to show the convergence to a minimax solution.
\begin{assumption}\label{L-smooth-relative}
Let $L>0$, and let $\xi : \mathcal{C}(X) \to \overline{\mathbb{R}}$, $\phi : \mathcal{M}(X) \to \overline{\mathbb{R}}$, and $\eta : \mathcal{C}(X) \to \overline{\mathbb{R}}$ be proper, lower semi-continuous, and convex. Then, we assume as follows:
\begin{itemize}
\item[(i)] For each $\mu \in \Smu$ and $\varphi \in \Svar$, $\mathcal{K}(\cdot, \mu, \varphi)$ is $L$-smooth with respect to $\left\| \cdot \right\|_{\normpsi}$ over $\Spsi$.
\item[(ii)] For each $\psi \in \Spsi$ and $\varphi \in \Svar$, $\mathcal{K}(\psi, \cdot, \varphi)$ is $L$-smooth with respect to $\left\| \cdot \right\|_{\normmu}$ over $\Smu$.
\item[(iii)] For each $\psi \in \Spsi$ and $\mu \in \Smu$, $-\mathcal{K}(\psi, \mu, \cdot)$ is $L$-smooth with respect to $\left\| \cdot \right\|_{\normvar}$ over $\Svar$.
\end{itemize}
\end{assumption}
This assumption also aligns with practical settings, 
e.g., (iii) corresponds to the case where the $f$-divergence~\citep{ali1966general, csiszar1967information} 
or integral probability metric (IPM)~\citep{muller1997integral} is utilized as a discrepancy measure.

Here, we define the gradient descent for solving minimax optimization problem (\ref{Minimax-K-eq}).
\begin{definition}\label{mirror-descent}
Let $\psi_{0} \in \Spsi$, $\mu_{0} \in \Smu$ $\varphi_{0} \in \Svar$ be initial guesses.
We define the gradient descent $\{(\psi_{n}, \mu_{n}, \varphi_{n})\}_{n\in \mathbb{N}_{0}} \subset \Spsi \times \Smu \times \Svar$ by
\begin{align*}
&\psi_{n+1}=\underset{\psi \in \Spsi}{\mathrm{argmin}}
\left\{
d\mathcal{K}( \cdot, \mu_{n}, \varphi_{n})_{\psi_n}(\psi-\psi_n) 
+ 
\frac{1}{2 \alpha_{n}}\left\|\psi-\psi_n\right\|^{2}_{\normpsi}
\right\},\\
&\mu_{n+1}=\underset{\mu \in \Smu}{\mathrm{argmin}}
\left\{
d\mathcal{K}(\psi_{n}, \cdot, \varphi_{n})_{\mu_n}(\mu-\mu_n) + \frac{1}{2 \alpha_{n}}\left\|\mu-\mu_n\right\|^{2}_{\normmu}
\right\},\\
&
\varphi_{n+1}=\underset{\varphi \in \Svar}{\mathrm{argmax}}
\left\{
d\mathcal{K}(\psi_{n}, \mu_{n}, \cdot)_{\varphi_n}(\varphi-\varphi_n)-\frac{1}{2 \alpha_{n}}\left\|\varphi-\varphi_n\right\|^{2}_{\normvar} \right\},\\
\end{align*}
where $\alpha_{n}>0$ is the step size of the update rule.
\end{definition}
If subsets $\Spsi$, $\Smu$, and $\Svar$ are subspaces, then update can be expressed as a sum of a previous step and a gradient term, a form that is commonly encountered in the gradient descent algorithm (see e.g., \citet{chong2023introduction}).  
However, in the general case of subsets $\Spsi$, $\Smu$, and $\Svar$, the argmin and argmax in Definition~\ref{mirror-descent} may not exist.
Therefore, in this paper, the following assumption is established to ensure the existence of the gradient descent of Definition~\ref{mirror-descent}. 

\begin{assumption}\label{existence-mirror-descent}
Assume that there exists a sequence $\{(\psi_{n}, \mu_{n},\varphi_{n})\}_{n \in \mathbb{N}_{0}} \subset \Spsi \times \Smu \times \Svar$ defined in Definition~\ref{mirror-descent}.
\end{assumption}
%

Building upon the background established above, we are ready to present our main theorem of this section:
\begin{theorem}\label{main-theorem-convergence}
Let Assumptions~\ref{convex-concave-K}, \ref{exist-argmax-DA-minimax}, \ref{L-smooth-relative}, and \ref{existence-mirror-descent} hold, and let $0<\alpha_{n} \leq 1/L$.
Let $\{(\psi_{n}, \mu_{n}, \varphi_{n})\}_{n\in \mathbb{N}_{0}} \subset \Spsi \times \Smu \times \Svar$ be the gradient descent defined in Definition~\ref{mirror-descent}.
Let $(\psi_{\ast}, \mu_{\ast}, \varphi_{\ast})$ be a minimax solution for (\ref{Minimax-K-eq}).
Then, for any $N \in \mathbb{N}$, we have
\begin{equation}
\begin{split}
&
\left|
\mathcal{K}(\widehat{\psi}_{N}, \widehat{\mu}_{N}, \widehat{\varphi}_{N})
- \mathcal{K}(\psi_{\ast}, \mu_{\ast}, \varphi_{\ast})
\right|
\leq 
\left( 
\sum_{n=0}^{N-1}\alpha_{n} \right)^{-1}
\left( \frac{1}{2} C_{s}
+
6B^2
\sum_{n=0}^{N-1}
\alpha_{n}^{2}
\right),
\end{split}
\label{main-theorem-convergence-eq}
\end{equation}
where 
\begin{equation}
C_{s}:=
\sup_{\psi \in \Spsi}\|\psi-\psi_{0}\|_{\normpsi}^2
+
\sup_{\mu \in \Smu}\|\mu-\mu_0\|_{\normmu}^2 
+\sup_{\varphi \in \Svar} \|\varphi-\varphi_0\|_{\normvar}^2
.
\label{const-C}
\end{equation}
Here, $\widehat{\psi}_{N}$, $\widehat{\mu}_{N}$, and $\widehat{\varphi}_{N}$ are weighted averages given by
\begin{equation}
\widehat{\psi}_{N} := \frac{\sum_{n=0}^{N-1}\alpha_{n}\psi_{n}}{\sum_{n=0}^{N-1}\alpha_{n}}, \quad
\widehat{\mu}_{N} := \frac{\sum_{n=0}^{N-1}\alpha_{n}\mu_{n}}{\sum_{n=0}^{N-1}\alpha_{n}} , \quad
\widehat{\varphi}_{N} := \frac{\sum_{n=0}^{N-1}\alpha_{n}\varphi_{n}}{\sum_{n=0}^{N-1}\alpha_{n}}.
\label{averages}
\end{equation}

\end{theorem}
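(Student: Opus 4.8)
The plan is to bound the two-sided quantity on the left of \eqref{main-theorem-convergence-eq} by the one-sided \emph{primal--dual gap} at the averaged iterate, and then to control that gap by a telescoping argument driven by the proximal optimality conditions of Definition~\ref{mirror-descent}. First I would observe that, since $\Spsi\times\Smu$ and $\Svar$ are convex, both the saddle value $\mathcal{K}(\psi_\ast,\mu_\ast,\varphi_\ast)$ and the averaged value $\mathcal{K}(\widehat\psi_N,\widehat\mu_N,\widehat\varphi_N)$ lie in the interval $\left[\min_{(\psi,\mu)}\mathcal{K}(\psi,\mu,\widehat\varphi_N),\ \max_{\varphi}\mathcal{K}(\widehat\psi_N,\widehat\mu_N,\varphi)\right]$. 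Indeed, the lower endpoint satisfies $\min_{(\psi,\mu)}\mathcal{K}(\psi,\mu,\widehat\varphi_N)\le\mathcal{K}(\psi_\ast,\mu_\ast,\widehat\varphi_N)\le\mathcal{K}(\psi_\ast,\mu_\ast,\varphi_\ast)$ by the first inequality of \eqref{eq:nash-minimax}, symmetrically the saddle value is at most the upper endpoint, and $\mathcal{K}(\widehat\psi_N,\widehat\mu_N,\widehat\varphi_N)$ lies between the endpoints by definition of $\min$ and $\max$. Hence the left side of \eqref{main-theorem-convergence-eq} is dominated by $\sup_{(\psi,\mu,\varphi)}\bigl[\mathcal{K}(\widehat\psi_N,\widehat\mu_N,\varphi)-\mathcal{K}(\psi,\mu,\widehat\varphi_N)\bigr]$, which is what I would bound.

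Next I would linearize. Fixing $(\psi,\mu,\varphi)$ and applying Assumption~\ref{convex-concave-K} with Jensen's inequality to the weighted averages \eqref{averages} gives
\[
\mathcal{K}(\widehat\psi_N,\widehat\mu_N,\varphi)-\mathcal{K}(\psi,\mu,\widehat\varphi_N)\le \Bigl(\textstyle\sum_{n}\alpha_n\Bigr)^{-1}\sum_{n=0}^{N-1}\alpha_n\bigl[\mathcal{K}(\psi_n,\mu_n,\varphi)-\mathcal{K}(\psi,\mu,\varphi_n)\bigr].
\]
For each $n$, convexity in $(\psi,\mu)$ and concavity in $\varphi$ reduce the summand to the three first-order (G\^ateaux) terms $d\mathcal{K}(\cdot,\mu_n,\varphi_n)_{\psi_n}(\psi_n-\psi)$, $d\mathcal{K}(\psi_n,\cdot,\varphi_n)_{\mu_n}(\mu_n-\mu)$, and $d\mathcal{K}(\psi_n,\mu_n,\cdot)_{\varphi_n}(\varphi-\varphi_n)$, all evaluated at the current iterate $(\psi_n,\mu_n,\varphi_n)$. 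Crucially, these are exactly the linear functionals that define the three updates, so they can be processed uniformly.

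Then, for each player, I would invoke the first-order optimality (variational inequality) of the proximal step over its convex constraint set. For the $\psi$-block this reads $d\mathcal{K}(\cdot,\mu_n,\varphi_n)_{\psi_n}(\psi-\psi_{n+1})+\alpha_n^{-1}\langle\psi_{n+1}-\psi_n,\psi-\psi_{n+1}\rangle_{\normpsi}\ge 0$ for all $\psi\in\Spsi$. Splitting $\psi_n-\psi=(\psi_n-\psi_{n+1})+(\psi_{n+1}-\psi)$, the second piece is controlled by this inequality together with the parallelogram (three-point) identity for the inner-product norm $\left\|\cdot\right\|_{\normpsi}$, producing the telescoping difference $\tfrac12\bigl(\left\|\psi-\psi_n\right\|^2_{\normpsi}-\left\|\psi-\psi_{n+1}\right\|^2_{\normpsi}\bigr)$; the first piece, by Lemma~\ref{Gatea-diff-form-DA}, equals $\int(\psi_n-\psi_{n+1})\,dN_{\psi_n,\mu_n,\varphi_n}$ and is bounded by $B\left\|\psi_n-\psi_{n+1}\right\|_{\normpsi}$ via the dual-norm inequality and the bound $B$ of Assumption~\ref{exist-argmax-DA-minimax}. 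The residual quadratic step term $\left\|\psi_{n+1}-\psi_n\right\|^2_{\normpsi}$ left over by the three-point identity is absorbed using the $L$-smoothness of Assumption~\ref{L-smooth-relative}: inserting the descent inequality of Definition~\ref{Relative smoothness for measure and continuous function spaces} and using $\alpha_n\le 1/L$ to dominate $\tfrac{L}{2}\left\|\cdot\right\|^2$ by the proximal $\tfrac{1}{2\alpha_n}\left\|\cdot\right\|^2$ leaves a residual of order $\alpha_n^2B^2$. Summing over $n$ telescopes the divergence terms, and taking the supremum over $(\psi,\mu,\varphi)$ yields $\tfrac12 C_s$ of \eqref{const-C}; the $\mu$- and $\varphi$-blocks are identical up to sign (the $\varphi$-update is an ascent step). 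Collecting the three blocks and accounting for the numerical factors of $2$ produces the stated constant $6B^2\sum_n\alpha_n^2$.

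The main obstacle I anticipate is the per-step inequality in the infinite-dimensional, constrained setting: one must justify the first-order optimality of the argmin/argmax over the subsets $\Spsi,\Smu,\Svar$ (whose existence is Assumption~\ref{existence-mirror-descent}) as a variational inequality, apply the parallelogram identity—valid precisely because $\left\|\cdot\right\|_{\normpsi},\left\|\cdot\right\|_{\normmu},\left\|\cdot\right\|_{\normvar}$ are induced by inner products—and then combine the three coupled blocks while cleanly absorbing the quadratic step terms through $L$-smoothness and $\alpha_n\le1/L$. A secondary subtlety, already resolved above, is that the averaged iterates control only the primal--dual gap; the two-sided absolute-value bound on the objective is recovered by sandwiching both $\mathcal{K}(\widehat\psi_N,\widehat\mu_N,\widehat\varphi_N)$ and the saddle value between the same $\min$/$\max$ endpoints.
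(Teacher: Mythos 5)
Your proposal is correct and takes essentially the same route as the paper's proof: Jensen's inequality on the weighted averages, linearization via convexity/concavity, the three-point inequality (your variational-inequality-plus-parallelogram argument is exactly that lemma) to produce the telescoping terms, the dual-norm bound $B$ on the representations from Lemma~\ref{Gatea-diff-form-DA}, and a two-sided sandwich between $\min_{(\psi,\mu)}\mathcal{K}(\psi,\mu,\widehat{\varphi}_N)$ and $\max_{\varphi}\mathcal{K}(\widehat{\psi}_N,\widehat{\mu}_N,\varphi)$, which is what the paper achieves by substituting both $(\widehat{\psi}_N,\widehat{\mu}_N,\widehat{\varphi}_N)$ and $(\psi_\ast,\mu_\ast,\varphi_\ast)$ into its summed inequalities. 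The one step you leave implicit is how the residual $\alpha_n B\|\psi_{n+1}-\psi_n\|_{\normpsi}$ (and its analogues) becomes $O(B^2\alpha_n^2)$: the paper closes this by comparing the proximal objective at $\psi_{n+1}$ with its value at $\psi=\psi_n$ to obtain $\|\psi_{n+1}-\psi_n\|_{\normpsi}\le 2B\alpha_n$, which is also what your bookkeeping of the factors of $2$ toward $6B^2$ presupposes (alternatively, Young's inequality against the retained quadratic term from the three-point inequality yields it directly).
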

\begin{proof}
See Appendix~\ref{Appendix1}.
\end{proof}
We 
note that the the constant $C_{s}$ 
is finite due to the compactness of $\Smu$, $\Spsi$, and $\Svar$.
We observe that the upper bounds (\ref{main-theorem-convergence-eq}) with different choices of step sizes $\alpha_n \in (0,1]$ are as follows:
%
\begin{itemize}
\item
If the step sizes are constant, denoted by $\alpha_n=\alpha$, then the right-hand side (RHS) of  (\ref{main-theorem-convergence-eq}) is expressed as
\[
\text{RHS of (\ref{main-theorem-convergence-eq})}
= \frac{C_{s}}{2 \alpha N}
+
6B^2\alpha,
\]
which does not converges to zero as $N\to \infty$.
Therefore, in this case, weighted averages~(\ref{averages}) provides an approximate solution to the minimax problem.
The first term converges to zero as $N\to \infty$ with an order of $\mathcal{O}(1/N)$.
The second term can be reduced as $\alpha \to 0$, despite the first term diverging.
This is a trade-off relationship with respect to the step size $\alpha$. 
A similar observation was made in \citet[Proposition 3.1]{nedic2009subgradient}, which studied the minimax problem in finite dimensional space using subgradient methods.
\item 
If step sizes decay as $\alpha_n= \alpha/\sqrt{n}$ where $\alpha$ is a constant, then the right-hand side of  (\ref{main-theorem-convergence-eq}) is expressed as
\[
\text{RHS of (\ref{main-theorem-convergence-eq})}
=
\frac{C_{s}}{2\alpha \sqrt{N}}
+
\frac{6B^2\alpha}{\sqrt{N}}
(1+\log N)
,
\]
which converges to zero as $N\to \infty$ with an order of $\mathcal{O}(\log N/\sqrt{N})$. Therefore, in this case, weighted averages~(\ref{averages}) provide an exact solution to the minimax problem.

\end{itemize}

\subsection{Nonconvex-concave setting}\label{Minimax analysis-nonconvex}
Unlike the previous section, which considered the convex-concave minimax problem expressed in~\eqref{Minimax-K-eq}, this section considers the nonconvex-concave minimax problem. 

Let $Z \subset \mathbb{R}^{d^{\prime}}$ be a compact set, 
and
let $\cC(Z;X)$ be the set of all continuous functions $Z \to X$,
and
let $\Sf \subset \cC(Z;X)$ and $\Spsi, \Svar  \subset \mathcal{C}(X)$ be subspaces.
Then, we consider the following minimax problem: 
\begin{equation}
\min_{\psi \in \Spsic}
\min_{f \in \Sfc}
\max_{\varphi \in \Svarc} 
\mathcal{G}(\psi, f, \varphi),
\label{Minimax-G-eq}
\end{equation}
where $\Sfc \subset \Sf$, $\Spsic \subset \Spsi$, and $\Svarc  \subset \Svar$ are convex subsets, and $\mathcal{G}:\cC(X) \times \cC(Z;X) \times \mathcal{C}(X) \to \overline{\mathbb{R}}$ is supposed to be an objective function of GANs or UDAs.
The typical example of $\mathcal{G}$ is the objective function in (\ref{intro-minimax-eq-nonconvex}), that is,
\[
\mathcal{G}(\psi, f, \varphi)=R(\psi, f_{\sharp}\xi_0) + \int \varphi d(f_{\sharp}\xi_0) - J_{\nu_0}^{\star}(\varphi).
\]

The difference with Section~\ref{Convex-concave setting} is that~\eqref{Minimax-G-eq} does not assume the convexity for $\mathcal{G}(\cdot, f, \varphi)$ and $\mathcal{G}(\psi, \cdot, \varphi)$.
Thus, since there may not exist a Nash equilibrium point for problem (\ref{Minimax-G-eq}) in general, it is difficult to prove that the sequence obtained by some
gradient descent converges to the optimal minimax solution. 

We show that {\it the sequence obtained by a certain gradient descent converges to a stationary point of (\ref{Minimax-G-eq}) under appropriate assumptions.}

Throughout this section, let $\langle \cdot, \cdot \rangle_{\Spsi}$, $\langle \cdot, \cdot \rangle_{\Sf}$, and $\langle \cdot, \cdot \rangle_{\Svar}$ be inner products in in $\Spsi$, $\Sf$, and $\Svar$, respectively.
We denote $\|\cdot \|_{\Spsi}$, $\|\cdot \|_{\Sf}$, and $\|\cdot \|_{\Svar}$ by norms induced by thier inner products.
%

First, we put the following assumption for $\Spsi$, $\Sf$, and $\Svar$.
\begin{assumption}\label{closed-subspaces-nonconvex}
Assume that $\Spsi$, $\Sf$, and $\Svar$ are closed subspace with respect to norms $\|\cdot \|_{\Spsi}$, $\|\cdot \|_{\Sf}$, and $\|\cdot \|_{\Svar}$ in $\cC(X)$, $\cC(Z;X)$, and $\cC(X)$, respectively.
\end{assumption}
Under Assumption~\ref{closed-subspaces-nonconvex},  $\Spsi$, $\Sf$, and $\Svar$ are Hilbert spaces equipped with inner products $\langle \cdot, \cdot \rangle_{\Spsi}$, $\langle \cdot, \cdot \rangle_{\Sf}$, and $\langle \cdot, \cdot \rangle_{\Svar}$, respectively.

Next, we put the assumption about the $\beta$-strongly concave.
\begin{assumption}\label{strongly-concave-L-nonconvex}
Let $\beta >0$.
Assume that for each $\psi \in \Spsic$ and $f \in \Sfc$, $\mathcal{G}(\psi, f, \cdot)$ is $\beta$-strongly concave with respect to $\|\cdot \|_{\Svar}$ over $\Svarc$.
\end{assumption}
%
Note that this assumption is related to the gradient penalties~\citep{gulrajani2017improved}, widely-used as the stabilization techniques in adversarial training, as detailed in Section~\ref{sub:strongly-concave-L-nonconvex}.

Under this assumption, we can define for $\psi \in \Spsic$ and $f \in \Sfc$,
\begin{align}\label{def-argmax-function}
&
\Phi(\psi,f):=\underset{\varphi \in \Svarc}{\mathrm{argmax}}\ 
\mathcal{G}(\psi, f, \varphi),
\notag
\\
&
G(\psi,f)
:=\max_{\varphi \in \Svarc}\mathcal{G}(\psi, f, \varphi)
=\mathcal{G}(\psi, f, \Phi(f,\psi)).
\end{align} 
Hereby, the minimax problem (\ref{Minimax-G-eq}) is equivalent to minimization of (\ref{def-argmax-function}) under Assumption~\ref{strongly-concave-L-nonconvex}.

Then, we put the following assumption related to the G\^{a}teaux differentials, which is associated with the spectral normalization~\citep{miyato2018spectral} widely-used as stabilization techniques for GANs.
%
\begin{assumption}\label{existence-Gate-diff-nonconvex}
Assume that, for each $\psi \in \Spsic$, $f \in \Sfc$, and $\varphi \in \Svarc$, there exist G\^{a}teaux differentials $d\mathcal{G}(\cdot, f, \varphi)_{\psi}$, $d\mathcal{G}(\psi, \cdot, \varphi)_{f}$, and $d\mathcal{G}(\psi, f, \cdot)_{\varphi}$ of $\mathcal{G}(\cdot, f, \varphi)$, $\mathcal{G}(\psi, \cdot, \varphi)$, and $\mathcal{G}(\psi, f, \cdot)$ at $\psi \in \Spsic$, $f \in \Sfc$, and $\varphi \in \Svarc$, respectively.
\end{assumption}
Under Assumptions~\ref{closed-subspaces-nonconvex} and \ref{existence-Gate-diff-nonconvex}, G\^{a}teaux differentials $d\mathcal{G}(\cdot, f, \varphi)_{\psi}:\Spsi \to \overline{\mathbb{R}}$, $d\mathcal{G}(\psi, \cdot, \varphi)_{f}:\Smu \to \overline{\mathbb{R}}$, and $d\mathcal{G}(\psi, f, \cdot)_{\varphi}:\Svar \to \overline{\mathbb{R}}$ are identified with some elements in Hilbert spaces $\Spsi$, $\Sf$, and $\Svar$, referred as $\nabla \mathcal{G}(\cdot, f, \varphi)_{\psi} \in \Spsi$, $\nabla \mathcal{G}(\psi, \cdot, \varphi)_{f} \in \Sf$, and $\nabla \mathcal{G}(\psi, f, \cdot)_{\varphi} \in \Svar$, respectively.
Furthermore, by Riesz representation theorem, we have the following:
%
\begin{align*}
d\mathcal{G}(\cdot, f, \varphi)_{\psi} &= \langle \nabla \mathcal{G}(\cdot, f, \varphi)_{\psi}, \cdot \rangle_{\Spsi}, 
&
\|d\mathcal{G}(\cdot, f, \varphi)_{\psi} \|_{\Spsi}^{\star} &= \|\nabla \mathcal{G}(\cdot, f, \varphi)_{\psi} \|_{\Spsi},\\
d\mathcal{G}(\psi, \cdot, \varphi)_{f} &= \langle \nabla \mathcal{G}(\psi, \cdot, \varphi)_{f}, \cdot \rangle_{\Sf}, 
&
\|d\mathcal{G}(\psi, \cdot, \varphi)_{f} \|_{\Sf}^{\star} &= \|\nabla \mathcal{G}(\psi, \cdot, \varphi)_{f} \|_{\Sf},\\
d\mathcal{G}(\psi, f, \cdot)_{\varphi} &= \langle \nabla \mathcal{G}(\psi, f, \cdot)_{\varphi}, \cdot \rangle_{\Svar},
&
\|d\mathcal{G}(\psi, f, \cdot)_{\varphi} \|_{\Svar}^{\star} &= \|\nabla \mathcal{G}(\psi, f, \cdot)_{\varphi} \|_{\Svar}.
\end{align*}
In addition to these assumptions, we assume the $L$-smoothness of $\cG$ for each variable $\psi$, $f$, and $\varphi$ to show the convergence to 
a stationary point.
\begin{assumption}\label{L-smooth-nonconvex} 
Let $L>0$. Then, we assume the following:
for $\psi, \psi_1, \psi_2 \in \Spsi$, $f, f_1, f_2 \in \Sf$, and $\varphi, \varphi_1, \varphi_2 \in \Svar$,
\begin{equation}\label{L-smooth-eq-nonconvex}
\begin{split}
& 
(a):\ 
\| \nabla \cG (\cdot, f, \varphi)_{\psi_1} - \nabla \cG (\cdot, f, \varphi)_{\psi_2} \|_{\Spsi} \leq L \| \psi_1 - \psi_2 \|_{\Spsi},\\
&
(b):\ 
\| \nabla \cG (\cdot, f_1, \varphi)_{\psi} - \nabla \cG (\cdot, f_2, \varphi)_{\psi} \|_{\Spsi} \leq L \| f_1 - f_2 \|_{\Sf},\\
&
(c):\ 
\| \nabla \cG (\cdot, f, \varphi_1)_{\psi} - \nabla \cG (\cdot, f, \varphi_1)_{\psi} \|_{\Spsi} \leq L \| \varphi_1 - \varphi_2 \|_{\Svar},\\
&
(d):\ 
\| \nabla \cG (\psi, \cdot, \varphi)_{f_1} - \nabla \cG (\psi, \cdot, \varphi)_{f_2} \|_{\Sf} \leq L \| f_1 - f_2 \|_{\Sf},\\
&
(e):\ 
\| \nabla \cG (\psi_1, \cdot, \varphi)_{f} - \nabla \cG (\psi_2, \cdot, \varphi)_{f} \|_{\Sf} \leq L \| \psi_1 - \psi_2 \|_{\Spsi},\\
&
(f):\ 
\| \nabla \cG (\psi, \cdot, \varphi_1)_{f} - \nabla \cG (\psi, \cdot, \varphi_2)_{f} \|_{\Sf} \leq L \| \varphi_1 - \varphi_2 \|_{\Svar},\\
&
(g):\ 
\| \nabla \cG (\psi, f, \cdot)_{\varphi_1} - \nabla \cG (\psi, f, \cdot)_{\varphi_2} \|_{\Svar} \leq L \| \varphi_1 - \varphi_2 \|_{\Svar},\\
&
(h):\ 
\| \nabla \cG (\psi_1, f, \cdot)_{\varphi} - \nabla \cG (\psi_2, f, \cdot)_{\varphi} \|_{\Svar} \leq L \| \psi_1 - \psi_2 \|_{\Spsi},\\
&
(i):\ 
\| \nabla \cG (\psi, f_1, \cdot)_{\varphi} - \nabla \cG (\psi, f_2, \cdot)_{\varphi} \|_{\Svar} \leq L \| f_1 - f_2 \|_{\Sf}.
\end{split}
\end{equation}
\end{assumption}
%
Here, we define the projected gradient descent for solving minimax optimization problem (\ref{Minimax-G-eq}).
\begin{definition}\label{gradient-descent-nonconvex}
Let $\psi_{0} \in \Spsic$, $f_{0} \in \Sfc$ $\varphi_{0} \in \Svarc$ be initial guesses.
Then, we define the projected gradient descent $\{(\psi_{n}, f_{n}, \varphi_{n})\}_{n\in \mathbb{N}_{0}} \subset \Spsic \times \Sfc \times \Svarc$ by
\begin{align*}
&\widetilde{\psi}_{n+1}=\underset{\psi \in \Spsi}{\mathrm{argmin}}
\left\{
d\mathcal{G}( \cdot, f_{n}, \varphi_{n})_{\psi_n}(\psi-\psi_n) 
+ 
\frac{1}{2 \alpha_{\psi, n}}\left\|\psi-\psi_n\right\|^{2}_{\Spsi}
\right\},\\
&\psi_{n+1}= \cP_{\Spsic}(\widetilde{\psi}_{n+1}),\\
&\widetilde{f}_{n+1}=\underset{f \in \Sf}{\mathrm{argmin}}
\left\{
d\mathcal{G}(\psi_{n}, \cdot, \varphi_{n})_{f_n}(f-f_n) + \frac{1}{2 \alpha_{f,n}}\left\|\mu-\mu_n\right\|^{2}_{\Sf}
\right\},\\
&f_{n+1}= \cP_{\Sfc}(\widetilde{f}_{n+1}),\\
&
\widetilde{\varphi}_{n+1}=\underset{\varphi \in \Svar}{\mathrm{argmax}}
\left\{
d\mathcal{G}(\psi_{n}, f_{n}, \cdot)_{\varphi_n}(\varphi-\varphi_n)-\frac{1}{2 \alpha_{\varphi, n}}\left\|\varphi-\varphi_n\right\|^{2}_{\Svar} \right\},\\
&\varphi_{n+1}= \cP_{\Svarc}(\widetilde{\varphi}_{n+1})
\end{align*}
where $\cP_{\Spsic}$, $\cP_{\Sfc}$, and $\cP_{\Svarc}$ are projection operators on $\Spsic$, $\Sfc$, and $\Svarc$, and
$\alpha_{\psi, n}>0$, $\alpha_{f, n}>0$, and $\alpha_{\varphi, n}>0$ are step sizes.
\end{definition}
Remark that, using Riesz representation theorem under Assumption~\ref{closed-subspaces-nonconvex}, the above update rule is equivalent to the following:
\begin{align*}
&\psi_{n+1}=\cP_{\Spsic}\left( \psi_{n} - \alpha_{\psi, n} \nabla \mathcal{G}(\cdot, f_n, \varphi_n)_{\psi_n} \right),\\
&f_{n+1}=\cP_{\Sfc}\left( f_{n} - \alpha_{f, n} \nabla \mathcal{G}(\psi_n, \cdot, \varphi_n)_{f_n} \right),\\
&
\varphi_{n+1}=\cP_{\Svarc}\left( \varphi_{n} + \alpha_{\varphi, n} \nabla \mathcal{G}(\psi_n, f_{n}, \cdot)_{\varphi_n} \right).
\end{align*}
We also assume small step sizes to show the convergence of the gradient descent algorithm as follows: 

\begin{assumption}\label{step-size-nonconvex} 
Assume that there exists $C_0, C>0$ and $C_{\psi}, C_{f}, \gamma \in (0,1)$ such that for all $n \in \mathbb{N}_0$,
\begin{itemize}
    \item[(i)]
    $C_{0}< \alpha_{\varphi, n} < \min \left( \frac{1}{L}, \frac{1}{\beta} \right)$,
    \item[(ii)]
    $
    L 
    + L^2 L_{\beta}\alpha_{\psi,n}^2 
    + L^3 (1+L)(1+L_{\beta}) \alpha_{\psi,n}^2
    + L^2 L_{\beta}\alpha_{f,n}^2
    + \frac{L^2 (1+L_{\beta})\alpha_{f,n}^2}{2} 
    \leq C
    $,
    \item[(iii)]
    $
    (1-\beta^2 \alpha_{\varphi, n-1}^2) 
    + \frac{2 L^2}{\beta^2} \left(1 + \frac{1}{\beta C_0} \right)(\alpha_{\psi,n-1}^2+\alpha_{f,n-1}^2)
    \leq \gamma
    $,
    \item[(iv)]
    $
    C_{\psi}
    \leq
    1 
    - \frac{L\alpha_{\psi,n}}{2} 
    - L (1+L)(1+L_{\beta})\alpha_{\psi,n} 
    - L_{\beta} \alpha_{\psi,n} 
    - \frac{2 L^2 C \alpha_{\psi,n}}{\beta^2(1-\gamma)} \left(1 + \frac{1}{\beta C_0} \right)
    $,
    \item[(v)]
    $
     C_{f}
    \leq
    1 
    - \frac{L\alpha_{f,n}}{2}
    - \frac{L (1+L_{\beta})\alpha_{f,n}}{2} 
    - L_{\beta}\alpha_{f,n}
    - \frac{2 L^2 C \alpha_{f,n}}{\beta^2(1-\gamma)} \left(1 + \frac{1}{\beta C_0} \right)
    $,
\end{itemize}
where we denote by $L_{\beta}:=L\left( \frac{L}{\beta} + 1 \right)$.
\end{assumption}
These assumptions imposes small step sizes $\alpha_{\psi, n}, \alpha_{f, n}, \alpha_{\varphi, n}$, depending on constants $L$ and $\beta$. 
Similar assumptions are often made in the context of nonconvex-concave minimax problems, as observed in works such as \citet{huang2021efficient, lin2020gradient}.

%
Building upon the background established above, we are ready to present our main theorem of this section:
\begin{theorem}\label{main-theorem-convergence-nonconvex}
Let Assumptions~\ref{closed-subspaces-nonconvex}, \ref{strongly-concave-L-nonconvex}, \ref{existence-Gate-diff-nonconvex}, \ref{L-smooth-nonconvex}, and \ref{step-size-nonconvex} hold.
Let $\{(\psi_{n}, f_{n}, \varphi_{n})\}_{n\in \mathbb{N}_{0}} \subset \Spsic \times \Sfc \times \Svarc$ be the projected gradient descent defined in Definition~\ref{gradient-descent-nonconvex}.
Then, for $N \in \mathbb{N}$, we have
\begin{align}
    &\label{main-nonconvex-eq-1}
    \left\| \widehat{\nabla G}_{\psi, N} \right\|_{\Spsi}
    \leq 
    \widehat{C}
    \left(\sum_{n=0}^{N-1}\alpha_{\psi,n}\right)^{-1/2},
    \\
    &\label{main-nonconvex-eq-2}
    \left\| \widehat{\nabla G}_{f, N} \right\|_{\Sf}
    \leq 
    \widehat{C}
    \left(\sum_{n=0}^{N-1}\alpha_{f,n}\right)^{-1/2},
\end{align}
where
\[
\widehat{C}=
\left( 
G(\psi_{0}, f_{0}) - \inf_{(\psi, f) \in \Spsic \times \Sfc} G(\psi, f)
+\frac{C \| \Phi(\psi_0, f_0) -\varphi_0 \|_{\Spsi}^2}{1-\gamma}
\right)^{1/2}.
\]
Here, $\widehat{\nabla G}_{\psi, N}$ and $\widehat{\nabla G}_{f, N}$ are weighted averages given by
\begin{equation}
\widehat{\nabla G}_{\psi, N} := \frac{\sum_{n=0}^{N-1}\alpha_{\psi, n}\nabla G (\cdot, f_n)_{\psi_n}}{\sum_{n=0}^{N-1}\alpha_{\psi, n}}, 
\quad
\widehat{\nabla G}_{f,N} := \frac{\sum_{n=0}^{N-1}\alpha_{f, n}\nabla G(\psi_n, \cdot)_{f_n} }{\sum_{n=0}^{N-1}\alpha_{f,n}}.
\end{equation}
\end{theorem}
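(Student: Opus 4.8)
The plan is to reduce the minimax problem to the minimization of the primal function $G(\psi, f) = \max_{\varphi \in \Svarc} \mathcal{G}(\psi, f, \varphi)$ from (\ref{def-argmax-function}) and then run a descent-plus-error-tracking argument inside the Hilbert spaces guaranteed by Assumption~\ref{closed-subspaces-nonconvex}. The first ingredient I would establish is a Danskin-type formula: since $\mathcal{G}(\psi, f, \cdot)$ is $\beta$-strongly concave (Assumption~\ref{strongly-concave-L-nonconvex}) the maximizer $\Phi(\psi, f)$ is unique, and the Riesz gradients of $G$ satisfy $\nabla G(\cdot, f)_{\psi} = \nabla \mathcal{G}(\cdot, f, \Phi(\psi, f))_{\psi}$ and $\nabla G(\psi, \cdot)_{f} = \nabla \mathcal{G}(\psi, \cdot, \Phi(\psi, f))_{f}$. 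The second ingredient is a Lipschitz bound on the optimal response: from the first-order optimality inequality for $\Phi$ on the convex set $\Svarc$, strong concavity, and the smoothness bounds (g)--(i) of Assumption~\ref{L-smooth-nonconvex}, I would show $\|\Phi(\psi_1, f_1) - \Phi(\psi_2, f_2)\|_{\Svar} \leq \frac{L}{\beta}\bigl(\|\psi_1 - \psi_2\|_{\Spsi} + \|f_1 - f_2\|_{\Sf}\bigr)$. Combining this with a triangle-split of the gradient and conditions (a), (b), (d), (e) then gives that $G$ is $L_\beta$-smooth, which is exactly why the constant $L_\beta = L(L/\beta + 1)$ appears in Assumption~\ref{step-size-nonconvex}.

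With the smoothness of $G$ in hand, I would derive two coupled recursions. The descent inequality for $G$ follows from $L_\beta$-smoothness along the projected-gradient steps; but because the steps use $\nabla \mathcal{G}(\cdot, f_n, \varphi_n)_{\psi_n}$ rather than the true gradient $\nabla G(\cdot, f_n)_{\psi_n}$, the inexactness $\|\nabla \mathcal{G}(\cdot, f_n, \varphi_n)_{\psi_n} - \nabla G(\cdot, f_n)_{\psi_n}\|_{\Spsi} \leq L\|\varphi_n - \Phi(\psi_n, f_n)\|_{\Svar}$ from condition (c) injects an error term proportional to $\delta_n := \|\varphi_n - \Phi(\psi_n, f_n)\|_{\Svar}^2$. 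Simultaneously I would track $\delta_n$ itself: the $\varphi$-step is projected gradient ascent on a $\beta$-strongly concave, $L$-smooth objective, so after invoking $\alpha_{\varphi, n} < \min(1/L, 1/\beta)$ from Assumption~\ref{step-size-nonconvex}(i) the quantity $\|\varphi_{n+1} - \Phi(\psi_n, f_n)\|_{\Svar}^2$ contracts by the factor $(1 - \beta^2 \alpha_{\varphi,n}^2)$, while the drift of the moving target $\|\Phi(\psi_{n+1}, f_{n+1}) - \Phi(\psi_n, f_n)\|_{\Svar}$ contributes terms of order $\alpha_{\psi,n}^2 + \alpha_{f,n}^2$ through the Lipschitz bound on $\Phi$ and the nonexpansiveness of the projections $\cP_{\Spsic}, \cP_{\Sfc}$. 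This produces precisely the recursion whose contraction factor is the $\gamma$ of Assumption~\ref{step-size-nonconvex}(iii).

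I would then introduce the Lyapunov function $V_n = G(\psi_n, f_n) + \frac{C}{1-\gamma}\delta_n$ and, using both recursions together with conditions (ii), (iv), (v) of Assumption~\ref{step-size-nonconvex}, establish a clean one-step decrease of the form $V_{n+1} \leq V_n - C_\psi \alpha_{\psi,n}\|\nabla G(\cdot, f_n)_{\psi_n}\|_{\Spsi}^2 - C_f \alpha_{f,n}\|\nabla G(\psi_n, \cdot)_{f_n}\|_{\Sf}^2$. Conditions (ii), (iv), (v) are exactly what is needed so that, once all the $\delta_n$-error and target-drift contributions are absorbed, the net coefficients of the squared gradient norms remain bounded below by $C_\psi, C_f > 0$. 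Telescoping over $n = 0, \dots, N-1$ and bounding $V_N \geq \inf_{(\psi,f)} G(\psi, f)$ while computing $V_0 = G(\psi_0, f_0) + \frac{C}{1-\gamma}\|\Phi(\psi_0, f_0) - \varphi_0\|^2 = \widehat{C}^2 + \inf G$ yields $\sum_{n=0}^{N-1}\alpha_{\psi,n}\|\nabla G(\cdot, f_n)_{\psi_n}\|_{\Spsi}^2 \lesssim \widehat{C}^2$ and the analogous bound for $f$. Finally, applying Jensen's inequality to the convex map $v \mapsto \|v\|^2$ with the weights $\alpha_{\psi,n}/\sum_m \alpha_{\psi,m}$ converts the weighted sum of squared norms into $\|\widehat{\nabla G}_{\psi,N}\|_{\Spsi}^2$, giving (\ref{main-nonconvex-eq-1}); the same step gives (\ref{main-nonconvex-eq-2}).

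The main obstacle I anticipate is the bookkeeping of the two coupled recursions: verifying that the bias from using $\varphi_n$ in place of $\Phi(\psi_n, f_n)$ and the drift of the target $\Phi(\psi_n, f_n)$ across iterations can be absorbed simultaneously by a single Lyapunov function, with the signs and constants dictated line-by-line by Assumption~\ref{step-size-nonconvex}. The delicate point is that the $\psi$- and $f$-updates share and each perturb the same moving maximizer $\Phi(\psi_n, f_n)$, so their cross-effects must be bounded carefully to keep $C_\psi, C_f$ strictly positive.
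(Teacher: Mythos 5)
Your proposal is correct and follows essentially the same route as the paper's proof: the same Danskin/envelope formula for $\nabla G$, the same key lemma establishing the $\frac{L}{\beta}$-Lipschitz best response $\Phi$ and the $L_\beta$-smoothness of $G$ (the paper's Lemma~\ref{B-lemma-1}), the same bias bound $\|\nabla \cG(\cdot,f_n,\varphi_n)_{\psi_n}-\nabla G(\cdot,f_n)_{\psi_n}\|_{\Spsi}\le L\|\varphi_n-\Phi(\psi_n,f_n)\|_{\Svar}$ via condition (c), the same contraction recursion for $\delta_n=\|\varphi_n-\Phi(\psi_n,f_n)\|_{\Svar}^2$ with factor $\gamma$ from Assumption~\ref{step-size-nonconvex}(iii), and the same final Cauchy--Schwarz/Jensen step. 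The only difference is bookkeeping: you combine the two recursions through the Lyapunov potential $V_n=G(\psi_n,f_n)+\frac{C}{1-\gamma}\delta_n$ and telescope a one-step decrease, whereas the paper unrolls $\delta_n\le\gamma^n\delta_0+\sum_{i}\gamma^{n-i}(\cdots)$ and exchanges the order of the double summation---two algebraically equivalent ways of absorbing the error terms that give Assumption~\ref{step-size-nonconvex}(ii), (iv), (v) the identical role and produce the same constant $\widehat{C}$.
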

\begin{proof}
See Appendix~\ref{Appendix1.5}.
\end{proof}
The idea of the proof is to generalize \citet[Theorem 4.4]{lin2020gradient}, which studied the convergence of the nonconvex-concave minimax problem in the finite dimensional setting, to the infinite dimensional function spaces, and to generalize two variables to three variables.
Note that 
if step sizes are chosen as constants satisfying Assumption~\ref{step-size-nonconvex}, then the right-hand sides of (\ref{main-nonconvex-eq-1}) and (\ref{main-nonconvex-eq-2}) are expressed as
\[
\text{RHS of (\ref{main-nonconvex-eq-1}) and (\ref{main-nonconvex-eq-2})}
=O(1/\sqrt{N})
\]
which converges to zero as $N\to \infty$.
In other words, we have proved that the gradient decent defined by Definition~\ref{gradient-descent-nonconvex} converges to a stationary point.
Finally, we note that the order $\mathcal{O}(1/\sqrt{N})$ agrees with the result obtain by \citet[Theorem 4.4]{lin2020gradient}, though we have adopted the infinite dimensional setting.


\section{Examples of Relationship Between Objective Functions for GANs and UDAs and Assumptions for Convergence}
\label{Application}
In this section, we confirm that certain objective functions of GANs and UDAs fulfill the conditions for guaranteed convergence described in Section~\ref{sec:Minimax analysis}.
In Section~\ref{verifi-Convex-concave}, we will verify Assumptions~\ref{convex-concave-K}~(i) and \ref{L-smooth-relative} (i) \& (iii) for the problem (\ref{intro-minimax-eq}), and in Section~\ref{verifi-Nonconvex-concave} we will verify Assumptions~\ref{strongly-concave-L-nonconvex} and \ref{existence-Gate-diff-nonconvex} for the problem (\ref{intro-minimax-eq-nonconvex}) because we can immediately confirm that the remaining assumptions hold for our objective function. 

\subsection{Convex-concave setting}\label{verifi-Convex-concave}
In this section, we will verify Assumptions~\ref{convex-concave-K}~(i) and \ref{L-smooth-relative} (i) and (iii) for the convex-concave setting (\ref{intro-minimax-eq}).
\subsubsection{Assumption~\ref{convex-concave-K} (i) (Joint convexity of $(\psi, \mu) \mapsto R(\psi, \mu)$)}
\label{subsec:joint_convexity}

Assumption~\ref{convex-concave-K} requires the joint convexity of a source risk $R(\psi, \mu)=\int\ell(\psi, \psi_0) d\mu$ with respect to $(\psi, \mu) \in \cC(X)\times \cP(X)$ for the existence of a minimax solution. 
Here, $\mu$ is a marginal distribution of a source domain, $\psi$ is a predictor to be optimized for a task (which may be implemented as a neural network), and $\psi_0$ is the true predictor for a task.
Also, $\ell(\psi, \psi_0)$ denotes a loss function for a task in the source domain.
In general, it is obvious that the source risk does not possess the joint convexity.
Therefore, we need to introduce some regularization terms to the source risk such as

\begin{equation}
    R(\psi, \mu)=\int \ell(\psi, \psi_{0})d\mu + V(\psi)+ W(\mu),
\end{equation}
where $V : \cC(X) \to\bR$ and $W : \cP(X) \to\bR$ are regularization terms.
The next proposition gives the sufficient conditions of the joint convexity for the source risk:
\begin{proposition}\label{example-joint-convex}
Let $\left\| \cdot \right\|_{\mathcal{C}(X), 1}$ and $\left\| \cdot \right\|_{\mathcal{C}(X), 2}$ be norms in $\mathcal{C}(X)$.
Let $\rho, \gamma > 0$ with $\gamma \geq \rho$.
Let be a loss function $\ell: \cC(X)\times \cC(X)\to\bR$.
Assume that
    \begin{itemize}
        \item[(i)] $\psi \mapsto \ell(\psi, \psi_0)$ is convex for each $\psi_0 \in \cC(X)$.
        \item[(ii)] $\psi \mapsto \ell(\psi, \psi_{0})$ is $\rho$-Lipschitz with respect to $\left\| \cdot \right\|_{\mathcal{C}(X), 1}$ and $\left\| \cdot \right\|_{\mathcal{C}(X), 2}$ for any $\psi_0\in\cC(X)$, that is,
        \begin{equation*}
            \left\| \ell(\psi_1, \psi_{0}) - \ell(\psi_2, \psi_{0}) \right\|_{\mathcal{C}(X), 1}
            \leq \rho \left\| \psi_1 - \psi_2 \right\|_{\mathcal{C}(X), 2}, 
            \ \psi_1, \psi_2 \in \mathcal{C}(X).
        \end{equation*} 
        \item[(iii)] $V$ and $W$ are $\gamma$-strongly convex with respect to $ \left\| \cdot \right\|_{\mathcal{C}(X), 2}$ and $\left\| \cdot \right\|_{\mathcal{C}(X), 1}^{\star}$, respectively.
    \end{itemize}
    Then, the source risk $R(\psi, \mu)$ is joint convex with respect to $(\psi, \mu)$.
\end{proposition}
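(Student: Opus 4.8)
The plan is to prove joint convexity of $R(\psi,\mu)=\int \ell(\psi,\psi_0)\,d\mu + V(\psi)+W(\mu)$ by showing that the map $(\psi,\mu)\mapsto \int\ell(\psi,\psi_0)\,d\mu$ is jointly convex up to a term controlled by the strong convexity of $V$ and $W$. The main obstacle is the bilinear-type coupling term $\int\ell(\psi,\psi_0)\,d\mu$: even though $\psi\mapsto\ell(\psi,\psi_0)$ is convex for fixed $\psi_0$ and $\mu\mapsto\int g\,d\mu$ is linear for fixed $g$, the joint map need not be convex (a product of two convex-ish quantities behaves like $xy$, which is a saddle). The Lipschitz hypothesis (ii) together with the strong convexity (iii) is exactly what compensates for this indefinite cross term.

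\textbf{Step 1 (Set up the convexity inequality).} Fix $(\psi_1,\mu_1)$ and $(\psi_2,\mu_2)$ and $\alpha\in[0,1]$, write $\psi_\alpha=\alpha\psi_1+(1-\alpha)\psi_2$ and $\mu_\alpha=\alpha\mu_1+(1-\alpha)\mu_2$, and estimate
\[
\int\ell(\psi_\alpha,\psi_0)\,d\mu_\alpha - \alpha\int\ell(\psi_1,\psi_0)\,d\mu_1-(1-\alpha)\int\ell(\psi_2,\psi_0)\,d\mu_2.
\]
I would insert and subtract $\int\ell(\psi_\alpha,\psi_0)\,d\mu_1$ and $\int\ell(\psi_\alpha,\psi_0)\,d\mu_2$ to separate the effect of varying $\psi$ (at fixed measures) from the effect of varying $\mu$. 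Pointwise convexity of $\ell$ in its first argument bounds $\ell(\psi_\alpha,\psi_0)\le\alpha\ell(\psi_1,\psi_0)+(1-\alpha)\ell(\psi_2,\psi_0)$, which lets me control the terms integrated against $\mu_1$ and $\mu_2$.

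\textbf{Step 2 (Extract the cross term and bound it via Lipschitzness).} The residual after Step 1 is the genuinely bilinear piece, something of the shape $\alpha(1-\alpha)\int\bigl(\ell(\psi_1,\psi_0)-\ell(\psi_2,\psi_0)\bigr)\,d(\mu_1-\mu_2)$. I would bound its absolute value using the dual-norm pairing: $\bigl|\int h\,d(\mu_1-\mu_2)\bigr|\le \|h\|_{\normpsi}\,\|\mu_1-\mu_2\|_{\normpsi}^{\star}$, and then apply hypothesis (ii) to get $\|\ell(\psi_1,\psi_0)-\ell(\psi_2,\psi_0)\|_{\normpsi}\le\rho\|\psi_1-\psi_2\|_{\normvar}$. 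This produces a cross term bounded by $\alpha(1-\alpha)\rho\,\|\psi_1-\psi_2\|_{\normvar}\,\|\mu_1-\mu_2\|_{\normpsi}^{\star}$.

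\textbf{Step 3 (Absorb the cross term using strong convexity of the regularizers).} The $\gamma$-strong convexity of $V$ in $\|\cdot\|_{\normvar}$ and of $W$ in $\|\cdot\|_{\normpsi}^{\star}$ contributes a negative curvature term
\[
-\frac{\alpha(1-\alpha)\gamma}{2}\Bigl(\|\psi_1-\psi_2\|_{\normvar}^2+\|\mu_1-\mu_2\|_{\normpsi}^{\star 2}\Bigr).
\]
I would then invoke Young's inequality $\rho\,ab\le\frac{\rho}{2}(a^2+b^2)$ on the cross term from Step 2 and note that, since $\gamma\ge\rho$, the negative quadratic from the regularizers dominates the positive quadratic from the cross term. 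Collecting all pieces shows
\[
R(\psi_\alpha,\mu_\alpha)\le\alpha R(\psi_1,\mu_1)+(1-\alpha)R(\psi_2,\mu_2),
\]
which is exactly joint convexity. The condition $\gamma\ge\rho$ is the precise quantitative threshold at which the strongly convex regularization overpowers the Lipschitz-controlled saddle coupling, so I expect verifying that this balance closes (with the right matching of the two norms to the two strong-convexity statements in (iii)) to be the delicate bookkeeping step.
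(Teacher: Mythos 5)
Your proposal is correct and follows essentially the same route as the paper's proof: expand $\int \ell(\psi_\alpha,\psi_0)\,d\mu_\alpha$ via convexity of $\ell$ in $\psi$, isolate the bilinear cross term $\alpha(1-\alpha)\int(\ell(\psi_1,\psi_0)-\ell(\psi_2,\psi_0))\,d(\mu_1-\mu_2)$, bound it by $\rho\|\psi_1-\psi_2\|_{\mathcal{C}(X),2}\,\|\mu_1-\mu_2\|_{\mathcal{C}(X),1}^{\star}$ using the dual-norm pairing and hypothesis (ii), and absorb it into the strong-convexity terms of $V$ and $W$ using $\gamma\geq\rho$. The only cosmetic difference is that you invoke Young's inequality while the paper completes the square, i.e.\ $-\frac{\gamma}{2}\bigl(\|\psi_1-\psi_2\|_{\mathcal{C}(X),2}-\|\mu_1-\mu_2\|_{\mathcal{C}(X),1}^{\star}\bigr)^2\leq 0$, which is the same estimate.
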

\begin{proof}
    See Appendix~\ref{example-joint-convex-app} for the proof.
\end{proof}

Assumption (i) and (ii) are the convexity and Lipschitz continuity for the loss function.
For example, the squared error loss satisfies these assumptions.
The example for (iii) is that $V=\frac{1}{2}\left\| \cdot \right\|_{\mathcal{H}_2}^{2}$ and $W=\frac{1}{2}\left\| \cdot \right\|_{\mathcal{H}_1}^{\star 2}$ where $(\mathcal{H}_i,\left\|\cdot \right\|_{\mathcal{H}_i} )$ is a reproducing kernel Hilbert space (RKHS) with a positive definite kernel $K_i : X \times X \to \mathbb{R}$.
Note that the dual norm $\left\| \cdot \right\|_{\mathcal{H}_1}^{\star}$ of the RKHS norm $\left\| \cdot \right\|_{\mathcal{H}_1}$ corresponds to a maximal mean discrepancy~(MMD).
As both $\left\| \cdot \right\|_{\mathcal{H}_2}$ and $\left\| \cdot \right\|_{\mathcal{H}_1}^{\star}$ are norms induced by inner products, 
$V$ and $W$ are 1-strongly convex with respect to $\left\| \cdot \right\|_{\mathcal{H}_2}$ and $\left\| \cdot \right\|_{\mathcal{H}_1}^{\star}$, respectively.

\subsubsection{Assumption~\ref{L-smooth-relative} (i) (Smoothness of $\psi \mapsto \int \ell(\psi, \psi_{0})d\mu$)}
Assumption~\ref{L-smooth-relative}(i) demands that the source risk $R(\psi, \mu) = \int\ell(\psi, \psi_0)d\mu$ is $L$-smooth for $\psi\in \cC(X)$.
%
Let us consider the following general functional $I_{h,\mu}: \cC(X) \to \exR$ for the later convenience:
\begin{equation*}
    I_{h,\mu}(\Psi):= \int h(\Psi(x)) d\mu(x), \ \Psi \in \mathcal{C}(X),
\end{equation*}
where $h: \mathbb{R} \to \mathbb{R}$ and $\mu \in \mathcal{P}(X)$. 
Then, the next lemma guarantees $L$-smoothness of $I_{h,\mu}(\Psi)$.
\begin{lemma}\label{H-smooth-lemma}
Let be $a,b \in [-\infty, \infty]$ and $h \in \cC^1(a, b)$.
Then, we denote
\[
S_{\mathcal{C},a,b}:=\{\psi \in \mathcal{C}(X) \ : \ a \leq \psi(x) \leq b, \ x \in X \}.
\]
Assume that $h:(a,b) \to \mathbb{R}$ is $L$-smooth, 
that is,
\[
D_{h}(s|t)\leq \frac{L}{2}|s-t|^2 , \ s,t \in (a, b),
\]
where $D_{h}(s|t)=h(s) - h(t) -h^{\prime}(t) (s - t)$.
Then, $I_{h,\mu}$ is $L$-smooth with respect to $\| \cdot \|_{L^2(X, \mu)}$ over $S_{\mathcal{C},a,b}$.
\end{lemma}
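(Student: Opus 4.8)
The plan is to reduce the claim about the functional $I_{h,\mu}$ on $\cC(X)$ to the pointwise $L$-smoothness hypothesis on the scalar function $h$. The key observation is that the Bregman divergence $D_{I_{h,\mu}}(\Psi_1 \mid \Psi_2)$ should factor through an integral of the scalar Bregman divergences $D_h(\Psi_1(x) \mid \Psi_2(x))$ against $\mu$. So the first step is to compute the G\^{a}teaux differential of $I_{h,\mu}$ at a point $\Psi_2 \in S_{\cC,a,b}$ in a direction $\lambda \in \cC(X)$. By definition,
\[
dI_{h,\mu}{}_{\Psi_2}(\lambda) = \lim_{\epsilon \to +0} \frac{1}{\epsilon}\int \big(h(\Psi_2(x)+\epsilon\lambda(x)) - h(\Psi_2(x))\big)\, d\mu(x),
\]
and since $h \in \cC^1(a,b)$, I would pass the limit inside the integral (justified by the mean value theorem together with local boundedness of $h'$ on the compact range of $\Psi_2$ over the compact set $X$) to obtain $dI_{h,\mu}{}_{\Psi_2}(\lambda)=\int h'(\Psi_2(x))\lambda(x)\, d\mu(x)$.

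The second step is to assemble the Bregman divergence. Using the form of the differential just computed with $\lambda = \Psi_1-\Psi_2$,
\[
D_{I_{h,\mu}}(\Psi_1 \mid \Psi_2) = \int \Big( h(\Psi_1(x)) - h(\Psi_2(x)) - h'(\Psi_2(x))\big(\Psi_1(x)-\Psi_2(x)\big)\Big)\, d\mu(x) = \int D_h(\Psi_1(x)\mid \Psi_2(x))\, d\mu(x).
\]
The third step is then immediate: for each fixed $x\in X$, both $\Psi_1(x)$ and $\Psi_2(x)$ lie in $(a,b)$ because $\Psi_1,\Psi_2\in S_{\cC,a,b}$, so the scalar $L$-smoothness hypothesis gives $D_h(\Psi_1(x)\mid\Psi_2(x)) \le \frac{L}{2}\,|\Psi_1(x)-\Psi_2(x)|^2$. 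Integrating this inequality against $\mu$ and recognizing the right-hand side as $\frac{L}{2}\|\Psi_1-\Psi_2\|_{L^2(X,\mu)}^2$ yields exactly the desired bound
\[
D_{I_{h,\mu}}(\Psi_1 \mid \Psi_2) \le \frac{L}{2}\,\|\Psi_1-\Psi_2\|_{L^2(X,\mu)}^2,
\]
which is the definition of $L$-smoothness of $I_{h,\mu}$ over $S_{\cC,a,b}$.

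I expect the only genuine technical obstacle to be the justification for differentiating under the integral sign in the first step, i.e., interchanging the limit $\epsilon\to+0$ with $\int\,d\mu$. The cleanest route is to note that $X$ is compact and $\Psi_2$ continuous, so $\Psi_2(X)$ is a compact subset of $(a,b)$; for $\epsilon$ small enough the whole segment between $\Psi_2(x)$ and $\Psi_2(x)+\epsilon\lambda(x)$ stays in a fixed compact subinterval $[a',b']\subset(a,b)$ (using boundedness of $\lambda$), on which $h'$ is bounded. Then the difference quotients are uniformly bounded, and since $\mu$ is a probability (hence finite) measure, dominated convergence applies. One should also confirm that the resulting differential is indeed the G\^{a}teaux differential in the sense of Definition~\ref{G\^{a}teaux differentials over measure and continuous function spaces}; this is routine once the interchange is justified. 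Everything else is a direct translation of the pointwise hypothesis into the functional statement, so no further structural difficulty is anticipated.
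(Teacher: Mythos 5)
Your proposal is correct and follows essentially the same route as the paper's proof: write $D_{I_{h,\mu}}(\psi\,|\,\varphi)=\int\left(h(\psi)-h(\varphi)-h'(\varphi)(\psi-\varphi)\right)d\mu$ and apply the pointwise bound $D_h(s\,|\,t)\le \frac{L}{2}|s-t|^2$ under the integral. The only difference is that the paper takes the form of the G\^{a}teaux differential for granted, whereas you justify the interchange of limit and integral via dominated convergence---added rigor, not a different argument.
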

\begin{proof}
See Appendix~\ref{H-smooth-lemma-app} for the proof.
\end{proof}
%

If loss function $\ell(\cdot, \psi_{0})$ is $L$-smooth (e.g., squared error loss), then  by Lemma~\ref{H-smooth-lemma}, $\psi \mapsto R(\psi, \mu)= \int \ell(\psi, \psi_{0})d\mu$ is $L$-smooth with respect to $\| \cdot \|_{L^2(X, \mu)}$.

%

\subsubsection{Assumption~\ref{L-smooth-relative} (iii) (Smoothness of $\varphi \mapsto J^{\star}_{\nu_{0}}(\varphi)$)}

Assumption \ref{L-smooth-relative} (iii) imposes the $L$-smoothness condition on the convex conjugate $J^{\star}_{\nu_{0}}(\varphi)$ of the discrepancy measure $J_{\nu_0}(\mu)$ to ensure convergence for both GAN and UDA.
The representative discrepancy measures are $f$-divergence \citep{ali1966general, csiszar1967information} and integral probability metric (IPM) \citep{muller1997integral}, which respectively unify different divergences between probability measures with various applications such as GAN and UDA. 
The $f$-divergence includes Kullback-Liebler divergence, Jensen-Shannon divergence, and Pearson $\chi^2$ divergence, while the IPM includes Wasserstein-1 distance, Dudley metric, and maximum mean discrepancy. 

In the subsequent, we provide several examples of $J^{\star}_{\nu_0}$ that satisfy the $L$-smoothness for (A) $f$-divergence and (B) IPM.

%
\paragraph{(A) $f$-divergence}
Let $f : \mathrm{dom}_{f} \subset \mathbb{R}_{+} \to \mathbb{R}$ be a proper, lower semi-continuous and convex function. Then, 
the $f$-divergence $D_f(\mu|\nu)$ between $\mu\in\cP(X)$ and $\nu\in\cP(X)$ is defined as 
\begin{align}
    D_f(\mu|\nu):=
    \begin{cases}
    \int f\left(\frac{d\mu}{d\nu} \right)d\nu
    &\quad \text{if $\mu \ll \nu$} \\
    + \infty
    &\quad \text{otherwise}
    \end{cases},
    \label{$f$-divergence}
\end{align}
where $\mu \ll \nu$ denotes that $\mu$ is absolutely continuous with respect to $\nu$.

The $f$-divergence is joint convex with respect to $(\mu, \nu)$ (as the mapping $(p,q) \mapsto qf(p/q)$ is joint convex) and non-negative for all $\mu$ and $\nu$ but not symmetric with respect to $\mu$ and $\nu$ in general.
In our case, we set $J_{f, \nu_0}(\mu) = D_f(\mu|\nu_0)$ with a fixed measure $\nu_0$ which implies a true distribution for GAN and a target distribution for UDA. 
The convergence theorem demands the $L$-smoothness for the convex conjugate $J^{\star}_{f, \nu_0}(\mu)$ of $J_{f, \nu_0}(\mu)$. 

The following lemma provide the representation of the convex conjugate $J^{\star}_{f, \nu_0}(\varphi)$ of $J_{f, \nu_0}(\mu)$:
\begin{lemma}\label{$f$-divergence-conjugate}
    Assume that $f \in C^{1}(\mathrm{dom}_{f})$, and there exists the inverse $(f^{\prime})^{-1}$ of $f^{\prime}$.
    Then, the convex conjugate $J_{f, \nu_0}^{\star}$ of $J_{f, \nu_0}$ is given by
    $J_{f, \nu_0}^{\star}(\varphi)= \int f^\star\circ \varphi d\nu_0$
    for $\varphi \in S_{\mathcal{C},f}:=\{ \varphi \in \mathcal{C}(X) \ : \ \varphi(x) \in \mathrm{dom}_{(f^{\prime})^{-1}} \}$, where 
    \[
    f^{\star}(s)
    :=\sup_{t} \{ st - f(t)\}
    =s\cdot(f^{\prime})^{-1}(s) - f\circ (f^{\prime})^{-1}(s), \ s \in \mathrm{dom}_{(f^{\prime})^{-1}}.
    \]
\end{lemma}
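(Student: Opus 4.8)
The plan is to compute $J_{f,\nu_0}^{\star}$ directly from the definition of the convex conjugate in \eqref{dual-phi}, namely $J_{f,\nu_0}^{\star}(\varphi)=\sup_{\mu\in\mathcal{M}(X)}\{\int\varphi\,d\mu-J_{f,\nu_0}(\mu)\}$, and to reduce the supremum over measures to a pointwise Legendre transform of $f$. First I would discard every $\mu$ that is not absolutely continuous with respect to $\nu_0$, since $J_{f,\nu_0}(\mu)=+\infty$ there by the definition of the $f$-divergence, and parametrize the remaining admissible measures by their Radon–Nikodym densities $g=d\mu/d\nu_0$. With this substitution the objective becomes the integral functional $\int\varphi\,d\mu-J_{f,\nu_0}(\mu)=\int\bigl(\varphi(x)g(x)-f(g(x))\bigr)\,d\nu_0(x)$, so that the problem is reduced to maximizing this quantity over admissible densities $g$.

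For the upper bound I would invoke the pointwise Fenchel–Young inequality: for every $x$ we have $\varphi(x)g(x)-f(g(x))\le\sup_{t}\{\varphi(x)t-f(t)\}=f^{\star}(\varphi(x))$, and integrating against $\nu_0$ yields $J_{f,\nu_0}^{\star}(\varphi)\le\int f^{\star}\circ\varphi\,d\nu_0$. For the matching lower bound I would exhibit the pointwise maximizer. Because $f\in C^{1}(\mathrm{dom}_{f})$ is convex and $f'$ is invertible, the first-order condition $\varphi(x)=f'(t)$ determines the optimal $t=(f')^{-1}(\varphi(x))$; this is exactly where the hypothesis $\varphi\in S_{\mathcal{C},f}$ (i.e. $\varphi(x)\in\mathrm{dom}_{(f')^{-1}}$ for all $x$) is used, as it guarantees the selection $g^{\ast}:=(f')^{-1}\circ\varphi$ is well defined. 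Substituting $t=(f')^{-1}(s)$ into $st-f(t)$ yields precisely the closed form $f^{\star}(s)=s\,(f')^{-1}(s)-f\circ(f')^{-1}(s)$ claimed in the statement, so this step simultaneously proves the second displayed identity. It then remains to check that $\mu^{\ast}:=g^{\ast}\nu_0$ is an admissible competitor in \eqref{dual-phi}: since $\varphi$ is continuous on the compact set $X$ and $(f')^{-1}$ is continuous (being the inverse of a continuous monotone map), $g^{\ast}$ is continuous, hence bounded and $\nu_0$-integrable, so $\mu^{\ast}\in\mathcal{M}(X)$ and attains the bound. Combining the two inequalities gives the claimed formula.

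The main obstacle I anticipate is the interchange of supremum and integral: in general one needs a measurable-selection or normal-integrand argument (Rockafellar's interchange theorem) to justify maximizing inside the integral. Here, however, the explicit continuous selection $g^{\ast}=(f')^{-1}\circ\varphi$ lets me bypass the general theory, since the Fenchel–Young inequality gives ``$\le$'' for free while exhibiting $\mu^{\ast}$ gives the reverse ``$\ge$'', so attainment is established by construction rather than by an abstract selection theorem. A secondary point to handle carefully is the domain bookkeeping: I must treat $J_{f,\nu_0}$ as the integral functional $\mu\mapsto\int f(d\mu/d\nu_0)\,d\nu_0$ (finite only on measures $\mu\ll\nu_0$ whose density lies in $\mathrm{dom}_{f}$) and take the supremum in \eqref{dual-phi} over $\mathcal{M}(X)$ without any extra normalization constraint, which is precisely what makes the conjugate split into the pointwise transform $\int f^{\star}\circ\varphi\,d\nu_0$ with no Lagrange multiplier appearing.
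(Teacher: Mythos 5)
Your proposal is correct, and it reaches the formula by a genuinely different verification strategy than the paper. The paper's proof (Appendix~\ref{$f$-divergence-conjugate-app}) also restricts to $\mu \ll \nu_0$, but then treats the supremum as a concave maximization over measures and identifies the optimizer by a first-variation (Euler--Lagrange) computation: it differentiates $\epsilon \mapsto \int \varphi\, d(\mu+\epsilon\chi) - \int f\bigl(\tfrac{d\mu+\epsilon\chi}{d\nu_0}\bigr)d\nu_0$ at $\epsilon=0$, deduces $\varphi = f'\bigl(\tfrac{d\mu}{d\nu_0}\bigr)$ from the vanishing of the derivative in every direction $\chi$, and substitutes the resulting density back into the objective. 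You instead work pointwise: Fenchel--Young gives the upper bound $\int \varphi\, d\mu - J_{f,\nu_0}(\mu) \le \int f^{\star}\circ\varphi\, d\nu_0$ for every admissible $\mu$, and the explicit density $g^{\ast} = (f')^{-1}\circ\varphi$ gives attainment. The two routes identify the same maximizer, but your two-sided-bound structure buys rigor that the paper leaves implicit: you never need to justify interchanging differentiation with the integral or invoke a measurable-selection argument, and you actually verify that $\mu^{\ast} = g^{\ast}\nu_0$ is a legitimate competitor in $\mathcal{M}(X)$ (continuity of $(f')^{-1}\circ\varphi$ on compact $X$), a point the paper's proof skips entirely. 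The paper's variational argument is shorter and makes the role of concavity transparent, but it states only a necessary condition for optimality and relies on the one-line remark that the problem is concave to upgrade it to sufficiency; your argument needs no such upgrade since the inequality chain is unconditional.
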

\begin{proof}
See Appendix~\ref{$f$-divergence-conjugate-app} for the proof.
\end{proof}

In the context of $f$-divergence, it is sufficient to confirm the smoothness of the convex conjugate $f^\star$ in the sense of the real function.
We can take Jensen-Shannon divergence and Pearson $\chi^2$ divergence as  examples and confirm that $J^\star_{f, 
\nu_0}(\mu)$ satisfies the $L$-smoothness.
\begin{example}[Jensen–Shannon divergence]
    The Jensen-Shannon divergence is defined as
    \begin{equation}
        D_{\JS}(\mu|\nu)
        := \frac{1}{2}D_{\KL}(\mu |\rho) + \frac{1}{2}D_{\KL}(\nu|\rho),
    \end{equation}
    where $\rho = (\mu + \nu)/2$, and $D_{\KL}(\mu|\nu)$ is the Kullback–Leibler divergence between $\mu$ and $\nu$ defined by
    \[
    D_{\KL}(\mu|\nu)
    = \int \frac{d \mu}{d\nu} \log \frac{d \mu}{d\nu} d \nu.
    \]
    Here, $f(t)$ is represented as
    \begin{equation}
    f_\JS(t):=-\frac{1}{2}(t+1)\log\left(\frac{1+t}{2}\right) + \frac{1}{2}t\log t,  \ t \in (0, \infty).
    \end{equation}
    The convex conjugate $f^\star_\JS(s)$ is 
    \begin{equation}
        f^\star_{\JS}(s)
        = -\frac{1}{2} \log(1-\frac{1}{2}e^{2s}) - \frac{1}{2}\log 2,
        \quad s \in (-\infty, \frac{1}{2}\log 2).
    \end{equation}
    The convex conjugate $f^\star_{\JS}$ is $L$-smooth over $(a,b)$ with some $L>0$, and $a,b \in (-\infty, \frac{1}{2} \log 2)$.
    Therefore, by using Lemma~\ref{H-smooth-lemma}, $J_{f_{\JS}, \nu_{0}}^{\star}$ is $L$-smooth with respect to $\| \cdot \|_{L^2(X, \mu)}$.
\end{example}

\begin{example}[Pearson $\chi^2$ divergence]
\label{Pearson divergence-app}
The Pearson $\chi^2$ divergence is defined as
\[
D_{\chi^2}(\mu|\nu):=\int (\frac{d\mu}{d\nu} -1)^2 d\nu.
\]
$f(t)$ is represented as 
\[
f_{\mathrm{P}}(t) :=(t-1)^2, \ t \in \mathbb{R}.
\]
The convex conjugate $f_{\mathrm{P}}^{\star}(s)$ is 
\[
f_{\mathrm{P}}^{\star}(s)=\frac{1}{4}s^2 +s, \ s \in \mathbb{R}.
\]
The convex conjugate $f^\star_{P}$ is $L$-smooth over $(a,b)$ with some $L>0$, and $a,b \in \mathbb{R}$.
Therefore, by using Lemma~\ref{H-smooth-lemma}, $J_{f_{\mathrm{P}}, \nu_{0}}^{\star}$ is $L$-smooth with respect to $\| \cdot \|_{L^2(X, \mu)}$.
\end{example}

\paragraph{(B) Integral Probability Metric (IPM)}
Let $\cF$ be a class of real-valued bounded measurable functions on $X$. 
The IPM associated with $\cF$ is defined as
\begin{equation}
    d_{\cF}(\mu, \nu) := \sup_{g \in \cF} \left\{\abs{\int g d\mu -\int g d\nu}\right\}
\label{IMP-def-def}
\end{equation}
for all pairs of measures $(\mu, \nu) \in \cP(X)\times\cP(X)$ such that all functions in $\cF$ are absolutely $\mu$- and $\nu$-integrable.
The typical examples are Wasserstein-1 distance for $\cF=\{g \in \mathrm{Lip}(X):\left\|g \right\|_{\Lip} \leq 1 \}$ 
where $\mathrm{Lip}(X)$ is the class of the real-valued Lipschitz functions on $X$ and $\left\| \cdot \right\|_{\Lip}$ is the Lipschitz norm,
and the MMD for $\cF=\{g \in \mathcal{H}:\left\|g \right\|_{\mathcal{H}} \leq 1 \}$ where $(\mathcal{H},\left\|\cdot \right\|_{\mathcal{H}} )$ is an 
RKHS with a positive definite kernel $K : X \times X \to \mathbb{R}$.
In our case, we set $J_{\mathrm{IPM}, \nu_0}(\mu) = d_{\cF}(\mu, \nu_{0}) $ with a fixed measure $\nu_0$.
Then, we can obtain the following Lemma.

\begin{lemma}\label{IPMs-conjugate}
Assume that $\cF$ include the zero function.
    Then, the convex conjugate $J_{\IPM, \nu_0}^{\star}$ of $J_{\mathrm{IPM}, \nu_0}$ is given by
    \[
    J_{\IPM, \nu_0}^{\star}(\varphi)= \int \varphi d \nu_{0} + \chi\{ \varphi \in \mathcal{F} \},
    \]
    where the indicator function is give by
    \begin{equation}
    \chi\{ A \}
    := 
    \begin{cases}
        0 &\quad \text{if $A$ is true} \\
        \infty &\quad \text{if $A$ is false}
    \end{cases}.
    \end{equation}
\end{lemma}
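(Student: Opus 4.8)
The plan is to reduce the computation to the standard Fenchel duality between a support function and an indicator function. Writing out the definition (\ref{dual-phi}) of the conjugate, $J_{\IPM, \nu_0}^{\star}(\varphi) = \sup_{\mu \in \cM(X)} \{ \int \varphi\, d\mu - d_{\cF}(\mu, \nu_0) \}$, I would first perform the change of variables $\mu = \nu_0 + \sigma$ with $\sigma \in \cM(X)$; since $\nu_0$ is fixed and $\mu$ ranges over all of $\cM(X)$, so does $\sigma$. This separates out the term $\int \varphi\, d\nu_0$ and reduces the claim to showing that $N^{\star}(\varphi) := \sup_{\sigma \in \cM(X)} \{ \langle \sigma, \varphi \rangle - N(\sigma) \} = \chi\{\varphi \in \cF\}$, where $N(\sigma) := \sup_{g \in \cF} \abs{\langle \sigma, g \rangle} = d_{\cF}(\nu_0 + \sigma, \nu_0)$. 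Note that $N$ is positively homogeneous and subadditive, and, using the hypothesis $0 \in \cF$, one has $N(\sigma) = \sup_{g \in \cF \cup (-\cF)} \langle \sigma, g \rangle$, i.e.\ $N$ is the support function of the symmetrized set $\cF \cup (-\cF)$.

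For the case $\varphi \in \cF$ I would argue directly. For every $\sigma \in \cM(X)$ we have $\langle \sigma, \varphi \rangle \leq \abs{\langle \sigma, \varphi \rangle} \leq N(\sigma)$, so $\langle \sigma, \varphi \rangle - N(\sigma) \leq 0$; and taking $\sigma = 0$ (where $N(0) = 0$) shows the supremum equals $0$. Hence $N^{\star}(\varphi) = 0$, which gives $J_{\IPM, \nu_0}^{\star}(\varphi) = \int \varphi\, d\nu_0$, as required.

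For the case $\varphi \notin \cF$ I would use a separation argument. Here I take $\cF$ to be closed, convex, and symmetric, as in the two motivating examples (the Lipschitz ball and the RKHS ball), so that $\cF = \cF \cup (-\cF)$ and, consequently, $N(\sigma) = \sup_{g \in \cF} \langle \sigma, g \rangle$. Since the topological dual of $\cC(X)$ under the pairing $\langle \cM(X), \cC(X) \rangle$ is $\cM(X)$, the Hahn--Banach separation theorem provides $\sigma_0 \in \cM(X)$ with $\langle \sigma_0, \varphi \rangle > \sup_{g \in \cF} \langle \sigma_0, g \rangle = N(\sigma_0)$. Setting $\delta := \langle \sigma_0, \varphi \rangle - N(\sigma_0) > 0$ and using positive homogeneity, $\langle t\sigma_0, \varphi \rangle - N(t\sigma_0) = t\delta \to +\infty$ as $t \to +\infty$, so $N^{\star}(\varphi) = +\infty$. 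This yields $J_{\IPM, \nu_0}^{\star}(\varphi) = +\infty$, completing the identification with $\int \varphi\, d\nu_0 + \chi\{\varphi \in \cF\}$.

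The main obstacle is precisely the $\varphi \notin \cF$ case, which requires the correct topological framework: one must know that convex norm-closed subsets of $\cC(X)$ are also closed in the weak topology $\sigma(\cC(X), \cM(X))$ for which $\cM(X)$ is the dual, so that the separating functional can be realized as a genuine signed measure $\sigma_0 \in \cM(X)$ rather than an abstract element of a larger bidual. One should also record that the relevant families $\cF$ (the unit balls of the Lipschitz norm and of the RKHS norm) are indeed norm-closed, convex, and symmetric; the lower semicontinuity of these norms under uniform convergence is what secures closedness, and the continuity of $g \in \cF$ in the examples ensures $\cF \subset \cC(X)$ so that the condition $\varphi \in \cF$ is meaningful for continuous $\varphi$.
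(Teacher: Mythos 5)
Your proof is correct, but it follows a genuinely different route from the paper's. The paper's own proof is a two-line appeal to biconjugation: it rewrites $J_{\IPM,\nu_0}(\mu)=\sup_{\varphi\in\cC(X)}\{\int\varphi\,d\mu-\int\varphi\,d\nu_0-\chi\{\varphi\in\cF\}\}$, i.e.\ it recognizes $J_{\IPM,\nu_0}$ as the convex conjugate of $\varphi\mapsto\int\varphi\,d\nu_0+\chi\{\varphi\in\cF\}$, and then invokes the Fenchel--Moreau theorem so that conjugating once more returns that function. You instead prove the biconjugation identity by hand in this special case: the translation $\mu=\nu_0+\sigma$ isolates the support function $N$ of the symmetrized class, the case $\varphi\in\cF$ is a direct bound with the supremum attained at $\sigma=0$, and the case $\varphi\notin\cF$ is Hahn--Banach separation plus positive homogeneity. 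What your route buys is precision about hypotheses: both arguments really need $\cF$ convex, norm-closed and symmetric, not merely $0\in\cF$ as the lemma states. The paper needs symmetry for its displayed supremum (which drops the absolute value from (\ref{IMP-def-def})) to equal the IPM, and needs convexity and closedness for $\chi\{\cdot\in\cF\}$ plus a linear term to be proper, convex and lower semicontinuous so that Fenchel--Moreau applies; you need the same properties for the separation step, and you correctly note that norm-closedness suffices because norm-closed convex sets are weakly closed, so the separating functional is a genuine element of $\cM(X)$. Without these extra hypotheses the conclusion is in fact false in general: one obtains $\int\varphi\,d\nu_0$ plus the indicator of the closed convex symmetric hull of $\cF$, not of $\cF$ itself; fortunately both motivating examples (the Lipschitz and RKHS unit balls) satisfy all three properties. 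One small remark: the identity $N(\sigma)=\sup_{g\in\cF\cup(-\cF)}\langle\sigma,g\rangle$ holds for any nonempty $\cF$ and does not actually use $0\in\cF$; and once $\cF$ is nonempty, convex and symmetric, $0\in\cF$ is automatic, so under your standing assumptions that hypothesis is redundant.
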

\begin{proof}
See Appendix~\ref{IPMs-conjugate-app} for the proof.
\end{proof}

From Lemma~\ref{IPMs-conjugate}, the differential $d(J_{\IPM, \nu_0}^{\star})_{\varphi}$ of $J_{\IPM, \nu_0}^{\star}$ at $\varphi \in \cF$ is given by
\[
d(J_{\IPM, \nu_0}^{\star})_{\varphi}(\lambda)=J_{\IPM, \nu_0}^{\star}(\lambda), \quad \lambda \in \mathcal{C}(X),
\]
which implies that, by Definition~\ref{Bregman divergences over measure and continuous function spaces}
\[
D_{J_{\IPM, \nu_0}^{\star}}(\psi|\varphi)
= 0, \quad \psi,\varphi \in \cF.
\]
Therefore, we obtain the following proposition. 
\begin{proposition}
For any $L>0$, and any norm $\| \cdot \|_{\cC(X)}$ induced by inner products, 
$J_{\IPM, \nu_0}^{\star}$ is $L$-smooth with respect to $\| \cdot \|_{\cC(X)}$ over $\mathcal{F}$.
\end{proposition}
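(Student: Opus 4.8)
The plan is to invoke Definition~\ref{Relative smoothness for measure and continuous function spaces} directly and observe that the required smoothness inequality has effectively already been supplied by the Bregman divergence computation immediately preceding the statement. Recall that, per Definition~\ref{Relative smoothness for measure and continuous function spaces}, the functional $J_{\IPM, \nu_0}^{\star}$ is $L$-smooth with respect to $\|\cdot\|_{\cC(X)}$ over $\cF$ exactly when $D_{J_{\IPM, \nu_0}^{\star}}(\psi|\varphi) \le \frac{L}{2}\|\psi-\varphi\|_{\cC(X)}^2$ holds for all $\psi, \varphi \in \cF$. Thus the whole task reduces to bounding this Bregman divergence from above, and no separate estimate of the conjugate or its derivative is needed.

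The key step is simply to reuse the identity $D_{J_{\IPM, \nu_0}^{\star}}(\psi|\varphi) = 0$ for all $\psi, \varphi \in \cF$, which was derived just above from Lemma~\ref{IPMs-conjugate}. The mechanism is that on $\cF$ the indicator term $\chi\{\varphi \in \cF\}$ vanishes, so $J_{\IPM, \nu_0}^{\star}$ restricts to the linear functional $\varphi \mapsto \int \varphi\, d\nu_0$; its G\^{a}teaux differential reproduces the functional itself, and hence the second-order Bregman remainder collapses to zero. I would cite this computation rather than rederive it. With the Bregman divergence identically zero, the smoothness inequality becomes $0 \le \frac{L}{2}\|\psi-\varphi\|_{\cC(X)}^2$, whose right-hand side is nonnegative for every $L>0$ and every norm induced by an inner product; the inequality therefore holds vacuously for an arbitrary $L$, which is precisely the claim. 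I would also record that $J_{\IPM, \nu_0}^{\star}$ is proper, lower semi-continuous, and convex, being a convex conjugate, so that the structural hypotheses of Definition~\ref{Relative smoothness for measure and continuous function spaces} are formally satisfied.

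Since there is no genuine estimation to carry out, there is essentially no obstacle: the proposition is a one-line corollary of the preceding Bregman identity. The only point deserving a word of care is the degenerate linearity of $J_{\IPM, \nu_0}^{\star}$ on $\cF$, since it is exactly this degeneracy that forces the Bregman remainder to vanish and thereby makes every $L>0$ admissible. All the real content resides in Lemma~\ref{IPMs-conjugate} and the differential computation above, and the proof of the proposition merely reads off the conclusion from them.
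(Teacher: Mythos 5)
Your proposal is correct and follows exactly the paper's own route: the paper likewise derives $D_{J_{\IPM, \nu_0}^{\star}}(\psi|\varphi)=0$ on $\mathcal{F}$ from the linearity of the conjugate given by Lemma~\ref{IPMs-conjugate}, and then reads off the $L$-smoothness inequality as holding trivially for every $L>0$. There is nothing missing; the proposition is indeed an immediate corollary of that Bregman-divergence identity.
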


\subsubsection{Examples of $\mathcal{K}(\psi, \mu, \varphi)$ simultaneously satisfying all assumptions in Section~\ref{Minimax analysis}}
\label{sec:all-ex}
In this section, we provide an example of our objective function (\ref{intro-minimax-eq}) to simultaneously satisfy all the assumptions (Assumptions~\ref{convex-concave-K} and \ref{L-smooth-relative}) for the convex-concave structure and smoothness.

Let $\psi_{0} \in \mathcal{C}(X)$ be a true predictor, and let $\nu_{0} \in \mathcal{P}(X)$ be a true distribution.
We consider an objective function $\mathcal{K}_{1}:\Spsi \times \Smu \times \Svar \to \overline{\mathbb{R}}$ defined as
\[
\begin{split}
\mathcal{K}_{1}(\psi, \mu, \varphi)
&
:=\frac{1}{2}\int (\psi - \psi_0)^2 d\mu + \frac{\gamma}{2} \left\| \psi \right\|_{\mathcal{H}_{2\sqrt{2}\sigma}}^{2}  + \frac{\gamma}{2} \left\| \mu \right\|_{\mathcal{H}_{\sigma}}^{\star 2}  
+ \int\varphi d\mu - \int k(\varphi) d\nu_{0},
\end{split}
\]
which corresponds to the problem (\ref{intro-minimax-eq}) with $R(\psi, \mu)=\frac{1}{2} \int (\psi- \psi_{0})^2 d\mu + \frac{\gamma}{2} \left\| \psi \right\|_{\mathcal{H}_{2\sqrt{2}\sigma}}^{2}  + \frac{\gamma}{2} \left\| \mu \right\|_{\mathcal{H}_{\sigma}}^{\star 2} $ and $J_{\nu_0}$ is either IPMs or $f$-divergences. 
Here, $\gamma>0$ is a regularization parameter, and $k:(a,b) \to \mathbb{R}$ is a convex and $C^1$-function with some $a,b \in \overline{\mathbb{R}}$, introduced to encompass more general situations including both IPMs and $f$-divergences. 
If the function $k$ takes the form $k(s)=s$, then 
the discrepancy measure $J_{\nu_0}$
corresponds to IPMs.
If the function $k$ takes the form $k(s)=f^{\star}(s)$, then 
the discrepancy measure $J_{\nu_0}$ corresponds to $f$-divergences.
Also, $(\mathcal{H}_{\sigma},\left\|\cdot \right\|_{\mathcal{H}_{\sigma}})$ is a RKHS with Gaussian kernel $K_{\sigma}(x,y)=(2\pi \sigma^2)^{-d/2}e^{-|x-y|^{2}/2\sigma^2 }$ with variance $\sigma^2$, 

We choose convex subsets $\Spsi$, $\Smu$, and $\Svar$ as
\begin{align*}
\Spsi
&:= 
\{ \psi \in \mathcal{C}^{\infty}_{0}(X) : \left\|\partial_{x}^{\alpha} \psi \right\|_{L^{\infty}(\mathbb{R}^d)} \leq C_{b} \ \text{ for all } \alpha \in \mathbb{N}_{0}^{d} \},
\\
\Smu
&:=\{ \mu \in \mathcal{P}(X) \ : \ \mu(A) \leq \mu_u(A) \text{ for all measurable sets $A$ in $X$}
\},\\
\Svar
&:=\{\varphi \in \mathcal{F} \ : \ a \leq \varphi(x) \leq b, \ x \in X \},
\end{align*}
where 
$\mathcal{C}^{\infty}_{0}(X)$ is the space of $\mathcal{C}^{\infty}$ functions with compact support in $X$, $\mathcal{F}$ is a subset in $\mathcal{C}(X)$, and $\mu_u \in \cM^+(X)$ is a non-negative measure.
Then, Proposition~\ref{all-ex-prop} is obtained if the following assumption is satisfied.

\medskip
\begin{assumption}
\label{ass-all-ex}
We assume the following:
\begin{itemize}
\item $\sigma<\frac{1}{2}$.
\item $\gamma \geq 4 C_{b}^2 C_{\sigma}^d$ where $C_{\sigma}:=\sum_{j \in \mathbb{N}_{0}}
(4\sigma^2)^{j}$.
\item $\psi_0 \in \Spsi$ and $\nu_0 \in \Smu$.
\item $k:(a,b) \to \mathbb{R}$ is $L_{k}$-smooth in the sense of the real function.
\end{itemize}
\end{assumption}
\medskip
\begin{proposition}\label{all-ex-prop}
Let Assumption~\ref{ass-all-ex} hold. 
Then, the following statements hold:
\begin{itemize}
\item[(1)] [Assumption~\ref{convex-concave-K} (i)] $(\psi, \mu) \mapsto \mathcal{K}_{1}(\psi, \mu, \varphi)$ is convex.
\item[(2)] [Assumption~\ref{convex-concave-K} (ii)] $\varphi \mapsto \mathcal{K}_{1}(\psi, \mu, \varphi)$ is concave.
\item[(3)] [Assumption~\ref{L-smooth-relative} (i)] $\psi \mapsto \mathcal{K}_{1}(\psi, \mu, \varphi)$ is $1$-smooth with respect to 
$$ \left( \frac{1}{2} \left\| \cdot \right\|_{L^2(X, \mu_u)}^{2} + \frac{\gamma}{2} \left\| \cdot \right\|_{\mathcal{H}_{2\sqrt{2}\sigma}}^{2}\right)^{1/2}.
$$
\item[(4)] [Assumption~\ref{L-smooth-relative} (ii)] $\mu \mapsto \mathcal{K}_{1}(\psi, \mu, \varphi)$ is $1$-smooth with respect to $\left(\frac{\gamma}{2} \right)^{1/2} \left\| \cdot \right\|_{\mathcal{H}_{\sigma}}^{\star}$.
\item[(5)] [Assumption~\ref{L-smooth-relative} (iii)] $\varphi \mapsto -\mathcal{K}_{1}(\psi, \mu, \varphi)$ is $L_{k}$-smooth with respect to $\| \cdot \|_{L^2(X, \mu_u)}$.
\end{itemize}
\end{proposition}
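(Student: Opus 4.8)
My plan is to split the five claims into the convexity/concavity items (1)--(2) and the smoothness items (3)--(5), reducing the only genuinely hard step to a product estimate in a Gaussian RKHS. Concavity (item (2)) is immediate: viewed as a function of $\varphi$, $\mathcal{K}_1$ is the linear term $\int\varphi\,d\mu$ plus $-\int k(\varphi)\,d\nu_0$, and since $k$ is convex the latter is concave, so the sum is concave. For joint convexity (item (1)) I would write the risk part as $R(\psi,\mu)=\int\ell(\psi,\psi_0)\,d\mu+V(\psi)+W(\mu)$ with $\ell(\psi,\psi_0)=\tfrac12(\psi-\psi_0)^2$, $V=\tfrac{\gamma}{2}\|\cdot\|_{\mathcal{H}_{2\sqrt2\sigma}}^{2}$, and $W=\tfrac{\gamma}{2}\|\cdot\|_{\mathcal{H}_\sigma}^{\star 2}$, then invoke Proposition~\ref{example-joint-convex} with $\|\cdot\|_{\mathcal{C}(X),1}=\|\cdot\|_{\mathcal{H}_\sigma}$ and $\|\cdot\|_{\mathcal{C}(X),2}=\|\cdot\|_{\mathcal{H}_{2\sqrt2\sigma}}$ (the remaining term $\int\varphi\,d\mu$ is linear, hence convex, in $\mu$, and $-\int k(\varphi)\,d\nu_0$ is constant in $(\psi,\mu)$). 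Hypothesis (i) of that proposition holds because $s\mapsto\tfrac12(s-\psi_0)^2$ is convex, and hypothesis (iii) holds because a squared Hilbert norm and a squared dual norm, scaled by $\gamma$, are $\gamma$-strongly convex. The only nontrivial condition is the $\rho$-Lipschitz bound (ii) with $\rho\le\gamma$.

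That bound amounts to estimating $\big\|\tfrac12(\psi_1-\psi_0)^2-\tfrac12(\psi_2-\psi_0)^2\big\|_{\mathcal{H}_\sigma}=\big\|\tfrac12(\psi_1-\psi_2)(\psi_1+\psi_2-2\psi_0)\big\|_{\mathcal{H}_\sigma}$ by $\rho\,\|\psi_1-\psi_2\|_{\mathcal{H}_{2\sqrt2\sigma}}$. I would first establish a product inequality for the Gaussian RKHS of the form $\|g\,w\|_{\mathcal{H}_\sigma}\le (\text{const})\,\|g\|_{\mathcal{H}_{2\sqrt2\sigma}}$ valid whenever $w$ has all derivatives bounded, and then apply it with $g=\psi_1-\psi_2$ and $w=\tfrac12(\psi_1+\psi_2-2\psi_0)$; since $\psi_1,\psi_2,\psi_0\in\Spsi$ obey $\|\partial_x^\alpha\psi\|_{L^\infty}\le C_b$, we get $\|w\|_{L^\infty}\le 2C_b$ and a constant controlled by $C_b^2\,C_\sigma^d$ with $C_\sigma=\sum_{j}(4\sigma^2)^j$. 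The hypothesis $\sigma<\tfrac12$ is precisely what makes this geometric series converge, and $\gamma\ge 4C_b^2C_\sigma^d$ then forces $\rho\le\gamma$, closing condition (ii). The product estimate itself, carried out through a Hermite/series expansion of the Gaussian kernel relating the bandwidths $\sigma$ and $2\sqrt2\sigma$, is the main obstacle; everything else in item (1) is bookkeeping.

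For the smoothness items I would compute the relevant Bregman divergences directly from Definition~\ref{Bregman divergences over measure and continuous function spaces}. For item (4), both $\int\ell(\psi,\psi_0)\,d\mu$ and $\int\varphi\,d\mu$ are linear in $\mu$ and contribute nothing, so $D_{\mathcal{K}_1(\psi,\cdot,\varphi)}(\nu|\mu)=\tfrac{\gamma}{2}\|\nu-\mu\|_{\mathcal{H}_\sigma}^{\star 2}$, which yields the smoothness asserted with respect to $(\tfrac{\gamma}{2})^{1/2}\|\cdot\|_{\mathcal{H}_\sigma}^{\star}$. For item (5), the linear term $\int\varphi\,d\mu$ again drops out of $-\mathcal{K}_1$ in the variable $\varphi$, while $\int k(\varphi)\,d\nu_0$ is of the form $I_{k,\nu_0}$ of Lemma~\ref{H-smooth-lemma}; since $k$ is $L_k$-smooth as a real function, that lemma gives $L_k$-smoothness with respect to $\|\cdot\|_{L^2(X,\nu_0)}$. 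For item (3) the $\psi$-Bregman divergence splits as $\tfrac12\|\cdot\|_{L^2(X,\mu)}^{2}+\tfrac{\gamma}{2}\|\cdot\|_{\mathcal{H}_{2\sqrt2\sigma}}^{2}$, using that $s\mapsto\tfrac12(s-\psi_0)^2$ has second derivative $1$ (so Lemma~\ref{H-smooth-lemma} applies with $L=1$) and that a squared Hilbert norm has Bregman divergence equal to the squared norm of the increment.

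The final maneuver shared by items (3) and (5) is to pass from the data measures to the dominating measure $\mu_u$: since every $\mu\in\Smu$ and $\nu_0\in\Smu$ satisfy $\mu(A),\nu_0(A)\le\mu_u(A)$, we have $\|\cdot\|_{L^2(X,\mu)}\le\|\cdot\|_{L^2(X,\mu_u)}$ and $\|\cdot\|_{L^2(X,\nu_0)}\le\|\cdot\|_{L^2(X,\mu_u)}$, which upgrades the divergence bounds to the single reference norm $\|\cdot\|_{L^2(X,\mu_u)}$ appearing in the statement. In short, items (2), (4), (5) and the Hilbert-norm half of (3) are routine Bregman-divergence computations together with the measure comparison $\mu,\nu_0\le\mu_u$, whereas item (1) concentrates all the difficulty in the Gaussian RKHS product estimate, which I expect to be the crux of the whole proof.
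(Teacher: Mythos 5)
Your proposal is correct and takes essentially the same route as the paper's proof: items (2), (4), (5) and the $L^2$ half of (3) are settled by the same direct Bregman-divergence computations combined with the domination $\mu,\nu_0\leq\mu_u$, and item (1) is reduced to Proposition~\ref{example-joint-convex} with exactly the paper's choices of $\ell$, $V$, $W$ and of the norms $\left\| \cdot \right\|_{\mathcal{H}_{\sigma}}$, $\left\| \cdot \right\|_{\mathcal{H}_{2\sqrt{2}\sigma}}$. The Gaussian-RKHS product estimate you identify as the crux is precisely how the paper closes condition (ii): it uses the derivative-series representation of the RKHS norm from \citet[Proposition 14]{chu2020smoothness} together with the Leibniz formula, Cauchy--Schwarz, and the multi-binomial theorem to obtain the Lipschitz constant $4C_b^2C_\sigma^d\leq\gamma$, with $\sigma<\tfrac{1}{2}$ guaranteeing $C_\sigma=\sum_{j}(4\sigma^2)^j<\infty$, exactly as you anticipate.
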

\begin{proof}
See Appendix~\ref{proof-all-ex}.
\end{proof}
Note that we can immediately confirm that the remaining assumptions hold for our objective function.
That is, Theorem~\ref{main-theorem-convergence} is satisfied by the setting of this section.

%
\subsection{Nonconvex-concave setting}\label{verifi-Nonconvex-concave}
In this section, we will verify Assumptions~\ref{strongly-concave-L-nonconvex} and \ref{existence-Gate-diff-nonconvex} for the nonconvex-concave setting (\ref{intro-minimax-eq-nonconvex}).
\subsubsection{Assumption~\ref{strongly-concave-L-nonconvex} (Strong convexity of $\varphi  \mapsto  J^{\star}_{\nu_{0}}(\varphi)$)}
\label{sub:strongly-concave-L-nonconvex}
Assumption~\ref{strongly-concave-L-nonconvex} requires the strong concavity for the problem (\ref{intro-minimax-eq-nonconvex}) with respect to $\varphi$. 
This requirement is equivalent to the strong convexity of discrepancy measure $J_{\nu_0}$.
We realize this by adding some regularization to the discrepancy measure.

Assume that the discrepancy measure $J_{\nu_0}$ is proper, lower semi-continuous, and convex.
We define inf-convolution $J_{\nu_0} \oplus \mathcal{I}(\mu)$ as
\begin{align*}
J_{\nu_0}\oplus \mathcal{I}(\mu):= \inf_{\xi \in \cM(X)} 
J_{\nu_0}(\xi)+\mathcal{I}(\mu-\xi) 
\end{align*}
where $\mathcal{I} : \cM(X) \to \overline{\mathbb{R}}$ is a regularizer function, which is proper, lower semi-continuous, and convex. 
Then, the following Lemma holds.

\begin{lemma}\label{strong-convex-disc}
Let $\beta >0$ and $\| \cdot \|_{\cM(X)}$ be a norm induced by an inner product in $\cM(X)$.
Then, if $\mathcal{I} : \cM(X) \to \mathbb{R}$ is $(1/\beta)$-smooth with respect to $\| \cdot \|_{\cM(X)}$, then $(J_{\nu_0} \oplus \mathcal{I})^{\star}: \cC(X) \to \mathbb{R}$ is $\beta$-strongly convex with respect to $\| \cdot \|_{\cM(X)}^{\star}$.
\end{lemma}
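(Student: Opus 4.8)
The plan is to peel off the inf-convolution by conjugate duality and then reduce everything to the standard Fenchel duality between $L$-smoothness and strong convexity of the convex conjugate.

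First I would use the identity $(J_{\nu_0}\oplus\mathcal{I})^{\star}=J_{\nu_0}^{\star}+\mathcal{I}^{\star}$, namely that the convex conjugate of an inf-convolution is the sum of the conjugates; this holds on the dual pair $\langle\cM(X),\cC(X)\rangle$ equipped with the pairing $\langle\mu,\varphi\rangle=\int\varphi\,d\mu$. This rewrites the target functional as a sum, so it suffices to show that $J_{\nu_0}^{\star}+\mathcal{I}^{\star}$ is $\beta$-strongly convex with respect to $\|\cdot\|_{\cM(X)}^{\star}$. Since $J_{\nu_0}^{\star}$ is a convex conjugate it is convex, and adding a convex functional to a $\beta$-strongly convex one preserves $\beta$-strong convexity (apply the defining inequality \eqref{SC-mu} to each summand and add). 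Hence the whole claim reduces to proving that $\mathcal{I}^{\star}$ by itself is $\beta$-strongly convex with respect to $\|\cdot\|_{\cM(X)}^{\star}$.

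For this core step I would invoke the duality between smoothness and strong convexity. Because $\mathcal{I}$ is proper, lower semi-continuous, and convex, the Fenchel--Moreau theorem gives $(\mathcal{I}^{\star})^{\star}=\mathcal{I}$. The duality principle asserts that a proper lsc convex functional $g$ is $\beta$-strongly convex with respect to a norm if and only if its conjugate $g^{\star}$ is $(1/\beta)$-smooth with respect to the dual norm; applying it to $g=\mathcal{I}^{\star}$, for which $g^{\star}=\mathcal{I}$, the hypothesis that $\mathcal{I}$ is $(1/\beta)$-smooth with respect to $\|\cdot\|_{\cM(X)}$ gives precisely that $\mathcal{I}^{\star}$ is $\beta$-strongly convex with respect to $\|\cdot\|_{\cM(X)}^{\star}$, which together with the previous paragraph completes the argument.

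The main obstacle is justifying this smoothness--strong-convexity duality in the present infinite-dimensional, non-reflexive setting, where conjugation is taken against the weak pairing with $\cC(X)$ rather than through an inner product, so that textbook Hilbert-space statements do not apply verbatim. I would establish the direction actually needed---smoothness of $\mathcal{I}$ implies strong convexity of $\mathcal{I}^{\star}$---by a convex-analytic route that avoids differentiability: from $L$-smoothness (with $L=1/\beta$) and convexity of $\mathcal{I}$ one obtains the co-coercivity estimate $\langle\mu_1-\mu_2,\varphi_1-\varphi_2\rangle\ge\tfrac{1}{L}\|\varphi_1-\varphi_2\|_{\cM(X)}^{\star 2}$ for $\varphi_i\in\partial\mathcal{I}(\mu_i)$, and by the Fenchel--Young equivalence $\varphi_i\in\partial\mathcal{I}(\mu_i)\Leftrightarrow\mu_i\in\partial\mathcal{I}^{\star}(\varphi_i)$ this is exactly $\beta$-strong monotonicity of $\partial\mathcal{I}^{\star}$, which characterizes $\beta$-strong convexity of $\mathcal{I}^{\star}$. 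The delicate points are that the subdifferentials are nonempty on the relevant sets, that the co-coercivity inequality holds for general norm/dual-norm pairs (a Baillon--Haddad-type statement, available here because the norm on $\cM(X)$ is induced by an inner product), and that all conjugates stay finite so that the duality transfers without gaps.
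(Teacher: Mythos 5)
Your proposal is correct and follows essentially the same route as the paper's proof: both rest on the identity $(J_{\nu_0}\oplus\mathcal{I})^{\star}=J_{\nu_0}^{\star}+\mathcal{I}^{\star}$, the duality that $(1/\beta)$-smoothness of $\mathcal{I}$ gives $\beta$-strong convexity of $\mathcal{I}^{\star}$ with respect to $\|\cdot\|_{\cM(X)}^{\star}$, and the fact that adding the convex functional $J_{\nu_0}^{\star}$ preserves strong convexity. The only difference is that you supply a detailed subgradient/co-coercivity justification of the smoothness--strong-convexity duality step, which the paper simply invokes as a known fact.
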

\begin{proof}
See Appendix~\ref{strong-convex-disc-app} for the proof.
\end{proof}
Thanks to this Lemma, we can attain the strong convexity by introducing the regularizer, such as the squared MMD in the RKHS with the Gaussian kernel, which corresponds to the gradient penalty \citep{gulrajani2017improved}.

%
\subsubsection{Assumption~\ref{existence-Gate-diff-nonconvex} (G\^{a}teaux differentiability with respect to $f$)}

Here, we consider the case when the source risk has the form $R(\psi, \mu)= \int \ell(\psi, \psi_0)d\mu$
where the loss function $\ell(\psi, \psi_0)$ is convex with respect to $\psi$.
Then, the minimax problem~(\ref{intro-minimax-eq-nonconvex}) is translated into 
\begin{equation}
    \min_{\psi \in \mathcal{C}(X)} 
    \min_{f \in \mathcal{C}(Z;X)} 
    \max_{\varphi \in \mathcal{C}(X)} 
    \int \ell(\psi \circ f, \psi_0 \circ f)d\mu_0 
    + \int \varphi \circ f d\mu_0 
    - J_{\nu_0}^{\star}(\varphi).
\end{equation}
It is obvious that the above objective function is G\^{a}teaux differentiable with respect to $\psi$ and $\varphi$ because the above objective function is  convex and concave for $\psi$ and $\varphi$, respectively. 
As functions $\psi$ and $\varphi$ are composed with $f$, some regularity for $\psi$ and $\varphi$ is required to hold G\^{a}teaux differentiable with respect to $f$. 

Let us consider the following general functional $\mathcal{J}_{h, \xi}:\cC(Z; X) \to \mathbb{R}$ :
\[
\mathcal{J}_{h, \xi}(f):= \int h \circ f d \xi, \quad f \in \cC(Z; X),
\]
where $h \in \cC(X)$ and $\xi \in \cP(Z)$ are fixed.
Then, the following lemma guarantees the G\^{a}teaux differentiability of $\mathcal{J}_{h, \xi}$.
\begin{lemma}\label{Gateaux-diff-f}
Assume that $h \in \mathrm{Lip}(X)$, $f \in \cC(Z;X)$, and $\mu \ll m$ where $m$ is the Lebesgue measure. 
Then, $\mathcal{J}_{h, \xi}$ is G\^{a}teaux differentiable at $f$.
Furthermore, its G\^{a}teaux differential is given by $d(\mathcal{J}_{h, \xi})_{f}$
\begin{align*}
    d(\mathcal{J}_{h, \xi})_{f}(g)
    = 
    \int (\nabla h \circ f) \cdot g d\xi.
\end{align*}
\end{lemma}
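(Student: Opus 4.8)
The plan is to compute the G\^{a}teaux differential straight from Definition~\ref{G\^{a}teaux differentials over measure and continuous function spaces}, by expressing the difference quotient as an integral over $Z$ and then passing the limit inside the integral via the dominated convergence theorem. Concretely, for a direction $g$ I would write
\[
\frac{\mathcal{J}_{h,\xi}(f+\epsilon g)-\mathcal{J}_{h,\xi}(f)}{\epsilon}
=\int \frac{h(f(z)+\epsilon g(z))-h(f(z))}{\epsilon}\, d\xi(z),
\]
and the whole argument reduces to showing that the integrand converges $\xi$-almost everywhere to $(\nabla h\circ f)(z)\cdot g(z)$ and is uniformly dominated by a $\xi$-integrable function.

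First I would handle the differentiability of $h$. Since $h\in\mathrm{Lip}(X)$, I would invoke a Lipschitz extension (McShane--Whitney) to get $\tilde h:\mathbb{R}^d\to\mathbb{R}$ with the same Lipschitz constant $\rho$; this makes $h(f(z)+\epsilon g(z))$ well defined even when $f(z)+\epsilon g(z)$ leaves $X$, and it lets me apply Rademacher's theorem to $\tilde h$, yielding a Lebesgue-null set $N\subset\mathbb{R}^d$ off which $\tilde h$ (hence $h$ on the interior of $X$) is differentiable with gradient $\nabla h$. For each fixed $z$ with $f(z)\notin N$, the difference quotient then converges to $\nabla h(f(z))\cdot g(z)$ as $\epsilon\to+0$ by the definition of differentiability.

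The absolute-continuity hypothesis enters precisely to make this pointwise convergence hold $\xi$-almost everywhere. Interpreting $\mu=f_\sharp\xi$, I have
\[
\xi\big(f^{-1}(N)\big)=(f_\sharp\xi)(N)=0,
\]
because $f_\sharp\xi\ll m$ and $m(N)=0$; thus for $\xi$-a.e.\ $z$ the point $f(z)$ is a differentiability point of $h$. For the domination, the Lipschitz bound $|h(f(z)+\epsilon g(z))-h(f(z))|\le \rho\,\epsilon\,|g(z)|$ gives that the integrand is bounded by $\rho\|g\|_\infty$, a constant that is integrable since $\xi$ is a probability measure. The dominated convergence theorem then justifies exchanging limit and integral, producing $d(\mathcal{J}_{h,\xi})_f(g)=\int(\nabla h\circ f)\cdot g\,d\xi$.

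The main obstacle is exactly the almost-everywhere convergence: a Lipschitz $h$ is differentiable only off a Lebesgue-null set, so without $f_\sharp\xi\ll m$ the image $f(z)$ could concentrate on the bad set $N$ where no gradient exists and the formula would fail. Reconciling Rademacher's theorem with the pushforward measure is therefore the crux of the argument, and the Lipschitz extension of $h$ across $\partial X$ is the accompanying technical point needed to keep the difference quotient meaningful for all small $\epsilon$.
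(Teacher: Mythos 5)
Your proposal is correct and follows essentially the same route as the paper's proof: a.e.\ differentiability of the Lipschitz function $h$ (Rademacher), transfer of the Lebesgue-null exceptional set to a $\xi$-null set via $f_\sharp\xi \ll m$, and passage of the limit under the integral. The paper states this argument very tersely, while you additionally supply the supporting technicalities it leaves implicit (the McShane--Whitney extension of $h$ beyond $X$ and the explicit dominated-convergence bound $\rho\,\|g\|_\infty$), which strengthens rather than changes the argument.
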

\begin{proof}
See Appendix~\ref{Gateaux-diff-f-app} for the proof.
\end{proof}
%
Thanks to this Lemma, we can attain the Lipschitzness interpreted as applying the spectral normalization~\citep{miyato2018spectral}, a widely-used stabilization technique for GANs.

%
\subsubsection{Examples of $\mathcal{G}(\psi, f, \varphi)$ simultaneously satisfying all assumptions in Section~\ref{Minimax analysis-nonconvex}}
In this section, we provide an example of our objective function (\ref{intro-minimax-eq-nonconvex}) to simultaneously satisfy all the assumptions (Assumptions~\ref{strongly-concave-L-nonconvex}, \ref{existence-Gate-diff-nonconvex} and \ref{L-smooth-nonconvex}).

Let $\psi_{0} \in \mathcal{C}(X)$ be a true predictor, $\nu_{0} \in \mathcal{P}(X)$ be a true distribution, and $\beta>0$. 
We then consider an objective function $\mathcal{G}_1:\Spsi \times \Sf \times \Svar \to \overline{\mathbb{R}}$ defined as
\[
\begin{split}
\mathcal{G}_{1}(\psi, f, \varphi)
&
:=
\frac{1}{2}\int (\psi \circ f - \psi_{0} \circ f)^2 d\xi_0 
+ 
\int \varphi \circ f d\xi_{0} 
- \int k(\varphi) d\nu_{0} 
- \frac{\beta}{2} \| \varphi \|_{\mathcal{H}_{\sigma}}^{2},
\end{split}
\]
which corresponds to the problem (\ref{intro-minimax-eq-nonconvex}) with $R(\psi, \mu)=\int (\psi- \psi_{0})^2 d\mu$, and discrepancy measure $J_{\nu_0}$ is replaced with inf-convolution $J_{\nu_0} \oplus (\frac{1}{2 \beta} \| \cdot \|_{\mathcal{H}_{\sigma}^{ \star 2}})$ where $J_{\nu_0}$ is either IPMs or $f$-divergences. 
Here, $(\mathcal{H}_{\sigma},\left\|\cdot \right\|_{\mathcal{H}_{\sigma}})$ is a RKHS with Gaussian kernel $K_{\sigma}(x,y)=(2\pi \sigma^2)^{-d/2}e^{-|x-y|^{2}/2\sigma^2 }$ with variance $\sigma^2$.
In the same way of Section~\ref{sec:all-ex}, we introduce $k:(a,b) \to \mathbb{R}$, which is a convex and $C^1$-function with some $a,b \in \overline{\mathbb{R}}$, to encompass more general situations including both IPMs and $f$-divergences. 

We choose norms $\| \cdot \|_{\Spsi}$, 
$\| \cdot \|_{\Sf}$, and 
$\| \cdot \|_{\Svar}$ as 
\begin{align}
\| \cdot \|_{\Spsi}:= \|\cdot \|_{H^{1}(X)},
\quad
\| \cdot \|_{\Sf}:= \|\cdot \|_{L^{2}(Z;X, \xi_0)},
\quad
\| \cdot \|_{\Svar}:= \|\cdot \|_{\mathcal{H}_{\sigma}},
\end{align}
and subset $\Spsi$, $\Spsic$, $\Sf$, $\Sfc$, $\Svar$, and $\Svarc$ as
\begin{align*}
\Spsi
&:= 
\overline{
\left\{ \psi \in H^{1}(X) : \psi \text{ and } \nabla \psi \text{ are Lipschitz continuous} 
\right\}
}^{\| \cdot \|_{\Spsi}}, 
\\
\Spsic
&:=
\left\{ 
\psi \in \Spsi : \mathrm{Lip}(\psi), \mathrm{Lip}(\nabla \psi) \leq C_1, 
\quad 
\sup_{x\in X}|\psi(x)|, \sup_{x\in X}\left|\nabla \psi(x)\right| \leq C_2
\right\},
\\
\Sf
&:=\overline{
\left\{ f \in \cC(Z;X) : \|f\|_{L^2(Z;X, \xi_0)}<\infty, \ f_{\sharp}\xi_0 \ll m, \
\sup_{x}\left| \frac{d(f_{\sharp}\xi_0)}{dm} \right| < \infty \right\}
}^{\| \cdot \|_{\Sf}},
\\
\Sfc
&:=\left\{f \in \Sf : \sup_{x}\left| \frac{d(f_{\sharp}\xi_0)}{dm} \right| \leq C_3 \right\},
\\
\Svar
&:= 
\overline{
\left\{ \varphi \in \cC_{0}^{\infty}(X) \cap \mathcal{H}_{\sigma} \cap \mathcal{F} : \varphi \text{ and } \nabla \varphi \text{ are Lipschitz continuous} 
\right\}
}^{\| \cdot \|_{\Svar}}, 
\\
\Svarc
&:=
\left\{ 
\varphi \in \Svar  : \mathrm{Lip}(\nabla \varphi) \leq C_4, 
\ a \leq \varphi(x) \leq b, \ x \in X
\right\},
\end{align*}
with some constants $C_1,C_2,C_3,C_4>0$, where $\mathrm{Lip}(\psi)$ is the Lipschitz constant for function $\psi$, and $\mathcal{F}$ is a subset in $\mathcal{C}(X)$.
Then, Proposition~\ref{all-ex-nonconvex-prop} is obtained if the following assumption is satisfied.
\medskip
\begin{assumption}
\label{ass-all-ex-nonconvex}
We assume the following:
\begin{itemize}
\item $\psi_0 \in \Spsic$.
\item The derivative $k^{\prime}$ of $k$ is $L_{k}$-Lipschitz continuous.
\item $\nu \ll m$, and $\sup_{x \in X} \left|\frac{d \nu_0}{dm}(x) \right| < \infty$.
\end{itemize}
\end{assumption}
\medskip
\begin{proposition}\label{all-ex-nonconvex-prop}
Let Assumption~\ref{ass-all-ex-nonconvex} hold. 
Then, the following statements hold:
\begin{itemize}
\item[(1)] [Assumption~\ref{strongly-concave-L-nonconvex}] $\varphi \mapsto \mathcal{G}_{1}(\psi, f, \varphi)$ is $\beta$-strongly concave with respect to $\|\cdot \|_{\Svar}$.
\item[(2)] [Assumption~\ref{existence-Gate-diff-nonconvex}] $\mathcal{G}_{1}(\psi, f, \varphi)$ is G\^{a}teaux differentiable for each variable.
\item[(3)] [Assumption~\ref{L-smooth-nonconvex}] $\mathcal{G}_{1}(\psi, f, \varphi)$ satisfies the condition~(\ref{L-smooth-eq-nonconvex}).
\end{itemize}
\end{proposition}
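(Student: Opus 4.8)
The plan is to verify the three statements in turn, treating (1) and (2) as warm-ups and devoting the main effort to (3). Throughout I will exploit the explicit quadratic/affine structure of $\mathcal{G}_1$ in each slot and compute the three gradients through the Riesz identification made available by Assumption~\ref{closed-subspaces-nonconvex} together with Lemma~\ref{Gateaux-diff-f}.

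For (1), among the $\varphi$-dependent terms, $\int\varphi\circ f\,d\xi_0$ is affine, $-\int k(\varphi)\,d\nu_0$ is concave because $k$ is convex (pointwise convexity plus integration makes $\varphi\mapsto\int k(\varphi)d\nu_0$ convex), and $-\frac{\beta}{2}\|\varphi\|_{\mathcal{H}_\sigma}^2$ is exactly $\beta$-strongly concave with respect to $\|\cdot\|_{\Svar}=\|\cdot\|_{\mathcal{H}_\sigma}$; summing a $\beta$-strongly concave function with concave and affine functions preserves $\beta$-strong concavity, giving Assumption~\ref{strongly-concave-L-nonconvex}. For (2), the $\psi$-slot is quadratic and the $\varphi$-slot is affine plus a smooth quadratic plus the $C^1$ term $\int k(\varphi)d\nu_0$ (dominated convergence and $k\in C^1$), so differentiability there is immediate. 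The only nontrivial slot is $f$: writing the $f$-dependent part as $\int H\circ f\,d\xi_0+\int\varphi\circ f\,d\xi_0$ with $H(x)=\frac12(\psi(x)-\psi_0(x))^2$, I apply Lemma~\ref{Gateaux-diff-f} to each composition, which needs $H$ and $\varphi$ Lipschitz and $f_\sharp\xi_0\ll m$; these hold since $\psi,\psi_0\in\Spsic$ are bounded and Lipschitz (so $H$ is Lipschitz on the compact range), $\varphi\in\Svarc$ is Lipschitz, and $f\in\Sfc$ has absolutely continuous pushforward.

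For (3) I first record the gradients. With $\mu_f=f_\sharp\xi_0$, the $\psi$-gradient represents $\eta\mapsto\int_X(\psi-\psi_0)\eta\,d\mu_f$ in $\Spsi=H^1(X)$; the $f$-gradient equals $\Xi\circ f$ in $L^2(Z;X,\xi_0)$ with $\Xi=(\psi-\psi_0)(\nabla\psi-\nabla\psi_0)+\nabla\varphi$ (via Lemma~\ref{Gateaux-diff-f}); and the $\varphi$-gradient equals $m_f-r_\varphi-\beta\varphi$ in $\mathcal{H}_\sigma$, where $m_f=\int K_\sigma(f(z),\cdot)\,d\xi_0$ is the kernel mean embedding and $r_\varphi=\int k'(\varphi(x))K_\sigma(x,\cdot)\,d\nu_0(x)$, both read off from the reproducing property. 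With these, the nine estimates split into three groups. Conditions (c) and (h) are trivial since the $\psi$- and $\varphi$-gradients do not depend on $\varphi$ and $\psi$, respectively. Conditions (a), (d), (e) follow from boundedness/Lipschitzness on $\Spsic,\Sfc$ together with the density bound $d\mu_f/dm\le C_3$: (a) from linearity in $\psi$ and $\|\cdot\|_{L^2}\le\|\cdot\|_{H^1}$, (d) from Lipschitzness of $\Xi$ (a product of bounded Lipschitz factors plus $\nabla\varphi$ with $\mathrm{Lip}(\nabla\varphi)\le C_4$), and (e) from an $L^2(X)$ bound on $\Xi_{\psi_1}-\Xi_{\psi_2}$ after transferring to Lebesgue measure through $C_3$. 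Condition (f) uses the continuous embedding $\|\cdot\|_{H^1(X)}\le C_\sigma\|\cdot\|_{\mathcal{H}_\sigma}$ of the Gaussian RKHS; condition (g) uses $\mathrm{Lip}(k')\le L_k$ and the reproducing-kernel bound $\|\varphi\|_\infty\le K_\sigma(x,x)^{1/2}\|\varphi\|_{\mathcal{H}_\sigma}$; and condition (i) uses that $x\mapsto K_\sigma(x,\cdot)$ is Lipschitz into $\mathcal{H}_\sigma$ (from $1-e^{-t}\le t$), yielding $\|m_{f_1}-m_{f_2}\|_{\mathcal{H}_\sigma}\le L_K\|f_1-f_2\|_{L^2(\xi_0)}$.

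The hard part will be condition (b), the Lipschitz dependence of the $\psi$-gradient on $f$, because the operator norm is taken over all directions $\eta\in\Spsi$ and a generic $H^1$ function is not Lipschitz, so one cannot bound $|\eta\circ f_1-\eta\circ f_2|$ by $\mathrm{Lip}(\eta)|f_1-f_2|$. The plan is to split $d\cG(\cdot,f_1,\varphi)_\psi(\eta)-d\cG(\cdot,f_2,\varphi)_\psi(\eta)$ into a piece controlled by $\mathrm{Lip}(\psi-\psi_0)$ and the piece $\int(\psi-\psi_0)\circ f_1\,[\eta\circ f_1-\eta\circ f_2]\,d\xi_0$; for the latter I apply the fundamental theorem of calculus along the segment $\gamma_t=(1-t)f_2+tf_1$ and Cauchy--Schwarz to get $\int_Z|\eta\circ f_1-\eta\circ f_2|\,d\xi_0\le\|f_1-f_2\|_{L^2(\xi_0)}\big(\int_0^1\int_X|\nabla\eta|^2\,d(\gamma_t)_\sharp\xi_0\,dt\big)^{1/2}$, and then bound $d(\gamma_t)_\sharp\xi_0/dm\le C_3$ to convert the inner integral into $C_3\|\nabla\eta\|_{L^2(X)}^2\le C_3\|\eta\|_{H^1}^2$, closing the estimate. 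The delicate point is exactly this uniform density bound for the interpolants $\gamma_t$, which is where the convexity of $\Sfc$ enters; I would verify that the segment remains inside $\Sfc$ so that the bound $C_3$ applies along it.
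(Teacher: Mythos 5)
Your items (1) and (2), and most of your item (3), are correct and essentially retrace the paper's own proof: you compute the three G\^{a}teaux differentials explicitly, dispose of (c) and (h) because the $\psi$- and $\varphi$-gradients do not involve $\varphi$ and $\psi$, and verify (a), (d), (e), (f), (g) from the constants $C_1,\dots,C_4$, the pushforward density bound $C_3$, and the Gaussian-RKHS norm expansion. Your treatment of (i) via Lipschitz continuity of $x\mapsto K_{\sigma}(x,\cdot)$ into $\mathcal{H}_{\sigma}$ and the kernel mean embedding is actually cleaner than the paper's, which instead tests against $\phi$ in the unit ball of $\mathcal{H}_{\sigma}$ and invokes its Lemma~\ref{Lip-L2-lemma}.

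The genuine gap is in (b), at precisely the step you flagged as delicate. Your argument needs $d\bigl((\gamma_t)_{\sharp}\xi_0\bigr)/dm\le C_3$ uniformly along the segment $\gamma_t=(1-t)f_2+tf_1$, justified by ``the segment remains inside $\Sfc$.'' But the pushforward $f\mapsto f_{\sharp}\xi_0$ is not affine in $f$, and membership in $\Sfc$ is not preserved under convex combinations: take $Z=X=[0,1]$, $\xi_0$ the Lebesgue measure, $f_1(z)=z$ and $f_2(z)=1-z$; both have pushforward density $1$, yet $\gamma_{1/2}\equiv 1/2$ has pushforward $\delta_{1/2}$, and more generally $(\gamma_t)_{\sharp}\xi_0$ has density $1/|2t-1|\to\infty$ as $t\to 1/2$. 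For this pair one can check (taking $|\nabla\eta|^2$ an $L^2$-normalized bump of width $\epsilon$ at $1/2$) that $\int_0^1\int_X|\nabla\eta|^2\,d(\gamma_t)_{\sharp}\xi_0\,dt\gtrsim\log(1/\epsilon)$ is unbounded over the unit ball of $H^{1}$, so the quantity produced by your Cauchy--Schwarz step is genuinely uncontrolled; the proof cannot be closed as written (this also shows the $\Sfc$ of this section is not in fact a convex set, so the convexity you appeal to is unavailable). For comparison, the paper avoids interpolants entirely in (b): it bounds $|\eta\circ f_1-\eta\circ f_2|\le\mathrm{Lip}(\eta)\,|f_1-f_2|$ and then controls $\|\nabla\eta\|_{L^{\infty}(X)}$ through Lemma~\ref{Lip-L2-lemma} (a step which, as your own objection suggests, presumes more regularity of the direction $\eta$ than the $H^{1}$ unit ball provides). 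If you want a repair of your approach that uses only the density bounds for $f_1$ and $f_2$ themselves, replace the fundamental-theorem-of-calculus step by the Haj\l{}asz pointwise inequality $|\eta(x)-\eta(y)|\le C_d\,|x-y|\,\bigl(M|\nabla\eta|(x)+M|\nabla\eta|(y)\bigr)$, where $M$ is the Hardy--Littlewood maximal operator: composing with $f_1,f_2$, applying Cauchy--Schwarz, the bound $d(f_{i\,\sharp}\xi_0)/dm\le C_3$, and the $L^{2}$-boundedness of $M$ yields exactly the estimate (b) with a constant of order $C_2\,C_3^{1/2}$, uniformly over $\|\eta\|_{H^{1}(X)}\le 1$.
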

\begin{proof}
See Appendix~\ref{all-ex-nonconvex-prop-app} for the proof.
\end{proof}
%
As well as Section~\ref{sec:all-ex}, the remaining assumptions hold for our objective function.
Therefore, Theorem~\ref{main-theorem-convergence-nonconvex} is satisfied by the setting of this section.

%
%
\subsection{Interpretations of our analysis}
Throughout Sections~\ref{verifi-Convex-concave} and \ref{verifi-Nonconvex-concave}, 
we have verified that certain objective functions for ideal settings of GANs and UDAs satisfy the sufficient conditions 
for the convergences discussed in Section~\ref{sec:Minimax analysis}. 
Both objective functions for GANs and UDAs involve the discrepancy measure, and its convex conjugate need to be strongly convex and $L$-smooth. 

An example for achieving strong convexity is through the inf-convolution with a discrepancy measure $J_{\nu_0}$ and a regularizer such as the squared MMD $\| \cdot \|_{\mathcal{H}_{\sigma}}^{\star 2}$ in the RKHS $\mathcal{H}_{\sigma}$ with Gaussian kernel $K_{\sigma}(x,y)=(2\pi \sigma^2)^{-d/2}e^{-|x-y|^{2}/2\sigma^2 }$ having variance $\sigma^2$ (Lemma~\ref{strong-convex-disc}). 
The convex conjugate of this inf-convolution can be expressed as
\[
(J_{\nu_0} \oplus \| \cdot \|_{\mathcal{H}_{\sigma}}^{\star 2})^{\star}(\varphi)
= J_{\nu_0}^{\star}(\varphi) +  \| \varphi \|_{\mathcal{H}_{\sigma} }^2.
\]
Also, the RKHS norm $\|\varphi \|_{\mathcal{H}_{\sigma}}$ in this equation is represented as (\citet[Proposition 14]{chu2020smoothness}) 
\[
\left\| \varphi \right\|_{\mathcal{H}_{\sigma}}^2
=
\sum_{k=0}^{\infty}(\frac{1}{2}\sigma^2)^{k} 
\sum_{|\alpha|=k}
\frac{1}{\alpha!}
\left\| \partial^{\alpha}_{x} \varphi \right\|_{L^2}^2,
\]
and minimizing this RKHS norm involves constraining the gradient of discriminator to be small.
This can be interpreted as applying gradient penalties \citep{gulrajani2017improved}, common stabilization techniques in adversarial training, to penalize gradients with large norm values.
Note that the gradient penalty \citep{gulrajani2017improved} is a regularization technique to add the gradient norm 
$
\mathbb{E}_{x \sim \mathbb{P}}[|\nabla \varphi(x) - 1 |^2]
$
to the discriminator's loss function.

On the other hand, when considering the discrepancy measure as IPMs, the convex conjugate of IPMs is given by
\[
J_{\IPM, \nu_0}^{\star}(\varphi)= \int \varphi d \nu_{0} + \chi\{ \varphi \in \mathcal{F} \},
\]
which is $L$-smoothness for $\varphi \in \mathcal{F}$ (Lemma~\ref{IPMs-conjugate}). 
The function class $\mathcal{F}$ should be the subset of Lipschitz continuous function spaces $\mathrm{Lip}(X)$ due to the G\^{a}teaux differentiability of objective functions in the nonconvex-concave problem (\ref{Minimax-G-eq}) (Lemma~\ref{Gateaux-diff-f}). 
The restriction of $\mathcal{F} \subset \mathrm{Lip}(X)$ can be interpreted as applying the spectral normalization \citep{miyato2018spectral}, widely-used stabilization technique, to enforce the discriminator to be Lipschitz continuous. 
The spectral normalization \citep{miyato2018spectral} is a normalization technique for weights of neural networks so that the Lipschitz norm $\|\varphi \|_{\mathrm{Lip}}$ of the discriminator is bounded above by 1.

%

\section{Conclusion} 

We provided the rigorous framework for the convergence analysis of the minimax problem in the infinite-dimensional spaces of continuous functions and probability measures.
We discussed GANs and UDAs comprehensively and interpreted the assumptions for the convergences as stabilization techniques.
\section*{Acknowledgments}
The research was jointly funded by AISIN and AISIN SOFTWARE.
%
\bibliography{jmlr.bbl}
%
%
%
%
%
\newpage
\appendix
\part*{Appendix}

\section{Proof of Theorem \ref{main-theorem-convergence}}\label{Appendix1}

Before the proof of Theorem \ref{main-theorem-convergence}, we review the three-point inequality.

\paragraph{Three-point inequality}
The three-point inequality is a key ingredient for the proof of Theorem \ref{main-theorem-convergence}, which  was first introduced by \cite{chen1993convergence}.
We introduce the three-point inequality in the space of measures and in continuous function spaces without the proof. See \citet{aubin2022mirror} for the proof.
\begin{lemma}[Three-point inequality for the space of measures]
    \label{Three-point-inequality-measure}
    Let $S_\cM \subset \mathcal{M}(X)$,
    and let $G:\mathcal{M}(X) \to \overline{\mathbb{R}}$ be a proper, lower semi-continuous, and convex function.
    Let $\alpha_{\cM}>0$.
    For a given $\mu\in\cM(X)$, let
    \begin{equation*}
        \overline{\nu}
        := \underset{\nu \in S_\cM}{\mathrm{argmin}} 
        \left\{ G(\nu) + \frac{1}{2 \alpha_{\cM}} \|\nu-\mu\|^2_{\cM(X)} \right\},
    \end{equation*}
    where $\|\cdot \|_{\cM(X)}$ is a norm induced by inner products in $\cM(X)$.
    Then, 
    \begin{equation*}
        G(\nu) + \frac{1}{2 \alpha_{\cM}} \|\nu-\mu\|^2_{\cM(X)} 
        \geq G(\overline{\nu}) + \frac{1}{2 \alpha_{\cM}} \|\overline{\nu}-\mu\|^2_{\cM(X)} + \frac{1}{2 \alpha_{\cM}} \|\nu-\overline{\nu}\|^2_{\cM(X)}
        \quad\text{for all}~ \nu\in S_\cM.
    \end{equation*}
\end{lemma}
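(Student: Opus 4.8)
The plan is to recognize the minimized objective as a strongly convex functional and to invoke the standard quadratic growth estimate that holds at the minimizer of a strongly convex functional over a convex set. Write $h(\nu) := G(\nu) + \frac{1}{2\alpha_{\cM}}\|\nu - \mu\|^2_{\cM(X)}$, so that $\overline{\nu} = \operatorname{argmin}_{\nu \in S_\cM} h(\nu)$, and observe that the claimed inequality is exactly $h(\nu) \geq h(\overline{\nu}) + \frac{1}{2\alpha_{\cM}}\|\nu - \overline{\nu}\|^2_{\cM(X)}$ for all $\nu \in S_\cM$, once the definition of $h$ is expanded on both sides.

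First I would isolate the only place where the inner-product structure enters: for a norm induced by an inner product, the identity
\[
\|(1-t)a + t b\|^2_{\cM(X)} = (1-t)\|a\|^2_{\cM(X)} + t\|b\|^2_{\cM(X)} - t(1-t)\|a - b\|^2_{\cM(X)}
\]
holds for all $a, b \in \cM(X)$ and $t \in [0,1]$, by a direct expansion of the associated bilinear form. Applying this with $a = \overline{\nu} - \mu$ and $b = \nu - \mu$, and noting $(1-t)a + tb = ((1-t)\overline{\nu} + t\nu) - \mu$, shows that the quadratic part $\nu \mapsto \frac{1}{2\alpha_{\cM}}\|\nu - \mu\|^2_{\cM(X)}$ is $(1/\alpha_{\cM})$-strongly convex in the sense of \eqref{SC-mu} (indeed with equality). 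Since $G$ is convex, the sum $h$ is $(1/\alpha_{\cM})$-strongly convex on $S_\cM$.

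Next I would run the standard interpolation argument. Fix $\nu \in S_\cM$; if $G(\nu) = +\infty$ the left-hand side is $+\infty$ and the inequality is trivial, so assume $G(\nu) < \infty$. For $t \in (0,1]$ set $\nu_t := (1-t)\overline{\nu} + t\nu$, which lies in $S_\cM$ by convexity of the feasible set. Strong convexity gives
\[
h(\nu_t) \leq (1-t)h(\overline{\nu}) + t\,h(\nu) - \frac{t(1-t)}{2\alpha_{\cM}}\|\nu - \overline{\nu}\|^2_{\cM(X)},
\]
while minimality of $\overline{\nu}$ gives $h(\overline{\nu}) \leq h(\nu_t)$. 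Combining these, cancelling $h(\overline{\nu})$, dividing by $t > 0$, and letting $t \to 0^+$ yields $h(\overline{\nu}) \leq h(\nu) - \frac{1}{2\alpha_{\cM}}\|\nu - \overline{\nu}\|^2_{\cM(X)}$, which is precisely the asserted three-point inequality after substituting the definition of $h$.

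The main obstacle is structural rather than computational: the interpolation step needs $S_\cM$ to be convex so that $\nu_t \in S_\cM$, which is the hypothesis implicitly in force when the lemma is applied (the feasible sets $\Smu$, $\Spsi$, $\Svar$ are taken convex throughout Section~\ref{Minimax analysis}). The only remaining care is bookkeeping for extended-real values of $G$, handled by discarding the trivial case $G(\nu)=+\infty$ and using that $\overline{\nu}$, as a minimizer of the proper functional $h$, has $h(\overline{\nu})$ finite. The identical argument transcribes verbatim to the continuous-function spaces, with $\|\cdot\|_{\cC(X)}$ replacing $\|\cdot\|_{\cM(X)}$.
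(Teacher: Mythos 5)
Your proof is correct. Note that the paper itself does not prove this lemma at all: it states it without proof and defers to \citet{aubin2022mirror}, so there is no in-paper argument to compare against; your proposal supplies a self-contained proof. Your route—observing that the quadratic term satisfies the strong-convexity inequality \eqref{SC-mu} with equality (by expanding the inner product), so that $h(\nu)=G(\nu)+\tfrac{1}{2\alpha_{\cM}}\|\nu-\mu\|^2_{\cM(X)}$ is $(1/\alpha_{\cM})$-strongly convex, then combining the interpolation bound $h(\nu_t)\leq(1-t)h(\overline{\nu})+t\,h(\nu)-\tfrac{t(1-t)}{2\alpha_{\cM}}\|\nu-\overline{\nu}\|^2_{\cM(X)}$ with minimality $h(\overline{\nu})\leq h(\nu_t)$ and letting $t\to 0^+$—is also slightly different from the classical Chen--Teboulle-style derivation, which invokes a subgradient optimality condition at $\overline{\nu}$ together with the Pythagorean expansion $\tfrac12\|\nu-\mu\|^2=\tfrac12\|\overline{\nu}-\mu\|^2+\langle\overline{\nu}-\mu,\nu-\overline{\nu}\rangle+\tfrac12\|\nu-\overline{\nu}\|^2$. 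Your argument has the advantage of avoiding subdifferential calculus entirely, which is a genuine simplification in this infinite-dimensional setting where existence of subgradients of $G$ would otherwise need justification. You also correctly flag the one gap in the lemma as stated: convexity of $S_{\cM}$ is not among its hypotheses but is indispensable (without it the inequality fails already for a two-point feasible set with $G\equiv 0$), and it is indeed satisfied by the sets $\Smu$, $\Spsi$, $\Svar$ to which the lemma is applied in the paper. Your bookkeeping for extended-real values ($h(\overline{\nu})$ is finite once some $\nu\in S_{\cM}$ has $G(\nu)<\infty$, so the cancellation is legitimate) is also sound.
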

%
\medskip
\begin{lemma}[Three-point inequality for continuous function space]
    \label{Three-point-inequality-function}
    Let $S_\cC \subset \mathcal{C}(X)$, 
    and let $F:\mathcal{C}(X) \to \overline{\mathbb{R}}$ be a proper, lower semi-continuous, and convex function.
    Let $\alpha_{\cC}>0$.
    For a given $f\in\cC(X)$, let
    \begin{equation*}
        \overline{g}:=
        \underset{g \in S_\cC }{\mathrm{argmin}}
        \left\{ F(g) + \frac{1}{2 \alpha_{\cC}} \|g-f\|_{\cC(X)}^2 \right\}.
    \end{equation*}
    where $\|\cdot \|_{\cC(X)}$ is a norm induced by inner products in $\cC(X)$.
    Then,
    \begin{equation*}
        F(g) + \frac{1}{2 \alpha_{\cC}} \|g-f\|_{\cC(X)}^2 \geq F(\overline{g}) + \frac{1}{2 \alpha_{\cC}} \|\overline{g}-f\|_{\cC(X)}^2 + \frac{1}{2 \alpha_{\cC}} \|g-\overline{g} \|_{\cC(X)}^2
        \quad\text{for all}~ g\in S_\cC.
    \end{equation*}
\end{lemma}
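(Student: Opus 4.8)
The plan is to reproduce the classical Chen--Teboulle three-point argument, exploiting the two structural features we are handed: that $\|\cdot\|_{\cC(X)}$ is induced by an inner product, which I denote $\langle\cdot,\cdot\rangle_{\cC(X)}$, and that $F$ is convex. First I would dispose of the trivial case: if $F(g)=+\infty$ the left-hand side is $+\infty$ and there is nothing to prove, so I may assume $g\in S_\cC$ with $F(g)<\infty$; likewise $F(\overline g)<\infty$ since $\overline g$ minimizes a proper functional. The backbone is then the exact inner-product identity (law of cosines)
\[
\tfrac{1}{2\alpha_\cC}\|g-f\|_{\cC(X)}^2
=\tfrac{1}{2\alpha_\cC}\|g-\overline g\|_{\cC(X)}^2
+\tfrac{1}{2\alpha_\cC}\|\overline g-f\|_{\cC(X)}^2
+\tfrac{1}{\alpha_\cC}\langle g-\overline g,\ \overline g-f\rangle_{\cC(X)},
\]
which holds precisely because the norm comes from an inner product. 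Comparing this with the target inequality, the whole statement collapses to the single variational inequality
\[
\tfrac{1}{\alpha_\cC}\langle \overline g-f,\ g-\overline g\rangle_{\cC(X)}\ \ge\ F(\overline g)-F(g)\qquad\text{for all }g\in S_\cC .
\]

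To obtain this variational inequality I would invoke the first-order optimality of $\overline g$. Assuming $S_\cC$ convex (as it is in every instance where the lemma is applied, e.g.\ the compact convex sets of Definition~\ref{mirror-descent}), for $g\in S_\cC$ and $t\in(0,1)$ the point $\overline g+t(g-\overline g)$ lies in $S_\cC$, so the defining minimality of $\overline g$ gives that the objective at $\overline g+t(g-\overline g)$ is no smaller than at $\overline g$. Dividing that difference by $t$ and letting $t\to+0$, the quadratic part contributes exactly $\tfrac{1}{\alpha_\cC}\langle \overline g-f,\ g-\overline g\rangle_{\cC(X)}$ (again by the inner-product structure), while the $F$-part contributes the Gâteaux directional derivative $dF_{\overline g}(g-\overline g)$ of Section~\ref{Notation}, yielding $dF_{\overline g}(g-\overline g)+\tfrac{1}{\alpha_\cC}\langle \overline g-f,\ g-\overline g\rangle_{\cC(X)}\ge 0$. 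Finally, convexity of $F$ supplies the standard subgradient-type bound $dF_{\overline g}(g-\overline g)\le F(g)-F(\overline g)$: the difference quotient $t\mapsto \frac{F(\overline g+t(g-\overline g))-F(\overline g)}{t}$ is nondecreasing in $t$ and equals $F(g)-F(\overline g)$ at $t=1$, so its limit as $t\to+0$ is no larger. Chaining these two facts gives the variational inequality, and substituting it into the law-of-cosines identity and adding $F(g)$ to both sides produces the claimed three-point inequality.

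The hard part will not be the algebra but the infinite-dimensional regularity bookkeeping. One must justify that the Gâteaux derivative $dF_{\overline g}(g-\overline g)$ exists in $\exR$ and that the limit interchange in the optimality step is legitimate; this is exactly where properness, lower semicontinuity, and convexity of $F$ are essential, since convexity makes the difference quotients monotone so that the one-sided directional derivative exists and the bound $dF_{\overline g}(g-\overline g)\le F(g)-F(\overline g)$ holds even when the derivative is only one-sided (see \citet[Lemma~7.14]{aliprantis2006infinite}). A secondary point I would state explicitly is the convexity of $S_\cC$, which is what licenses feeding the convex combination $\overline g+t(g-\overline g)$ into the minimality of $\overline g$; without it the variational inequality, and hence the whole lemma, can fail. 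The argument for Lemma~\ref{Three-point-inequality-measure} over $\mathcal{M}(X)$ is verbatim the same, with $\langle\cdot,\cdot\rangle_{\cM(X)}$ in place of $\langle\cdot,\cdot\rangle_{\cC(X)}$.
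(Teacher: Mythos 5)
Your proof is correct, but there is no in-paper argument to compare it against: the paper states this lemma (and Lemma~\ref{Three-point-inequality-measure}) without proof and defers entirely to \citet{aubin2022mirror}, so your self-contained derivation is a genuine addition rather than a parallel of an existing one. Your route is the classical Chen--Teboulle argument, and every step checks out: the law-of-cosines identity is exact because the norm is induced by an inner product; the reduction to the variational inequality $\frac{1}{\alpha_{\cC}}\langle \overline g - f,\, g - \overline g\rangle_{\cC(X)} \ge F(\overline g) - F(g)$ is algebraically right; the limit manipulations are legitimate since $F$ is finite along the segment (by convexity, $F(\overline g + t(g-\overline g)) \le (1-t)F(\overline g) + tF(g) < \infty$), so the difference quotients are monotone and decrease to the one-sided Gâteaux derivative as $t \to +0$. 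Worth noting: you can shortcut the directional-derivative bookkeeping entirely by observing that $h(g) := F(g) + \frac{1}{2\alpha_{\cC}}\|g-f\|_{\cC(X)}^2$ is $(1/\alpha_{\cC})$-strongly convex; applying minimality $h(\overline g) \le h\bigl((1-t)\overline g + t g\bigr)$, the strong-convexity inequality, division by $t$, and $t \to +0$ yields $h(g) \ge h(\overline g) + \frac{1}{2\alpha_{\cC}}\|g - \overline g\|^2_{\cC(X)}$ directly, which is the lemma. This variant buys a shorter proof at the cost of hiding where convexity of $F$ enters.

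Your flag about convexity of $S_{\cC}$ is the substantive observation and is worth stating prominently: as printed, the lemma assumes only $S_{\cC} \subset \cC(X)$, and for nonconvex $S_{\cC}$ it is false. Take $F = 0$, $f = 0$, and $S_{\cC}$ the two constant functions $a \equiv 1$, $b \equiv -1.1$ with $\|\cdot\|_{\cC(X)}$ an $L^2$-type norm: then $\overline g = a$, but $\|b\|^2 = 1.21 < \|a\|^2 + \|b-a\|^2 = 5.41$, violating the claimed inequality. Both your argument and the strong-convexity variant use convexity of $S_{\cC}$ exactly once, to place $\overline g + t(g - \overline g)$ back in $S_{\cC}$; in all of the paper's applications (Definition~\ref{mirror-descent}, where $\Spsi$, $\Smu$, $\Svar$ are convex) this hypothesis holds, so nothing downstream breaks, but it should be added to the statement of the lemma.
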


%
%
\medskip
The proof of Theorem \ref{main-theorem-convergence} is essentially based on the $L$-smoothness of $\cK(\psi, \mu, \varphi)$ for each variables and the three-point inequality in Lemma~\ref{Three-point-inequality-measure} and \ref{Three-point-inequality-function} associated with the update rules of the gradient descent in Definition~\ref{mirror-descent}.

\begin{proof}[Proof of Theorem \ref{main-theorem-convergence}]
First, we evaluate the lower bound of $\cK(\psi_n, \mu_n, \varphi_{n+1})$.
The following holds for any $\varphi\in \Svar$:
\begin{align*}
    &\mathcal{K}(\psi_{n}, \mu_{n}, \varphi_{n+1})\\
    &\geq \mathcal{K}(\psi_{n}, \mu_{n}, \varphi_{n}) 
    + d\mathcal{K}(\psi_{n}, \mu_{n}, \cdot)_{\varphi_{n}}(\varphi_{n+1} - \varphi_{n}) 
    - \frac{L}{2} \| \varphi_{n+1}-\varphi_n\|^{2}_{\normvar} \\
    &\geq \mathcal{K}(\psi_{n}, \mu_{n}, \varphi_{n}) 
    + d\mathcal{K}(\psi_{n}, \mu_{n}, \cdot)_{\varphi_{n}}(\varphi_{n+1}-\varphi_{n}) 
    - \frac{1}{2 \alpha_{n}}\| \varphi_{n+1}-\varphi_n\|^{2}_{\normvar} \\
    &\geq \mathcal{K}(\psi_{n}, \mu_{n}, \varphi_{n}) 
    + d\mathcal{K}(\psi_{n}, \mu_{n}, \cdot)_{\varphi_{n}}(\varphi-\varphi_{n})
    - \frac{1}{2\alpha_{n}}( \| \varphi-\varphi_n\|^{2}_{\normvar} - \| \varphi-\varphi_{n+1}\|^{2}_{\normvar} )\\
    &\geq \mathcal{K}(\psi_{n}, \mu_{n}, \varphi)
    - \frac{1}{2\alpha_{n}}( \| \varphi-\varphi_n\|^{2}_{\normvar} - \| \varphi-\varphi_{n+1}\|^{2}_{\normvar} ),
\end{align*}
where the first inequality follows from the $L$-smoothness of $\varphi\mapsto \cK(\psi, \mu, \varphi)$ 
, and the second inequality follows from $0 < \alpha_n \le 1/L$, and the last inequality results from the concavity of $\varphi\to\cK(\psi, \mu, \varphi)$ for each $\psi$ and $\mu$. 
Also, the third inequality follows from the three-point inequality in Lemma~\ref{Three-point-inequality-function} 
with 
$
    F(\varphi)
    = -d\mathcal{K}(\psi_{n}, \mu_{n}, \cdot)_{\varphi_{n}}(\varphi - \varphi_{n})
$
and $f=\varphi_n$:
\begin{multline*}
    d\mathcal{K}(\psi_{n}, \mu_{n}, \cdot)_{\varphi_{n}}(\varphi_{n+1}-\varphi_{n}) 
    - \frac{1}{2\alpha_{n}}\|\varphi_{n+1}-\varphi_n\|_{\normvar}^{2}
    \\
    \leq 
    d\mathcal{K}(\psi_{n}, \mu_{n}, \cdot)_{\varphi_{n}}(\varphi-\varphi_{n}) 
    - \frac{1}{2\alpha_{n}}(\|\varphi-\varphi_n\|_{\normvar}^{2}-\|\varphi-\varphi_{n+1}\|_{\normvar}^{2}),
    \quad \varphi \in \Svar.
\end{multline*}
Furthermore, by using the the concavity of $ \varphi \mapsto \mathcal{K}(\psi_{n}, \mu_n, \varphi)$ for $\cK(\psi_n, \mu_{n}, \varphi_{n+1})$ at the first line, we have
\begin{align}
    &
    -\alpha_{n}\mathcal{K}(\psi_{n}, \mu_{n}, \varphi_{n}) 
    + \alpha_{n}\mathcal{K}(\psi_{n}, \mu_{n}, \varphi)
    \nonumber
    \\
    &
    \leq 
    \frac{1}{2}(\| \varphi-\varphi_n\|^{2}_{\normvar} - \| \varphi-\varphi_{n+1}\|^{2}_{\normvar})
    +
    \alpha_n d\cK(\psi_n, \mu_n, \cdot)_{\varphi_n}(\varphi_{n+1}-\varphi_n)   
    .
    \label{DA-esti-K-var-n}
\end{align}
By taking the summation of (\ref{DA-esti-K-var-n}) from $n=0$ to $N-1$, we obtain
\begin{align}
    -\sum_{n=0}^{N-1}
    \alpha_{n}
    \mathcal{K}(\psi_{n}, \mu_{n}, \varphi_{n}) 
    +
    \left(
    \sum_{n=0}^{N-1}\alpha_{n}\right) \mathcal{K}(\widehat{\psi}_{N}, \widehat{\mu}_{N}, \varphi)
    \leq \frac{1}{2 } \|\varphi-\varphi_0\|_{\normvar}^2+\gatSumvar.
    \label{phi_upper_bound}
\end{align}
Here, we used 
$
    \sum_{n=0}^{N-1} \alpha_{n} \mathcal{K}(\psi_n, \mu_n, \varphi)
    \geq \sum_{n=0}^{N-1} \alpha_{n} \mathcal{K}(\widehat{\psi}_{N}, \widehat{\mu}_{N}, \varphi)
$ by Jensen's inequality
, where the weighted sums $(\hpsi_N, \hmu_N, \hvphi_N)$ are defined by (\ref{averages}).
Here, we introduced $\gatSumvar$ defined as
\begin{equation}
    \gatSumvar
    :=
    \sum_{n=0}^{N-1}
    \alpha_n
    d\cK(\psi_n, \mu_n, \cdot)_{\varphi_n}(\varphi_{n+1}-\varphi_n) 
    \ge 0.
    \label{sum-of-gateaux-differentials-var}
\end{equation}
The non-negativity follows from the update rule defined by Definition \ref{mirror-descent}.
When $\varphi=\widehat{\varphi}_{N}$ in (\ref{phi_upper_bound}), we have
\begin{equation}
    -\sum_{n=0}^{N-1}
    \alpha_{n}
    \mathcal{K}(\psi_{n}, \mu_{n}, \varphi_{n}) 
    + \left(
    \sum_{n=0}^{N-1}\alpha_{n}\right)
    \mathcal{K}(\widehat{\psi}_{N}, \widehat{\mu}_{N}, \widehat{\varphi}_{N})
    \leq
    \frac{1}{2 } \|\widehat{\varphi}_{N}-\varphi_0\|_{\normvar}^2+\gatSumvar.
    \label{estimate-new-DA-1-4}
\end{equation}
When $\varphi=\varphi_\ast$ in (\ref{phi_upper_bound}), we have
\begin{equation}
    - 
    \sum_{n=0}^{N-1}
    \alpha_{n}\mathcal{K}(\psi_{n}, \mu_{n}, \varphi_{n})
    +
    \left(
    \sum_{n=0}^{N-1}\alpha_{n}\right)
    \mathcal{K}(\psi_{\ast}, \mu_{\ast}, \varphi_{\ast}) 
    \leq
   \frac{1}{2 } \|\varphi_{\ast}-\varphi_0\|_{\normvar}^2+\gatSumvar,
    \label{estimate-new-DA-1-3}
\end{equation}
where $(\psi_{\ast}, \mu_{\ast}, \varphi_{\ast})$ is a saddle point defined at \eqref{eq:nash-minimax} satisfying
$
    \mathcal{K}(\psi_{\ast}, \mu_{\ast}, \varphi_{\ast}) 
    \leq \mathcal{K}(\hpsi_N, \hmu_N, \varphi_{\ast})
$.

\medskip
Second, we evaluate the upper bound of $\cK(\psi_n, \mu_{n+1}, \varphi_{n})$.
The following holds for any $(\psi, \mu) \in S_\cM\times S_\cC$:
\begin{align*}
    & \mathcal{K}(\psi_{n}, \mu_{n+1}, \varphi_{n})\\
    & \leq \mathcal{K}(\psi_{n}, \mu_{n}, \varphi_{n}) 
    + d\mathcal{K}(\psi_{n}, \cdot, \varphi_{n})_{\mu_n}(\mu_{n+1}-\mu_{n})
    + \frac{L}{2} \|\mu_{n+1}-\mu_{n}\|_{\normmu}^2 \\
    & \leq \mathcal{K}(\psi_{n}, \mu_{n}, \varphi_{n}) 
    + d\mathcal{K}(\psi_{n}, \cdot, \varphi_{n})_{\mu_n}(\mu_{n+1}-\mu_{n})
    + \frac{1}{2 \alpha_{ n}}\| \mu_{n+1}-\mu_{n} \|_{\normmu}^2 \\
    & 
    \quad 
    - d\mathcal{K}(\cdot, \mu_{n}, \varphi_{n})_{\psi_n}(\psi_{n+1}-\psi_{n})
    \\
    & \quad
    + d\mathcal{K}(\cdot, \mu_{n}, \varphi_{n})_{\psi_n}(\psi_{n+1}-\psi_{n})
    + \frac{1}{2 \alpha_{n}} \|\psi_{n+1}-\psi_{n}\|_{\normpsi}^2 \\
    &
    \leq \mathcal{K}(\psi_{n}, \mu_{n}, \varphi_{n})
    + d\mathcal{K}( \cdot, \mu_{n}, \varphi_{n})_{\psi_n}(\psi-\psi_{n})  
    + d\mathcal{K}(\psi_{n}, \cdot, \varphi_{n})_{\mu_n}(\mu-\mu_{n})\\
    &\quad 
    - d\mathcal{K}( \cdot, \mu_{n}, \varphi_{n})_{\psi_n}(\psi_{n+1}-\psi_{n})\\
    &\quad 
    + \frac{1}{2\alpha_{n}}
    \left(
          \| \psi-\psi_{n}\|_{\normpsi}^2 
        - \| \psi-\psi_{n+1}\|_{\normpsi}^2 
        + \|\mu-\mu_{n}\|_{\normmu}^2
        - \|\mu-\mu_{n+1}\|_{\normmu}^2
    \right) \\
    &\leq \cK(\psi, \mu, \varphi_{n})
    - d\mathcal{K}( \cdot, \mu_{n}, \varphi_{n})_{\psi_n}(\psi_{n+1}-\psi_{n})\\
    &\quad
    + \frac{1}{2 \alpha_{n}}
    \left(
          \| \psi-\psi_{n}\|_{\normpsi}^2
        - \| \psi-\psi_{n+1}\|_{\normpsi}^2 
        + \|\mu-\mu_{n}\|_{\normmu}^2
        - \|\mu-\mu_{n+1}\|_{\normmu}^2
    \right),
\end{align*}
where the first inequality follows from the $L$-smoothness of $\mu\mapsto\cK(\psi, \mu, \varphi)$ for each $\psi$ and $\varphi$, the second inequality follows from $0<\alpha_{n}\le 1/L$, and the last inequality is the result of the joint convexity of $(\psi, \mu) \mapsto \cK(\psi, \mu, \varphi)$ for any $\varphi$. 
Also, the third inequality results from the three-point inequality in
Lemma \ref{Three-point-inequality-measure} with 
$
    G(\nu)=d\mathcal{K}( \psi_n, \cdot, \varphi_{n})_{\mu_n}(\nu - \mu_n)
$ and $\mu = \mu_n$,
\begin{equation*}
    \begin{split}
    & d\mathcal{K}(\psi_{n}, \cdot, \varphi_{n})_{\mu_n}
    (\mu_{n+1}-\mu_{n})
    +\frac{1}{2 \alpha_{n}}\|\mu_{n+1}-\mu_{n}\|_{\normmu}^2
    \\
    &
    \leq 
    d\mathcal{K}(\psi_{n}, \cdot, \varphi_{n})_{\mu_n}(\mu -\mu_{n}) + \frac{1}{2 \alpha_{n}}(\|\mu-\mu_{n}\|_{\normmu}^2 -\|\mu-\mu_{n+1}\|_{\normmu}^2),
    \quad \mu\in \Smu,
    \label{convergence-objective-DA-minimax-TPI-psi-n}
    \end{split}
\end{equation*}
and Lemma \ref{Three-point-inequality-function} with
$
    F(\psi)=d\mathcal{K}( \cdot, \mu_{n}, \varphi_{n})_{\psi_n}(\psi - \psi_{n})
$
and $f=\psi_n$
\begin{equation*}
    \begin{split}
    & d\mathcal{K}( \cdot, \mu_{n}, \varphi_{n})_{\psi_n}
    (\psi_{n+1}-\psi_{n}) 
    +
    \frac{1}{2 \alpha_{n}}\|\psi_{n+1}-\psi_{n}\|_{\normpsi}^2 
    \\
    &
    \leq 
    d\mathcal{K}( \cdot, \mu_{n}, \varphi_{n})_{\psi_n}(\psi-\psi_{n}) 
    +
    \frac{1}{2 \alpha_{n}}\left(\|\psi-\psi_{n}\|_{\normpsi}^2  -\|\psi -\psi_{n+1}\|_{\normpsi}^2  \right),
    \quad \mu\in \Spsi.
    \end{split}
    \label{convergence-objective-DA-minimax-TPI-psi-n-aaa}
\end{equation*}
Furthermore, by using the the convexity of $ \mu \mapsto \mathcal{K}(\psi_{n}, \mu, \varphi_{n})$ for $\cK(\psi_n, \mu_{n+1}, \varphi_{n})$ at the first line, we have
\begin{equation}
    \begin{split}
    & \alpha_{n} \mathcal{K}(\psi_{n}, \mu_{n}, \varphi_{n}) 
    - \alpha_{n} \mathcal{K}(\psi, \mu, \varphi_{n})
    \\
    &\hspace{1cm}
    \leq 
    \frac{1}{2}
    \left(
          \| \psi-\psi_{n}\|_{\normpsi}^2 
        - \| \psi-\psi_{n+1}\|_{\normpsi}^2 
        + \|\mu-\mu_{n}\|_{\normmu}^2
        - \|\mu-\mu_{n+1}\|_{\normmu}^2
    \right)\\
    &\hspace{2cm}\quad
    -\alpha_{n} d\mathcal{K}( \cdot, \mu_{n}, \varphi_{n})_{\psi_n}(\psi_{n+1}-\psi_{n}) 
    -\alpha_{n} d\mathcal{K}( \psi_n, \cdot, \varphi_{n})_{\mu_n}(\mu_{n+1}-\mu_{n}).
    \end{split}
    \label{estimate-n-DA-1}
\end{equation}

We perform the summation of (\ref{estimate-n-DA-1}) over the interval $n=0$ to $N-1$:
\begin{equation}
    \sum_{n=0}^{N-1}
    \alpha_{n}
    \mathcal{K}(\psi_{n}, \mu_{n}, \varphi_{n}) 
    - 
    \left(\sum_{n=0}^{N-1}\alpha_{n}\right)
    \mathcal{K}(\psi, \mu, \widehat{\varphi}_{N})
    \leq 
    \frac{1}{2} \left(
        \| \psi-\psi_{0}\|_{\normpsi}^2 +
        \| \mu-\mu_{0}\|_{\normmu}^2
    \right)
    + \gatSum,
    \label{psi-mu-upper}
\end{equation}
where we used 
$
    \sum_{n=0}^{N-1}\alpha_{n}\mathcal{K}(\psi, \mu, \varphi_{n})
    \leq \left(\sum_{n=0}^{N-1}\alpha_{n}\right)
    \mathcal{K}(\psi, \mu, \widehat{\varphi}_{N})
$ following from the Jensen's inequality and
we introduced $\gatSum$ defined as
\begin{equation}
    \gatSum
    :=
    - \sum_{n=0}^{N-1}
    \alpha_n
    \left(
          d\mathcal{K}( \cdot, \mu_{n}, \varphi_{n})_{\psi_n}(\psi_{n+1}-\psi_{n}) 
        + d\mathcal{K}( \psi_n, \cdot, \varphi_{n})_{\mu_n}(\mu_{n+1} - \mu_{n})
    \right)
    \ge 0.
    \label{sum-of-gateaux-differentials}
\end{equation}
The non-negativity follows from the update rule defined by Definition \ref{mirror-descent}.
By substituting $\psi=\widehat{\psi}_{N}$ and $\mu=\widehat{\mu}_{N}$ into (\ref{psi-mu-upper}), we have
\begin{equation}
    \begin{split}
        \sum_{n=0}^{N-1}
        \alpha_{n}
        \mathcal{K}(\psi_{n}, \mu_{n}, \varphi_{n+1}) 
        &
        - 
        \left(\sum_{n=0}^{N-1}\alpha_{n}\right)
        \mathcal{K}(\widehat{\psi}_{N}, \widehat{\mu}_{N}, \widehat{\varphi}_{N})
        \\
        &
        \leq \frac{1}{2} \left(
            \|\widehat{\psi}_{N}-\psi_{0}\|_{\normpsi}^2
            + \|\widehat{\mu}_{N}-\mu_{0}\|_{\normmu}^2
        \right)
        + \gatSum.
    \end{split}
    \label{estimate-new-DA-1-2}
\end{equation}
Also, by taking $(\psi, \mu) = (\psi_\ast, \mu_\ast)$ in (\ref{psi-mu-upper}) which is a saddle point for the minimax solution for minimax problem $\mathcal{K}(\psi, \mu, \varphi)$ on $(\Spsi \times \Smu) \times \Svar$ such that $(\psi_\ast, \mu_\ast, \varphi_\ast)$ satisfies the 
$
    \mathcal{K}(\psi_{\ast}, \mu_{\ast}, \hvphi_N)
    \leq \mathcal{K}(\psi_{\ast}, \mu_{\ast}, \varphi_{\ast})
$,
we have
\begin{equation}
\begin{split}
    \sum_{n=0}^{N-1}\alpha_{n} 
    \mathcal{K}(\psi_{n}, \mu_{n}, \varphi_{n+1}) 
    &
    - \left(\sum_{n=0}^{N-1}\alpha_{n}\right)
    \mathcal{K}(\psi_{\ast}, \mu_{\ast}, \varphi_{\ast})
    \\
    &
    \leq \frac{1}{2}
    \left(
        \|\psi_{\ast}-\psi_{0}\|_{\normpsi}^2
        +\| \mu_{\ast}-\mu_{0}\|_{\normmu}^2
    \right)
    + \gatSum.
\label{estimate-new-DA-1-1}
\end{split}
\end{equation}

\medskip
Third, let combine all the results we have obtained.
Summing up (\ref{estimate-new-DA-1-3}) and (\ref{estimate-new-DA-1-2}) yields
\begin{equation*}
    \begin{split}
    \left(\sum_{n=0}^{N-1}\alpha_{n}\right)
    &
    (
        \mathcal{K}(\psi_{\ast}, \mu_{\ast}, \varphi_{\ast})
        - \mathcal{K}(\widehat{\psi}_{N}, \widehat{\mu}_{N}, \widehat{\varphi}_{N})
    )\\
    &\leq \frac{1}{2}
    \left(
        \|\widehat{\psi}_{N}-\psi_{0}\|_{\normpsi}^2
        + \|\widehat{\mu}_{N}-\mu_{0}\|_{\normmu}^2
        + \|\varphi_\ast-\varphi_0\|_{\normvar}^2
    \right)
    +\gatSumvar + \gatSum.
    \end{split}
\end{equation*}
Also, summing up (\ref{estimate-new-DA-1-4}) and (\ref{estimate-new-DA-1-1}) gives
\begin{equation*}
    \begin{split}
    \left(\sum_{n=0}^{N-1}\alpha_{n}\right)
    &
    (
        \mathcal{K}(\widehat{\psi}_{N}, \widehat{\mu}_{N}, \widehat{\varphi}_{N})
        - \mathcal{K}(\psi_{\ast}, \mu_{\ast}, \varphi_{\ast})
    )\\
    &\leq \frac{1}{2}\left(
          \|\psi_{\ast}-\psi_{0}\|_{\normpsi}^2
        + \|\mu_{\ast}-\mu_{0}\|_{\normmu}
        + \|\widehat{\varphi}_N-\varphi_0\|_{\normvar}^2
    \right)
    +\gatSumvar + \gatSum.
    \end{split}
\end{equation*}
Therefore, we obtain
\begin{equation}
    \begin{split}
    &\left|
    \mathcal{K}(\widehat{\psi}_{N}, \widehat{\mu}_{N}, \widehat{\varphi}_{N})
    - \mathcal{K}(\psi_{\ast}, \mu_{\ast}, \varphi_{\ast})
    \right|
    \leq 
    \left( 
        \sum_{n=0}^{N-1}\alpha_{n} 
    \right)^{-1}
    \left(
    \frac{1}{2}C_s +\gatSumvar + \gatSum
    \right)
    ,
    \end{split}
\label{save-point-1}
\end{equation}
where $C_s>0$ is a finite constant defined in (\ref{const-C}).
This means that the value of the object function at $(\hpsi_N, \hmu_N, \hvphi_N)$ approximately converges to a saddle point under the gradient descent update rule, if the sums $\gatSumvar + \gatSum$ of the G\^ateaux differentials in (\ref{sum-of-gateaux-differentials-var}) and (\ref{sum-of-gateaux-differentials}) are finite. 

\medskip
%
%
%
%
Finally, we prove that the term $\gatSumvar + \gatSum$ defined in (\ref{sum-of-gateaux-differentials-var}) and (\ref{sum-of-gateaux-differentials}) are bounded from above by the norms 
$
    \left\| \mu_{n+1}-\mu_{n} \right\|_{\normmu}
$,
$
    \left\|\psi_{n+1}-\psi_{n} \right\|_{\normpsi}
$ and 
$
    \left\|\varphi_{n+1}-\varphi_{n} \right\|_{\normvar}
$
under Assumptions~\ref{exist-argmax-DA-minimax}.
Actually, each G\^ateaux differential is bounded from above as follows:
\begin{equation}
    \begin{split}
    -d\mathcal{K}( \cdot, \mu_{n}, \varphi_{n})_{\psi_n}(\psi_{n+1}-\psi_{n}) 
    &= -\int (\psi_{n+1}-\psi_{n}) dN_{\psi_n, \mu_{n}, \varphi_{n}}
    \\
    &
    \leq \left\|\psi_{n+1}-\psi_{n} \right\|_{\normpsi} \underbrace{ \left\|N_{\psi_n, \mu_{n}, \varphi_{n}} \right\|_{\normpsi}^{\star}
    }_{\leq B},
    \end{split}
    \label{psi-gateaux-upper-bound}
\end{equation}
and
\begin{equation}
    \begin{split}
        -d\mathcal{K}(\psi_{n}, \cdot, \varphi_{n})_{\mu_n}(\mu_{n+1}-\mu_{n})
    &= -\int \Phi_{\psi_{n}, \mu_{n}, \varphi_{n}} d(\mu_{n+1}-\mu_{n})
    \\
    &
    \leq \underbrace{\left\|\Phi_{\psi_{n}, \mu_{n}, \varphi_{n}} \right\|_{\normmu}^{\star}
    }_{\leq B}
    \left\|\mu_{n+1}-\mu_{n} \right\|_{\normmu},
    \end{split}
\end{equation}
and
\begin{equation}
    \begin{split}
        d\mathcal{K}(\psi_{n}, \mu_n, \cdot)_{\varphi_{n}}(\varphi_{n+1}-\varphi_{n})
    &= \int \varphi_{n+1}-\varphi_{n} d \Lambda_{\psi_{n}, \mu_{n}, \varphi_{n}}
    \\
    &
    \leq 
    \left\|\varphi_{n+1}-\varphi_{n} \right\|_{\normvar}
    \underbrace{\left\|\Lambda_{\psi_{n}, \mu_{n}, \varphi_{n}} \right\|_{\normvar}^{\star}
    }_{\leq B}
    .
    \end{split}
\end{equation}
Moreover, by taking into account that the gradient decent scheme in Definition~\ref{mirror-descent} with $\psi=\psi_n$ implies
$
    d\mathcal{K}( \cdot, \mu_{n}, \varphi_{n})_{\psi_n}(\psi_{n+1}-\psi_n) + \frac{1}{2 \alpha_n }\|\psi_{n+1}-\psi_n\|_{\normpsi}^2 \leq 0
$, 
we can obtain that
\begin{align*}
    \frac{1}{2 \alpha_n} \left\| \psi_{n+1}-\psi_n \right\|^{2}_{\mathcal{C}(X)} 
    &\leq -d\mathcal{K}( \cdot, \mu_{n}, \varphi_{n})_{\psi_n}(\psi_{n+1}-\psi_{n}) \\
    &\leq B \left\| \psi_{n+1}-\psi_n \right\|_{\normpsi},
\end{align*}
which is equivalent to
\begin{equation}
    \left\| \psi_{n+1}-\psi_n \right\|_{\normpsi} 
    \leq
    2 B \alpha_{n}.
    \label{save-point-2}
\end{equation}
By the similar argument, we obtain
\begin{equation}
    \left\| \mu_{n+1}-\mu_n \right\|_{\normmu} 
    \leq
    2B \alpha_{n}, \ \ 
    \left\| \varphi_{n+1}-\varphi_n \right\|_{\normvar} 
    \leq
    2B \alpha_{n}.
    \label{save-point-3}
\end{equation}
By combining (\ref{save-point-1}) with \eqref{psi-gateaux-upper-bound} - \eqref{save-point-3}, we conclude \eqref{main-theorem-convergence-eq}:
\begin{align*}
    &\left|
    \mathcal{K}(\widehat{\psi}_{N}, \widehat{\mu}_{N}, \widehat{\varphi}_{N})
    - \mathcal{K}(\psi_{\ast}, \mu_{\ast}, \varphi_{\ast})
    \right|
    \leq 
    \left( 
    \sum_{n=0}^{N-1}\alpha_{n} \right)^{-1}
    \left(
    \frac{1}{2} C_s
    + 6 B^2 \sum_{n=0}^{N-1} \alpha_{n}^2
    \right).
\end{align*}

\end{proof}

%
%
\section{Proof of Theorem \ref{main-theorem-convergence-nonconvex}}\label{Appendix1.5}
Before the proof of the main result, we will show two lemmas used in the proof of Theorem \ref{main-theorem-convergence-nonconvex}. 

\begin{lemma}\label{B-lemma-1}
Let Assumptions~\ref{closed-subspaces-nonconvex}, \ref{strongly-concave-L-nonconvex}, \ref{existence-Gate-diff-nonconvex}, and \ref{L-smooth-nonconvex} hold.
Then, we have the following:
\begin{itemize}
    \item[(i)]
    $
    \| \Phi(\psi_1, f) - \Phi(\psi_2, f) \|_{\Svar} 
    \leq 
    \frac{L}{\beta} \|\psi_1 - \psi_2 \|_{\Spsi}
    $ for $\psi_1, \psi_2 \in \Spsic$, $f \in \Sfc$.
    \item[(ii)]
    $
    \| \Phi(\psi, f_1) - \Phi(\psi, f_2) \|_{\Svar} 
    \leq 
    \frac{L}{\beta} \|f_1 - f_2 \|_{\Sf}
    $ for $\psi \in \Spsic$, $f_1, f_2 \in \Sfc$.
    \item[(iii)]
    $
    \psi \mapsto G(\psi,f)
    $ 
    is $L\left( \frac{L}{\beta} + 1 \right)$-smooth with respect to $\| \cdot \|_{\Spsi}$ over $\Spsic$ for each $f \in \Sfc$.
    \item[(iv)]
    $
    f \mapsto G(\psi,f)
    $ 
    is $L\left( \frac{L}{\beta} + 1 \right)$-smooth with respect to $\| \cdot \|_{\Sf}$ over $\Sfc$ for each $\psi \in \Spsic$.
\end{itemize}
\end{lemma}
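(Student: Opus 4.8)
The plan is to handle the two pairs of statements by different arguments: parts (i) and (ii), which assert Lipschitz continuity of the argmax map $\Phi$, follow from strong monotonicity of the $\varphi$-gradient combined with its Lipschitz dependence on the frozen variable; parts (iii) and (iv), the smoothness of the value function $G$, follow from an envelope (Danskin) identity for the gradient of $G$ together with parts (i), (ii) and the cross-variable Lipschitz bounds in Assumption~\ref{L-smooth-nonconvex}.

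For part (i), I would first record the first-order optimality condition for the concave maximization defining $\Phi$: since $\mathcal{G}(\psi, f, \cdot)$ is concave and Gâteaux differentiable (Assumptions~\ref{strongly-concave-L-nonconvex} and \ref{existence-Gate-diff-nonconvex}) and $\Svarc$ is convex, the maximizers $\varphi_i := \Phi(\psi_i, f)$ satisfy the variational inequality $\langle \nabla \mathcal{G}(\psi_i, f, \cdot)_{\varphi_i}, \varphi - \varphi_i \rangle_{\Svar} \leq 0$ for all $\varphi \in \Svarc$. Testing the $i=1$ inequality with $\varphi=\varphi_2$ and the $i=2$ inequality with $\varphi=\varphi_1$ and adding them yields
\begin{equation*}
\langle \nabla \mathcal{G}(\psi_1, f, \cdot)_{\varphi_1} - \nabla \mathcal{G}(\psi_2, f, \cdot)_{\varphi_2}, \varphi_1 - \varphi_2 \rangle_{\Svar} \geq 0 .
\end{equation*}
Splitting the left-hand side at the intermediate point $\nabla \mathcal{G}(\psi_1, f, \cdot)_{\varphi_2}$ and using $\beta$-strong concavity, which gives $\langle \nabla \mathcal{G}(\psi_1, f, \cdot)_{\varphi_1} - \nabla \mathcal{G}(\psi_1, f, \cdot)_{\varphi_2}, \varphi_1 - \varphi_2 \rangle_{\Svar} \leq -\beta \|\varphi_1 - \varphi_2\|_{\Svar}^2$, leaves $\beta \|\varphi_1 - \varphi_2\|_{\Svar}^2 \leq \langle \nabla \mathcal{G}(\psi_1, f, \cdot)_{\varphi_2} - \nabla \mathcal{G}(\psi_2, f, \cdot)_{\varphi_2}, \varphi_1 - \varphi_2 \rangle_{\Svar}$. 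Cauchy–Schwarz and condition (h) of Assumption~\ref{L-smooth-nonconvex} then bound the right-hand side by $L \|\psi_1 - \psi_2\|_{\Spsi} \|\varphi_1 - \varphi_2\|_{\Svar}$, and dividing gives the claim. Part (ii) is identical with $\psi$ frozen and $f$ varying, invoking condition (i) of Assumption~\ref{L-smooth-nonconvex} in place of (h).

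For parts (iii) and (iv) I would first establish the envelope identity $\nabla G(\cdot, f)_{\psi} = \nabla \mathcal{G}(\cdot, f, \Phi(\psi,f))_{\psi}$, and symmetrically $\nabla G(\psi, \cdot)_{f} = \nabla \mathcal{G}(\psi, \cdot, \Phi(\psi,f))_{f}$, which holds because the maximizer is unique by strong concavity and depends Lipschitz-continuously on $\psi$ (resp.\ $f$) by part (i) (resp.\ (ii)). Then for (iii) I would estimate, for $\psi_1, \psi_2 \in \Spsic$,
\begin{align*}
\|\nabla G(\cdot, f)_{\psi_1} - \nabla G(\cdot, f)_{\psi_2}\|_{\Spsi}
&\leq \|\nabla \mathcal{G}(\cdot, f, \Phi(\psi_1,f))_{\psi_1} - \nabla \mathcal{G}(\cdot, f, \Phi(\psi_1,f))_{\psi_2}\|_{\Spsi}\\
&\quad + \|\nabla \mathcal{G}(\cdot, f, \Phi(\psi_1,f))_{\psi_2} - \nabla \mathcal{G}(\cdot, f, \Phi(\psi_2,f))_{\psi_2}\|_{\Spsi},
\end{align*}
bounding the first term by $L\|\psi_1 - \psi_2\|_{\Spsi}$ via condition (a) and the second by $L\|\Phi(\psi_1,f) - \Phi(\psi_2,f)\|_{\Svar} \leq (L^2/\beta)\|\psi_1 - \psi_2\|_{\Spsi}$ via condition (c) and part (i). Summing gives the Lipschitz constant $L + L^2/\beta = L(L/\beta + 1)$, i.e.\ the asserted smoothness; part (iv) is the same computation with conditions (d), (f) and part (ii).

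The main obstacle is the rigorous justification of the envelope identity in this infinite-dimensional Hilbert-space setting: one must show that $\psi \mapsto G(\psi,f)$ is Gâteaux differentiable with gradient equal to the partial gradient of $\mathcal{G}$ evaluated at the frozen maximizer. The standard route is to bound the difference quotients of $G$ from above and below using the maximality of $\Phi(\psi,f)$ and the continuity of $\Phi$ from parts (i)–(ii), squeezing the one-sided derivatives to the claimed limit; the Lipschitz dependence of $\Phi$ and the joint Lipschitz bounds of Assumption~\ref{L-smooth-nonconvex} are precisely what make this squeeze work. A secondary point worth stating explicitly is that ``$L$-smooth'' in (iii)–(iv) is used in the Lipschitz-gradient sense, which on a Hilbert space yields the quadratic descent bound needed later in Theorem~\ref{main-theorem-convergence-nonconvex}.
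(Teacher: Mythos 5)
Your proposal is correct and follows essentially the same route as the paper: parts (i)--(ii) via the two optimality conditions combined with $\beta$-strong concavity, Cauchy--Schwarz, and the cross-Lipschitz bounds (h)/(i) of Assumption~\ref{L-smooth-nonconvex}; parts (iii)--(iv) via the envelope identity $\nabla G(\cdot,f)_{\psi} = \nabla \mathcal{G}(\cdot,f,\Phi(\psi,f))_{\psi}$, the triangle-inequality split using conditions (a),(c) (resp.\ (d),(f)) together with parts (i)--(ii), giving the constant $L\left(\frac{L}{\beta}+1\right)$. The only cosmetic differences are that the paper justifies the envelope identity by citing the envelope theorem of Milgrom--Segal rather than your difference-quotient squeeze, and that it explicitly integrates the gradient-Lipschitz bound along the segment to obtain the quadratic (Bregman-divergence) estimate demanded by its definition of $L$-smoothness --- precisely the step you flag as needing to be stated.
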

\begin{proof}
The proof is a generalization of \citet[Lemma 4.3]{lin2020gradient} to infinite dimensional function spaces with two variable. 

By the optimality, we have
\begin{align*}
    &
    d\cG(\psi_1, f, \cdot)_{\Phi(\psi_1, f)} (\Phi(\psi_2, f)-\Phi(\psi_1, f)) \leq 0,
    \\
    &
    d\cG(\psi_2, f, \cdot)_{\Phi(\psi_2, f)} (\Phi(\psi_1, f)-\Phi(\psi_2, f)) \leq 0,
\end{align*}
which implies that
\begin{align}\label{B-eq-1}
    [d\cG(\psi_1, f, \cdot)_{\Phi(\psi_1, f)}
    -
    d\cG(\psi_2, f, \cdot)_{\Phi(\psi_2, f)}
    ](\Phi(\psi_2, f)-\Phi(\psi_1, f)) \leq 0.
\end{align}
With Assumption~\ref{strongly-concave-L-nonconvex} and (\ref{B-eq-1}), we estimate that
\begin{align*}
    &
    \beta \| \Phi(\psi_1, f) - \Phi(\psi_2, f) \|_{\Svar} 
    \\
    &
    \leq
    [d\cG(\psi_1, f, \cdot)_{\Phi(\psi_1, f)}
    -
    d\cG(\psi_1, f, \cdot)_{\Phi(\psi_2, f)}
    ](\Phi(\psi_2, f)-\Phi(\psi_1, f)) 
    \\
    &
    \leq
    [d\cG(\psi_2, f, \cdot)_{\Phi(\psi_2, f)}
    -
    d\cG(\psi_1, f, \cdot)_{\Phi(\psi_2, f)}
    ](\Phi(\psi_2, f)-\Phi(\psi_1, f)) 
    \\
    &
    \leq 
    \| \nabla \cG(\psi_2, f, \cdot)_{\Phi(\psi_2, f)}
    -
    \nabla \cG(\psi_1, f, \cdot)_{\Phi(\psi_2, f)} \|_{\Svar}
    \| \Phi(\psi_1, f) - \Phi(\psi_2, f) \|_{\Svar}
    \\
    &
    \leq 
    L 
    \| \psi_1 - \psi_2 \|_{\Svar}
    \| \Phi(\psi_1, f) - \Phi(\psi_2, f) \|_{\Svar},
\end{align*}
where last inequality results from Assumption~\ref{L-smooth-nonconvex}. 
Hence, we obtain (i). 
(ii) is given by the same arguments of (i).

By the envelop theorem \citep{milgrom2002envelope}, the G\^{a}teaux differential $dG(\cdot, f)_\psi$ of $G(\cdot, f)$ at $\psi \in \Spsi$ is represented as 
$
dG(\cdot, f)_\psi = d\cG(\cdot, f, \Phi(\psi, f))_\psi 
$.
Using this, we estimate that
\begin{align*}
    &
    \| \nabla G(\cdot, f)_{\psi_1}
    - \nabla G(\cdot, f)_{\psi_2}
    \|
    \\
    &
    =
    \| \nabla \cG(\cdot, f, \Phi(\psi_1, f) )_{\psi_1}
    -
     \nabla \cG(\cdot, f, \Phi(\psi_2, f) )_{\psi_2}  \|
     \\
     &
     \leq
     \| \nabla \cG(\cdot, f, \Phi(\psi_1, f) )_{\psi_1}
     -
     \nabla \cG(\cdot, f, \Phi(\psi_2, f) )_{\psi_1}  \|
     \\
     &
     \hspace{5mm}
     +
     \| \nabla \cG(\cdot, f, \Phi(\psi_2, f) )_{\psi_1}
    -
     \nabla \cG(\cdot, f, \Phi(\psi_2, f) )_{\psi_2}  \|
     \\
     &
     \leq
     L
     \left(
     \| \Phi(\psi_1, f)
     -
     \Phi(\psi_2, f)
     \|_{\Svar}
     + 
     \|\psi_1 - \psi_2\|_{\Spsi}
     \right)
     \\
     &
     \leq
     L
     \left( \frac{L}{\beta} + 1 \right)\|\psi_1 - \psi_2\|_{\Spsi},
\end{align*}
where last inequality follows from (i).
Using the above estimate, we further estimate that
\begin{equation}\label{B-eq-2}
\begin{split}
    &
    G(\psi_1, f) - G(\psi_2, f) -dG (\cdot, f)_{\psi_2}(\psi_1 - \psi_2)
    \\
    &
    \leq
    \int_{0}^{1} \frac{d}{d\epsilon}
    G(\psi_2 + \epsilon (\psi_1 - \psi_2), f) -  dG (\cdot, f)_{\psi_2}(\psi_1 - \psi_2) d \epsilon
    \\
    &
    \leq
    \int_{0}^{1} 
    dG(\cdot, f)_{\psi_2 + \epsilon (\psi_1 - \psi_2)}(\psi_1 - \psi_2) -  dG (\cdot, f)_{\psi_2}(\psi_1 - \psi_2) d \epsilon
    \\
    &
    \leq
    \int_{0}^{1}
    L\left( \frac{L}{\beta} + 1 \right)
    \epsilon
    \| \psi_1 - \psi_2 \|_{\Spsi}^{2} d \epsilon
    \\
    &
    \leq
    \frac{1}{2} L\left( \frac{L}{\beta} + 1 \right) \| \psi_1 - \psi_2 \|_{\Spsi}^{2}.
\end{split}
\end{equation}
Hence, we obtain (iii). 
(iv) is given by the same arguments.
\end{proof}
\begin{lemma}\label{B-lemma-2}
Let Assumptions~\ref{closed-subspaces-nonconvex}, \ref{strongly-concave-L-nonconvex}, \ref{existence-Gate-diff-nonconvex}, and \ref{L-smooth-nonconvex} hold.
Let $\eta>0$ and $\varphi \in \Svarc$, and we denote by 
\[
\varphi_{+}:= \cP_{\Svarc} \left( \varphi + \eta \nabla \cG (\psi, f, \cdot)_{\varphi} \right)
\]
Then, it holds that for $\psi \in \Spsic$, $f \in \Sfc$, and $\phi \in \Svarc$,
\begin{align*}
    -\cG(\psi, f, \varphi_{+}) + \cG(\psi, f, \phi)
    \leq 
    \frac{1}{\eta}\langle \varphi_{+} - \varphi, \phi -\varphi \rangle_{\Svar}
    + \left(\frac{L}{2}-\frac{1}{\eta} \right) \| \varphi_+ - \varphi \|_{\Svar}^2
    - 
    \frac{\beta}{2} \| \varphi - \phi \|_{\Svar}^2.
\end{align*}
\end{lemma}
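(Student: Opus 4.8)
The plan is to fix $\psi \in \Spsic$ and $f \in \Sfc$ and abbreviate $g(\cdot) := \cG(\psi, f, \cdot)$. By Assumption~\ref{strongly-concave-L-nonconvex}, $g$ is $\beta$-strongly concave on $\Svarc$, and by condition~(g) of Assumption~\ref{L-smooth-nonconvex} its Riesz gradient $\nabla g = \nabla\cG(\psi,f,\cdot)_{(\cdot)}$ is $L$-Lipschitz on $\Svar$. With this notation, $\varphi_+ = \cP_{\Svarc}(\varphi + \eta\nabla g(\varphi))$ is exactly one projected gradient-ascent step for maximizing $g$. I would assemble the claimed inequality from three standard ingredients: the variational characterization of the metric projection onto the closed convex set $\Svarc$ in the Hilbert space $\Svar$; the two-sided quadratic bound (descent lemma) coming from the $L$-Lipschitzness of $\nabla g$; and the first-order form of $\beta$-strong concavity.

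First I would record the projection inequality. Since $\Svarc$ is a closed convex subset of the Hilbert space $\Svar$ (Assumption~\ref{closed-subspaces-nonconvex}), the point $\varphi_+$ is characterized by $\langle \varphi + \eta\nabla g(\varphi) - \varphi_+, \phi - \varphi_+\rangle_{\Svar} \leq 0$ for every $\phi \in \Svarc$. Dividing by $\eta>0$ gives
\[
\langle \nabla g(\varphi), \phi - \varphi_+ \rangle_{\Svar} \leq \frac{1}{\eta}\langle \varphi_+ - \varphi, \phi - \varphi_+ \rangle_{\Svar}.
\]
Next I would produce the two quadratic estimates. Integrating $\tfrac{d}{dt}\,g(\varphi + t(\varphi_+ - \varphi))$ over $[0,1]$ along the segment—which lies in $\Svarc$ by convexity and on which $g$ is Gâteaux differentiable by Assumption~\ref{existence-Gate-diff-nonconvex}—and bounding the increment of $\nabla g$ by $Lt\|\varphi_+-\varphi\|_{\Svar}$ yields the lower bound
\[
g(\varphi_+) \geq g(\varphi) + \langle \nabla g(\varphi), \varphi_+ - \varphi \rangle_{\Svar} - \frac{L}{2}\|\varphi_+ - \varphi\|_{\Svar}^2 .
\]
From the first-order characterization of $\beta$-strong concavity—obtained by rearranging the strong-convexity definition~\eqref{SC-psi} applied to $-g$ and letting the convex-combination parameter tend to $0$—I obtain the matching upper bound
\[
g(\phi) \leq g(\varphi) + \langle \nabla g(\varphi), \phi - \varphi \rangle_{\Svar} - \frac{\beta}{2}\|\phi - \varphi\|_{\Svar}^2 .
\]

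Finally I would combine these. Adding the last two displays cancels $g(\varphi)$ and merges the linear terms into $\langle \nabla g(\varphi), \phi - \varphi_+\rangle_{\Svar}$, giving
\[
-g(\varphi_+) + g(\phi) \leq \langle \nabla g(\varphi), \phi - \varphi_+ \rangle_{\Svar} + \frac{L}{2}\|\varphi_+ - \varphi\|_{\Svar}^2 - \frac{\beta}{2}\|\phi - \varphi\|_{\Svar}^2 .
\]
Substituting the projection inequality and then using the identity $\phi - \varphi_+ = (\phi - \varphi) - (\varphi_+ - \varphi)$ rewrites $\langle \varphi_+ - \varphi, \phi - \varphi_+\rangle_{\Svar}$ as $\langle \varphi_+ - \varphi, \phi - \varphi\rangle_{\Svar} - \|\varphi_+ - \varphi\|_{\Svar}^2$; collecting the coefficients of $\|\varphi_+ - \varphi\|_{\Svar}^2$, namely $\tfrac{L}{2} - \tfrac{1}{\eta}$, reproduces the claimed bound verbatim (recalling $g = \cG(\psi,f,\cdot)$ and $\|\varphi-\phi\|_{\Svar}=\|\phi-\varphi\|_{\Svar}$).

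The only delicate points are the sign bookkeeping when translating the concave \emph{maximization} into an ascent-plus-projection step—so that the projection variational inequality enters with the correct orientation—and the justification of the descent-lemma integral in the infinite-dimensional space $\Svar$, where one must invoke Gâteaux differentiability along the segment (Assumption~\ref{existence-Gate-diff-nonconvex}) together with the Lipschitz gradient bound. Neither is a genuine obstacle given Assumptions~\ref{closed-subspaces-nonconvex}, \ref{strongly-concave-L-nonconvex}, \ref{existence-Gate-diff-nonconvex}, and \ref{L-smooth-nonconvex}; the essential work is purely the algebraic recombination described above.
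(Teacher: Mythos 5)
Your proposal is correct and follows essentially the same route as the paper's proof: the projection variational inequality for $\cP_{\Svarc}$, the descent lemma obtained by integrating the Lipschitz gradient along the segment, and the first-order form of $\beta$-strong concavity, combined with the decomposition $\phi - \varphi_{+} = (\phi - \varphi) - (\varphi_{+} - \varphi)$. Your sign bookkeeping is in fact cleaner than the paper's write-up, which contains two harmless sign typos (a $+d\mathcal{G}$ that should be $-d\mathcal{G}$ in the standalone smoothness display, and $d\mathcal{G}(\psi,f,\cdot)_{\varphi}(\varphi-\phi)$ where $d\mathcal{G}(\psi,f,\cdot)_{\varphi}(\phi-\varphi)$ is needed) but uses the correct versions in its final chain of inequalities.
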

\begin{proof}
The proof is generalized from the finite dimensional case \citep[Lemma 3.6]{bubeck2015convex}. 

By a property of the projection $\cP_{\Svarc}$ (see \citet[Lemma 3.1.4]{nesterov2003introductory}), we have
\begin{align}\label{B-eq-3}
    \langle \varphi_{+} - \left( \varphi + \eta \nabla \cG (\psi, f, \cdot)_{\varphi} \right), \varphi_{+} - \phi \rangle_{\Svar} \leq 0.
\end{align}
By Assumption~\ref{L-smooth-nonconvex} and same argument in (\ref{B-eq-2}), we can show that 
\begin{align*}
    -\cG(\psi, f, \varphi_{+}) 
    \leq
    -\cG(\psi, f, \varphi) 
    + d \cG (\psi, f, \cdot)_{\varphi}(\varphi_+ -\varphi) + 
    \frac{L}{2} \| \varphi_+ - \varphi \|_{\Svar}^2,
\end{align*}
which implies that with Assumption~\ref{strongly-concave-L-nonconvex} and (\ref{B-eq-3})
\begin{align*}
    &
    -\cG(\psi, f, \varphi_{+}) + \cG(\psi, f, \phi)
    \\
    &
    \leq 
    -\cG(\psi, f, \varphi_{+}) + \cG(\psi, f, \varphi) 
    -\cG(\psi, f, \varphi) + \cG(\psi, f, \phi)
    \\
    &
    \leq 
    - d \cG (\psi, f, \cdot)_{\varphi}(\varphi_+ -\varphi) + 
    \frac{L}{2} \| \varphi_+ - \varphi \|_{\Svar}^2
    + d \cG (\psi, f, \cdot)_{\varphi}(\varphi-\phi) - 
    \frac{\beta}{2} \| \varphi - \phi \|_{\Svar}^2,
    \\
    &
    \leq 
    -\langle \nabla \cG (\psi, f, \cdot)_{\varphi}, \varphi_{+} - \phi \rangle_{\Svar}
    + \frac{L}{2} \| \varphi_+ - \varphi \|_{\Svar}^2
    - 
    \frac{\beta}{2} \| \varphi - \phi \|_{\Svar}^2,
    \\
    &
    \leq 
    -\frac{1}{\eta}\langle \varphi_{+} - \varphi, \varphi_{+} - \phi \rangle_{\Svar}
    + \frac{L}{2} \| \varphi_+ - \varphi \|_{\Svar}^2
    - 
    \frac{\beta}{2} \| \varphi - \phi \|_{\Svar}^2,
    \\
    &
    \leq 
    \frac{1}{\eta}\langle \varphi_{+} - \varphi, \phi -\varphi \rangle_{\Svar}
    + \left(\frac{L}{2}-\frac{1}{\eta} \right) \| \varphi_+ - \varphi \|_{\Svar}^2
    - 
    \frac{\beta}{2} \| \varphi - \phi \|_{\Svar}^2.
\end{align*}
\end{proof}
\begin{proof}[Proof of Theorem \ref{main-theorem-convergence-nonconvex}]
The proof is a generalization of \citet[Theorem 4.4]{lin2020gradient} to infinite dimensional function spaces with three variables. 

We denote by $L_{\beta}=L\left( \frac{L}{\beta} + 1 \right)$.
First, we estimate the upper bound of $G(\psi_{n+1}, f_{n+1})$.
By Lemma~\ref{B-lemma-1}, we have
\begin{equation}\label{B-eq-4}
\begin{split}
    &
    G(\psi_{n+1}, f_{n+1})
    \leq 
    G(\psi_{n+1}, f_{n})
    + d G (\psi_{n+1},\cdot)_{f_n}(f_{n+1}-f_{n})
    + \frac{L_{\beta}}{2} \| f_{n+1} - f_{n} \|_{\Sf}^2.
\end{split}
\end{equation}
By a property of the projection $\cP_{\Sf}$ (see \citet[Lemma 3.1.5]{nesterov2003introductory} and Young's inequality, we have
\begin{equation}\label{B-eq-5}
\begin{split}
    &
    \| f_{n+1} - f_{n} \|_{\Sf}^2
    \\
    &
    \leq
    \alpha_{f,n}^2
    \| \nabla \cG (\psi_{n},\cdot, \varphi_n )_{f_n} \|_{\Sf}^2 
    \\
    &
    \leq
    2 \alpha_{f,n}^2 \| \nabla \cG (\psi_{n},\cdot, \Phi(\psi_{n}, f_n) )_{f_n} - \nabla \cG (\psi_{n},\cdot, \varphi_n )_{f_n} \|_{\Sf}^2 
    + 
    2 \alpha_{f,n}^2 \| \nabla \cG (\psi_{n},\cdot, \Phi(\psi_{n}, f_n) )_{f_n} \|_{\Sf}^2
    \\
    &
    \leq
    2 L^2 \alpha_{f,n}^2 \| \Phi(\psi_{n}, f_n)  -\varphi_n \|_{\Svar}^2 
    + 
    2 \alpha_{f,n}^2 \| \nabla G (\psi_{n},\cdot )_{f_n} \|_{\Sf}^2.
\end{split}
\end{equation}
We estimate that
\begin{align}\label{B-eq-6}
    &
    d G (\psi_{n+1},\cdot)_{f_n}(f_{n+1}-f_{n})
    \notag
    \\
    &
    =
    \langle 
    \nabla \cG (\psi_{n+1},\cdot, \Phi(\psi_{n+1}, f_n) )_{f_n}, 
    f_{n+1} - f_{n} \rangle_{\Svar}
    \notag
    \\
    &
    =
    - 
    \langle 
    \nabla \cG (\psi_{n+1},\cdot, \Phi(\psi_{n+1}, f_n) )_{f_n}, 
    \alpha_{f,n}
    \nabla \cG (\psi_{n},\cdot, \Phi(\psi_{n}, f_n) )_{f_n} \rangle_{\Svar}
    \notag
    \\
    &
    \hspace{0.4cm}
    +
    \langle 
    \nabla \cG (\psi_{n+1},\cdot, \Phi(\psi_{n+1}, f_n) )_{f_n}, 
    f_{n+1} - f_{n} + 
    \alpha_{f,n}
    \nabla \cG (\psi_{n},\cdot, \Phi(\psi_{n}, f_n) )_{f_n} \rangle_{\Svar},
\end{align}
and by Lemma~\ref{B-lemma-1} and Assumption~\ref{L-smooth-nonconvex},
\begin{align}\label{B-eq-7}
    &
    \|\nabla \cG (\psi_{n+1},\cdot, \Phi(\psi_{n+1}, f_n) )_{f_n}\|_{\Sf}
    \notag
    \\
    &
    \leq
    \|\nabla \cG (\psi_{n+1},\cdot, \Phi(\psi_{n+1}, f_n) )_{f_n}
    -
    \nabla \cG (\psi_{n},\cdot, \Phi(\psi_{n+1}, f_n) )_{f_n}
    \|_{\Sf}
    \notag
    \\
    &
    \hspace{0.4cm}
    +
    \|\nabla \cG (\psi_{n},\cdot, \Phi(\psi_{n+1}, f_n) )_{f_n}
    -
    \nabla \cG (\psi_{n},\cdot, \Phi(\psi_{n}, f_n) )_{f_n}
    \|_{\Sf}
    \notag
    \\
    &
    \hspace{0.4cm}
    +
    \|\nabla \cG (\psi_{n},\cdot, \Phi(\psi_{n}, f_n) )_{f_n}
    \|_{\Sf},
    \notag
    \\
    &
    \leq
    L (1+L_{\beta})
    \|\psi_{n+1}-\psi_{n}
    \|_{\Sf}
    +
    \|\nabla G (\psi_{n},\cdot)_{f_n}
    \|_{\Sf},
\end{align}
and by a property of the projection $\cP_{\Svarc}$ (see \citet[Lemma 3.1.5]{nesterov2003introductory})
\begin{align}\label{B-eq-8}
    &
    \|f_{n+1} - f_{n} + 
    \alpha_{f,n}
    \nabla \cG (\psi_{n},\cdot, \Phi(\psi_{n}, f_n) )_{f_n} \|_{\Sf}
    \notag
    \\
    &
    \leq
    \alpha_{f,n}
    \|
    \nabla \cG (\psi_{n},\cdot, \varphi_n )_{f_n}
    -
    \nabla \cG (\psi_{n},\cdot, \Phi(\psi_{n}, f_n) )_{f_n} \|_{\Sf}
    \notag
    \\
    &
    \leq
    L \alpha_{f,n}
    \|
    \varphi_n 
    -
    \Phi(\psi_{n}, f_n) \|_{\Svar}.
\end{align}
Combining  (\ref{B-eq-6}) with (\ref{B-eq-7}) and (\ref{B-eq-8}), we futhre estimate that
\begin{equation}\label{B-eq-10}
\begin{split}
    &
    d G (\psi_{n+1},\cdot)_{f_n}(f_{n+1}-f_{n})
    \\
    &
    \leq
    L (1+L_{\beta}) \alpha_{f,n} \|\psi_{n+1}-\psi_{n}\|_{\Spsi} \|\nabla G (\psi_{n},\cdot )_{f_n} \|_{\Sf}
    \\
    &
    \hspace{0.4cm}
    - \alpha_{f,n}\|\nabla G (\psi_{n},\cdot)_{f_n} \|_{\Sf}^2
    \\
    &
    \hspace{0.4cm}
    +
    L^2 (1+L_{\beta}) \alpha_{f,n} \|\psi_{n+1}-\psi_{n}\|_{\Spsi} \|
    \varphi_n 
    -
    \Phi(\psi_{n}, f_n) \|_{\Svar},
    \\
    &
    \hspace{0.4cm}
    +
    L \alpha_{f,n}
    \|\nabla G (\psi_{n},\cdot)_{f_n} \|_{\Sf}
    \|
    \varphi_n 
    -
    \Phi(\psi_{n}, f_n) \|_{\Svar}
    \\
    &\leq
    \frac{L(1+L_{\beta})}{2} \|\psi_{n+1}-\psi_{n}\|_{\Spsi}^2
    +
    \frac{L(1+L_{\beta})\alpha_{f,n}^2}{2}
    \|\nabla G (\psi_{n},\cdot )_{f_n} \|_{\Sf}^2
    \\
    &
    \hspace{0.4cm}
    - \alpha_{f,n}\|\nabla G (\psi_{n},\cdot)_{f_n} \|_{\Sf}^2
    \\
    &
    \hspace{0.4cm}
    +
    \frac{L^2 (1+L_{\beta})}{2} \|\psi_{n+1}-\psi_{n}\|_{\Spsi}^2 
    +
    \frac{L^2 (1+L_{\beta}) \alpha_{f,n}^2}{2} 
    \|
    \varphi_n 
    -
    \Phi(\psi_{n}, f_n) \|_{\Svar}^2,
    \\
    &
    \hspace{0.4cm}
    +
    \frac{L \alpha_{f,n}^2}{2}
    \|\nabla G (\psi_{n},\cdot)_{f_n} \|_{\Sf}^2
    +
    \frac{L}{2}
    \|
    \varphi_n 
    -
    \Phi(\psi_{n}, f_n) \|_{\Svar}^2,
\end{split}
\end{equation}
where we have employed Young's inequality for last inequality.
By the same way with (\ref{B-eq-5}), we have
\begin{equation}\label{B-eq-11}
\begin{split}
    &
    \|\psi_{n+1}-\psi_{n}\|_{\Spsi}^2 
    \\
    &
    \leq
    \alpha_{\psi, n}^2
    \| 
    \nabla \cG (\cdot, f_n, \varphi_n )_{\psi_n} \|_{\Spsi}^2 
    \\
    &
    \leq
    2 \alpha_{\psi, n}^2 \| \nabla \cG (\cdot, f_n, \Phi(\psi_{n}, f_n) )_{\psi_n} - \nabla \cG (\cdot, f_{n}, \varphi_n )_{\psi_n} \|_{\Spsi}^2 
    + 
    2 \alpha_{\psi, n}^2 \| \nabla \cG (\cdot,f_n, \Phi(\psi_{n}, f_n) )_{\psi_n} \|_{\Spsi}^2
    \\
    &
    \leq
    2 L^2 \alpha_{\psi, n}^2 \| \Phi(\psi_{n}, f_n) -  \varphi_n \|_{\Svar}^2 
    + 
    2 \alpha_{\psi, n}^2 \| \nabla G (\cdot,f_n )_{\psi_n} \|_{\Spsi}^2.
\end{split}
\end{equation}
With (\ref{B-eq-4}), (\ref{B-eq-5}), (\ref{B-eq-10}), and (\ref{B-eq-11}), we obtain that 
\begin{equation}\label{B-eq-12}
\begin{split}
    &
    G(\psi_{n+1}, f_{n+1})
    \\
    &
    \leq 
    G(\psi_{n+1}, f_{n})
    \\
    &
    \hspace{0.4cm}
    +
    \alpha_{f,n}
    \left\{
    -1 + \frac{L (1+L_{\beta})\alpha_{f,n}}{2} + \frac{L\alpha_{f,n}}{2}
    +
    L_{\beta}\alpha_{f,n}
    \right\}\| \nabla G (\psi_{n},\cdot )_{f_n} \|_{\Sf}^2 
    \\
    &
    \hspace{0.4cm}
    +
    \frac{L (1+L)(1+L_{\beta})}{2} 
     \|\psi_{n+1}-\psi_{n}\|_{\Spsi}^2 
    \\
    &
    \hspace{0.4cm}
    +
    \left\{
    \frac{L}{2} + \frac{L^2 (1+L_{\beta})\alpha_{f,n}^2}{2} + 
    L^2 L_{\beta}\alpha_{f,n}^2
    \right\} 
    \| \Phi(\psi_{n}, f_n)  -\varphi_n \|_{\Svar}^2
    \\
    &
    \leq 
    G(\psi_{n+1}, f_{n})
    \\
    &
    \hspace{0.4cm}
    +
    \alpha_{f,n}
    \left\{
    -1 + \frac{L (1+L_{\beta})\alpha_{f,n}}{2} + \frac{L\alpha_{f,n}}{2}
    +
    L_{\beta}\alpha_{f,n}
    \right\}\| \nabla G (\psi_{n},\cdot )_{f_n} \|_{\Sf}^2 
    \\
    &
    \hspace{0.4cm}
    +
    L (1+L)(1+L_{\beta}) \alpha_{\psi,n}^2 \| \nabla G (\cdot,f_n )_{\psi_n} \|_{\Spsi}^2
    \\
    &
    \hspace{0.4cm}
    +
    \left\{
    \frac{L}{2} + \frac{L^2 (1+L_{\beta})\alpha_{f,n}^2}{2} + 
    L^2 L_{\beta}\alpha_{f,n}^2
    + 
    L^3 (1+L)(1+L_{\beta}) \alpha_{\psi,n}^2
    \right\} 
    \| \Phi(\psi_{n}, f_n)  -\varphi_n \|_{\Svar}^2.
\end{split}
\end{equation}

Second, we estimate the upper bound of $G(\psi_{n+1}, f_{n})$.
By Lemma~\ref{B-lemma-1}, we have
\begin{equation}\label{B-eq-13}
\begin{split}
    &
    G(\psi_{n+1}, f_{n})
    \leq 
    G(\psi_{n}, f_{n})
    + d G (\cdot, f_{n})_{\psi_n}(\psi_{n+1}-\psi_{n})
    + \frac{L_{\beta}}{2} \| \psi_{n+1} - \psi_{n} \|_{\Spsi}^2.
\end{split}
\end{equation}
By the same way with (\ref{B-eq-12}), we estimate that
\begin{equation}\label{B-eq-14}
\begin{split}
    &
    d G (\cdot, f_{n})_{\psi_n}(\psi_{n+1}-\psi_{n})
    \\
    &
    =
    \langle 
    \nabla \cG (\cdot, f_n, \Phi(\psi_{n}, f_n) )_{\psi_n}, 
    \psi_{n+1} - \psi_{n} \rangle_{\Spsi}
    \\
    &
    =
    - \alpha_{\psi, n} \|\nabla \cG (\cdot, f_n, \Phi(\psi_{n}, f_n) )_{\psi_n}\|^{2}_{\Spsi}
    \\
    &
    \hspace{0.4cm}
    +
    \langle 
    \nabla \cG (\cdot, f_n, \Phi(\psi_{n}, f_n) )_{\psi_n}, 
    \psi_{n+1} - \psi_{n} + 
    \alpha_{\psi,n}
    \nabla \cG (\cdot, f_n, \Phi(\psi_{n}, f_n) )_{\psi_n} \rangle_{\Svar}
    \\
    &
    \leq
    - \alpha_{\psi, n} \|\nabla G (\cdot, f_n)_{\psi_n}\|^{2}_{\Spsi}
    \\
    &
    \hspace{0.4cm}
    +
    L
    \|\nabla G (\cdot, f_n)_{\psi_n}\|_{\Spsi} 
    \| \varphi_{n} - \Phi(\psi_n, f_n ) \|_{\Svar}
    \\
    &
    \leq
    \left\{-1 + \frac{L\alpha_{\psi,n}}{2} \right\} \alpha_{\psi, n} \|\nabla G (\cdot, f_n)_{\psi_n}\|^{2}_{\Spsi}
    +
    \frac{L}{2}
    \| \varphi_{n} - \Phi(\psi_n, f_n ) \|_{\Svar}.
\end{split}
\end{equation}
Thus, by combining (\ref{B-eq-11}), (\ref{B-eq-12}), (\ref{B-eq-13}), and (\ref{B-eq-14}), we get
\begin{equation}\label{B-eq-15}
\begin{split}
    &
    G(\psi_{n+1}, f_{n+1})
    \leq 
    G(\psi_{n}, f_{n})
    \\
    &
    \hspace{0.4cm}
    +
    \left\{ -1 + \frac{L\alpha_{\psi,n}}{2} + L (1+L)(1+L_{\beta})\alpha_{\psi,n} + L_{\beta} \alpha_{\psi,n} \right\}
    \alpha_{\psi,n} 
    \| \nabla G (\cdot,f_n )_{\psi_n} \|_{\Spsi}^2
    \\
    &
    \hspace{0.4cm}
    +
    \left\{
    -1 
    + \frac{L\alpha_{f,n}}{2}
    + \frac{L (1+L_{\beta})\alpha_{f,n}}{2} 
    + L_{\beta}\alpha_{f,n}
    \right\}
    \alpha_{f,n}
    \| \nabla G (\psi_{n},\cdot )_{f_n} \|_{\Sf}^2 
    \\
    &
    \hspace{0.4cm}
    +
    \underbrace{
    \left\{
    L 
    + L^2 L_{\beta}\alpha_{\psi,n}^2 
    + L^3 (1+L)(1+L_{\beta}) \alpha_{\psi,n}^2
    + L^2 L_{\beta}\alpha_{f,n}^2
    + \frac{L^2 (1+L_{\beta})\alpha_{f,n}^2}{2} 
    \right\} 
    }_{\text{Assumption~\ref{step-size-nonconvex} (ii)} \leq C}
    \\
    &
    \hspace{9cm}
    \times
    \| \Phi(\psi_{n}, f_n)  -\varphi_n \|_{\Svar}^2.
\end{split}
\end{equation}

Third, we estimate $\| \Phi(\psi_{n}, f_n)  -\varphi_n \|_{\Svar}^2=:\delta_n$.
Using Lemma~\ref{B-lemma-2} and Assumption~\ref{step-size-nonconvex} (i), we evaluate that
\begin{align*}
    &
    \| \varphi_{n} - \Phi(f_{n-1}, \psi_{n-1}) \|_{\Svar}^2
    \\
    &
    \leq
    \| \varphi_{n-1} - \Phi(f_{n-1}, \psi_{n-1}) \|_{\Svar}^2
    + 2 \langle 
    \varphi_{n-1} - \Phi(f_{n-1}, \psi_{n-1}), 
    \varphi_{n} - \varphi_{n-1} \rangle_{\Svar}
    + \| \varphi_{n} - \varphi_{n-1} \|_{\Svar}^2
    \\
    &
    \leq
    (1-\beta \alpha_{\varphi, n-1}) \| \varphi_{n} - \Phi(f_{n-1}, \psi_{n-1}) \|_{\Svar}^2
    +
    (-1 + L \alpha_{\varphi, n-1}) \| \varphi_{n} - \varphi_{n-1} \|_{\Svar}^2
    \\
    &
    \leq
    (1-\beta \alpha_{\varphi, n-1}) \delta_{n-1},
\end{align*}
which implies that by using Young's inequality, Lemma~\ref{B-lemma-1}, (\ref{B-eq-5}), and (\ref{B-eq-11}), we have
\begin{equation}
\begin{split}
    &
    \delta_n = \| \Phi(\psi_{n}, f_n)  -\varphi_n \|_{\Svar}^2
    \\
    &
    \leq 
    (1+\beta \alpha_{\varphi, n-1})\| \Phi(\psi_{n-1}, f_{n-1})  - \varphi_n \|_{\Svar}^2
    \\
    &
    \hspace{2mm}
    +
    2 \left(1 + \frac{1}{\beta \alpha_{\varphi, n-1}} \right)
    \left(
    \| \Phi(\psi_{n-1}, f_{n})  - \Phi(\psi_{n}, f_n) \|_{\Svar}^2
    +
    \| \Phi(\psi_{n-1}, f_{n})  - \Phi(\psi_{n-1}, f_{n-1}) \|_{\Svar}^2
    \right)
    \\
    &
    \leq 
    (1-\beta^2 \alpha_{\varphi, n-1}^2) \delta_{n-1} 
    +\frac{2 L^2}{\beta^2} \left(1 + \frac{1}{\beta C_0} \right)
    \left(
    \| \psi_{n-1} - \psi_{n} \|_{\Svar}^2 
    + \| f_{n-1} - f_{n} \|_{\Sf}^2 
    \right)
    \\ 
    &
    \leq 
    \left\{
    (1-\beta^2 \alpha_{\varphi, n-1}^2) 
    + \frac{2 L^2}{\beta^2} \left(1 + \frac{1}{\beta C_0} \right)(\alpha_{\psi,n-1}^2+\alpha_{f,n-1}^2)
    \right\} \delta_{n-1}
    \\
    &
    \hspace{0.4cm}
    + \frac{2 L^2}{\beta^2} \left(1 + \frac{1}{\beta C_0} \right)
    \left(
    \alpha_{\psi,n-1}^2 \| \nabla G (\cdot,f_{n-1} )_{\psi_{n-1}} \|_{\Spsi}^2
    +
    \alpha_{f,n-1}^2 \| \nabla G (\psi_{n-1},\cdot )_{f_{n-1}} \|_{\Sf}^2
    \right)
    \\
    &
    \leq 
    \gamma \delta_{n-1}
    + \frac{2 L^2}{\beta^2} \left(1 + \frac{1}{\beta C_0} \right)
    \left(
    \alpha_{\psi,n-1}^2 \| \nabla G (\cdot,f_{n-1} )_{\psi_{n-1}} \|_{\Spsi}^2
    +
    \alpha_{f,n-1}^2 \| \nabla G (\psi_{n-1},\cdot )_{f_{n-1}} \|_{\Sf}^2
    \right),
\end{split}
\end{equation}
where we have employed Assumption~\ref{step-size-nonconvex} (iii) for last inequality.
Then, we have
\begin{align*}
    \delta_n
    &
    \leq 
    \gamma^n \delta_0
    +
    \frac{2 L^2}{\beta^2} \left(1 + \frac{1}{\beta C_0} \right)
    \sum_{i=0}^{n}
    \left(
    \alpha_{\psi,i}^2 \gamma^{n-i} \| \nabla G (\cdot,f_{i} )_{\psi_{i}} \|_{\Spsi}^2
    +
    \alpha_{f,i}^2 \gamma^{n-i} \| \nabla G (\psi_{i}, \cdot)_{f_{i}} \|_{\Sf}^2
    \right).
\end{align*}
By this and (\ref{B-eq-15}), we have
\begin{align*}
    &
    \left\{ 1 
    - \frac{L\alpha_{\psi,n}}{2} 
    - L (1+L)(1+L_{\beta})\alpha_{\psi,n} 
    - L_{\beta} \alpha_{\psi,n} \right\}\alpha_{\psi,n} 
    \| \nabla G (\cdot,f_n )_{\psi_n} \|_{\Spsi}^2
    \\
    &
    +
    \left\{
    1 
    - \frac{L\alpha_{f,n}}{2}
    - \frac{L (1+L_{\beta})\alpha_{f,n}}{2} 
    - L_{\beta}\alpha_{f,n}
    \right\}
    \alpha_{f,n}
    \| \nabla G (\psi_{n},\cdot )_{f_n} \|_{\Sf}^2 
    \\
    &
    \leq 
    G(\psi_{n}, f_{n}) - G(\psi_{n+1}, f_{n+1})
    +
    C \gamma^n \delta_0
    \\
    &
    +
    \frac{2 L^2 C}{\beta^2} \left(1 + \frac{1}{\beta C_0} \right)
    \sum_{i=0}^{n}
    \left(
    \alpha_{\psi,i}^2 \gamma^{n-i} \| \nabla G (\cdot,f_{i} )_{\psi_{i}} \|_{\Spsi}^2
    +
    \alpha_{f,i}^2 \gamma^{n-i} \| \nabla G (\psi_{i}, \cdot)_{f_{i}} \|_{\Sf}^2
    \right),
\end{align*}
and taking the summation over the interval $n=0$ to $N-1$,
\begin{align*}
    &
    \sum_{n=0}^{N-1}
    \left\{ 1 
    - \frac{L\alpha_{\psi,n}}{2} 
    - L (1+L)(1+L_{\beta})\alpha_{\psi,n} 
    - L_{\beta} \alpha_{\psi,n} \right\}\alpha_{\psi,n} 
    \| \nabla G (\cdot,f_n )_{\psi_n} \|_{\Spsi}^2
    \\
    &
    +
    \sum_{n=0}^{N-1}
    \left\{
    1 
    - \frac{L\alpha_{f,n}}{2}
    - \frac{L (1+L_{\beta})\alpha_{f,n}}{2} 
    - L_{\beta}\alpha_{f,n}
    \right\}
    \alpha_{f,n}
    \| \nabla G (\psi_{n},\cdot )_{f_n} \|_{\Sf}^2 
    \\
    &
    \leq 
    G(\psi_{0}, f_{0}) - G(\psi_{N}, f_{N})
    +
    C \delta_0 \sum_{n=0}^{N-1}\gamma^n 
    \\
    &
    +
    \frac{2 L^2 C}{\beta^2} \left(1 + \frac{1}{\beta C_0} \right)
    \sum_{n=0}^{N-1}
    \sum_{i=0}^{n}
    \left(
    \alpha_{\psi,i}^2 \gamma^{n-i} \| \nabla G (\cdot,f_{i} )_{\psi_{i}} \|_{\Spsi}^2
    +
    \alpha_{f,i}^2 \gamma^{n-i} \| \nabla G (\psi_{i}, \cdot)_{f_{i}} \|_{\Sf}^2
    \right),
    \\
    &
    \leq 
    G(\psi_{0}, f_{0}) - \inf_{\psi, f} G(\psi, f)
    +
    C \delta_0 \sum_{n=0}^{\infty}\gamma^n 
    \\
    &
    +
    \frac{2 L^2 C}{\beta^2} \left(1 + \frac{1}{\beta C_0} \right)
    \left(\sum_{i=0}^{\infty}\gamma^i \right)
    \sum_{n=0}^{N-1}
    \left(
    \alpha_{\psi,n}^2 \| \nabla G (\cdot,f_{n} )_{\psi_{n}} \|_{\Spsi}^2
    +
    \alpha_{f,n}^2 \| \nabla G (\psi_{n}, \cdot)_{f_{n}} \|_{\Sf}^2
    \right),
\end{align*}
which is equivalent to 
\begin{align*}
    &
    \sum_{n=0}^{N-1}
    \underbrace{
    \left\{ 1 
    - \frac{L\alpha_{\psi,n}}{2} 
    - L (1+L)(1+L_{\beta})\alpha_{\psi,n} 
    - L_{\beta} \alpha_{\psi,n} 
    - \frac{2 L^2 C \alpha_{\psi,n}}{\beta^2(1-\gamma)} \left(1 + \frac{1}{\beta C_0} \right)
    \right\}
    }_{\text{Assumption~\ref{step-size-nonconvex} (iv) } \geq C_{\psi}>0 }
    \\
    &
    \hspace{10cm}
    \times
    \alpha_{\psi,n} 
    \| \nabla G (\cdot,f_n )_{\psi_n} \|_{\Spsi}^2
    \\
    &
    +
    \sum_{n=0}^{N-1}
    \underbrace{
    \left\{
    1 
    - \frac{L\alpha_{f,n}}{2}
    - \frac{L (1+L_{\beta})\alpha_{f,n}}{2} 
    - L_{\beta}\alpha_{f,n}
    - \frac{2 L^2 C \alpha_{f,n}}{\beta^2(1-\gamma)} \left(1 + \frac{1}{\beta C_0} \right)
    \right\}
    }_{\text{Assumption~\ref{step-size-nonconvex} (v) } \geq C_{f}>0}
    \\
    &
    \hspace{10cm} \times
    \alpha_{f,n}
    \| \nabla G (\psi_{n},\cdot )_{f_n} \|_{\Sf}^2 
    \\
    &
    \leq 
    G(\psi_{0}, f_{0}) - \inf_{\psi, f} G(\psi, f)
    +
    \frac{C \delta_0}{1-\gamma}.
\end{align*}
Finally, we estimate that 
\begin{align*}
    &
    \left\| \widehat{\nabla G}_{\psi, N} \right\|_{\Spsi}
    =
    \left\|
    \frac{\sum_{n=0}^{N-1}\alpha_{\psi,n}\nabla G(f_{n}, \cdot)_{\psi_n} }{\sum_{n=0}^{N-1}\alpha_{\psi,n}}
    \right\|_{\Spsi}
    \\
    &
    \leq
    \frac{\sum_{n=0}^{N-1}\alpha_{\psi,n}\left\|\nabla G(f_{n}, \cdot)_{\psi_n}\right\|_{\Spsi}}
    {\sum_{n=0}^{N-1}\alpha_{\psi,n}}
    \leq
    \frac{\left(\sum_{n=0}^{N-1}\alpha_{\psi,n}\left\|\nabla G(f_{n}, \cdot)_{\psi_n}\right\|^2_{\Spsi}\right)^{1/2}}
    {\left(\sum_{n=0}^{N-1}\alpha_{\psi,n}\right)^{1/2}}.
\end{align*}
By the same way, we estimate that
\begin{align*}
    \left\| \widehat{\nabla G}_{f, N} \right\|_{\Sf}
    \leq
    \frac{\left(\sum_{n=0}^{N-1}\alpha_{f,n}\left\|\nabla G(\psi_n, \cdot)_{f_n}\right\|^2_{\Sf}\right)^{1/2}}
    {\left(\sum_{n=0}^{N-1}\alpha_{f,n}\right)^{1/2}}.
\end{align*}
Therefore, we conclude Theorem~\ref{main-theorem-convergence-nonconvex}.

\end{proof}

\section{Proofs in Section~\ref{verifi-Convex-concave}}
\label{Appendix2}

\subsection{Proof of Proposition~\ref{example-joint-convex}}
\label{example-joint-convex-app}
\begin{proof}
For $\alpha \in [0,1]$, $\psi_1, \psi_2 \in \mathcal{C}(X)$, and $\mu_1, \mu_2 \in \mathcal{M}(X)$,
\[
\begin{split}
&
R(\alpha \psi_1 + (1-\alpha)\psi_2, \alpha \mu_1 + (1-\alpha)\mu_2)
\\
&
\leq \alpha^2 \int \ell(\psi_{1}, \psi_{0}) d\mu_1 
+
(1-\alpha)^2 
\int \ell(\psi_{2}, \psi_{0}) d\mu_2
\\
&
+\alpha (1-\alpha) 
\left(
\int \ell(\psi_1, \psi_{0}) d\mu_2 + \int \ell(\psi_2, \psi_{0}) d\mu_1
\right)
\\
&
+\alpha V(\psi_1) + (1-\alpha)V(\psi_2) - \frac{\alpha (1-\alpha)\gamma}{2} \left\| \psi_1 -\psi_2 \right\|_{\mathcal{C}(X),2}^2
\\
&
+\alpha W(\mu_1) + (1-\alpha)W(\mu_2) - \frac{\alpha (1-\alpha)\gamma}{2} \left\| \mu_1 -\mu_2 \right\|_{\mathcal{C}(X), 1}^{\ast 2}
\\
&
= \alpha 
\underbrace{\left(\int \ell(\psi_1, \psi_{0}) d\mu_1 + V(\psi_1) + W(\mu_1) \right)}_{=R(\psi_1, \mu_1)}
+ (1-\alpha) \underbrace{\left(\int  \ell(\psi_2, \psi_{0}) d\mu_2 + V(\mu_2) + W(\mu_2) \right)}_{=R(\psi_2, \mu_2)}
\\&
+\alpha (1-\alpha) 
\underbrace{
\left( 
- \int ( \ell(\psi_1, \psi_{0}) -  \ell(\psi_2, \psi_{0}) )d(\mu_1-\mu_2) - \frac{\gamma}{2} \left\| \psi_1 -\psi_2 \right\|_{\mathcal{C}(X),2}^2
- \frac{\gamma}{2} \left\| \mu_1 -\mu_2 \right\|_{\mathcal{C}(X),1}^{\ast 2}
\right)
}_{=(\ast)},
\end{split}
\]
and $(\ast)$ is non-positive because we have
\[
\begin{split}
(\ast)&
\leq
\underbrace{
\left\| \ell(\psi_1, \psi_{0}) - \ell(\psi_2, \psi_{0}) \right\|_{\mathcal{C}(X),1}
}_{\leq \rho \left\| \psi_1 - \psi_2 \right\|_{\mathcal{C}(X),2}
\leq \gamma \left\| \psi_1 - \psi_2 \right\|_{\mathcal{C}(X),2}
}
\left\| \mu_1 -\mu_2 \right\|_{\mathcal{C}(X),1}^{\star}
\\
&\quad
-  \frac{\gamma}{2} \left\| \psi_1 -\psi_2 \right\|_{\mathcal{C}(X),2}^2
- \frac{\gamma}{2} \left\| \mu_1 -\mu_2 \right\|_{\mathcal{C}(X),1}^{\ast 2}
\\
&
\leq 
-\frac{\gamma}{2}\left( \left\| \psi_1 - \psi_2 \right\|_{\mathcal{C}(X),2}-\left\| \mu_1 -\mu_2 \right\|_{\mathcal{C}(X),1}^{\star} \right)^{2} \leq 0.
\end{split}
\]
\end{proof}

\subsection{Proof of Lemma~\ref{H-smooth-lemma}}
\label{H-smooth-lemma-app}
\begin{proof}
For $\psi, \varphi \in S_{\mathcal{C},a,b}$,
\[
\begin{split}
D_{I_{h,\mu}}(\psi|\varphi)
&=
I_{h,\mu}(\psi) - I_{h,\mu}(\varphi) -d(I_{h,\mu})_{\varphi}(\psi-\varphi)
\\
&
= 
\int 
\left(h(\psi)-h(\varphi)-h^{\prime}(\varphi)(\psi-\varphi)
\right) d\mu 
\\
&
\leq 
\frac{L}{2}
\int 
\left| \psi - \varphi \right|^2
d\mu 
= \| \psi - \varphi \|^{2}_{L^2(X, \mu)} .
\end{split}
\]
\end{proof}

\subsection{Proof of Lemma~\ref{$f$-divergence-conjugate}}
\label{$f$-divergence-conjugate-app}
\begin{proof}
By the definition of $f$-divergence~(\ref{$f$-divergence}), we have
\[
\begin{split}
J_{f}^{\star}(\varphi)
=\sup_{\mu \in \mathcal{M}(X)} \int \varphi d\mu - J_{f}(\mu)
=\sup_{\mu \ll \nu_{0}} \int \varphi d\mu - \int f\left(\frac{d\mu}{d\nu_{0}} \right)d\nu_{0}.
\end{split}
\]
We solve a concave maximization problem for $\mu \mapsto \int \varphi d\mu - \int f\left(\frac{d\mu}{d\nu_{0}} \right)d\nu_{0}$.
We consider
\[
\frac{d}{d\epsilon}\left(
\int \varphi d(\mu+\epsilon \chi) - \int f\left(\frac{d\mu+\epsilon \chi}{d\nu_{0}} \right)d\nu_{0}
\right) \Biggr|_{\epsilon = 0}=0,
\]
which is equivalent to
\[
\int \varphi d \chi - \int f^{\prime}\left(\frac{d\mu}{d\nu_{0}} \right) \frac{d\chi}{d\nu_{0}} d\nu_{0} = 
\int \left( \varphi - f^{\prime}\left(\frac{d\mu}{d\nu_{0}} \right) \right) d\chi =0,
\]
for all $\chi$.
Then, the optimal $\mu$ satisfies
\[
\varphi = f^{\prime}\left(\frac{d\mu}{d\nu} \right).
\]
By the assumption, $f^{\prime}$ is invertible, and  $\varphi \in S_{\mathcal{C},f}$.
Substituting $\frac{d\mu}{d\nu_{0}}=(f^{\prime})^{-1}(\varphi)$ into
\[
J_{f}^{\star}(\varphi)
=\sup_{\mu} \int \varphi\frac{d\mu}{d\nu_{0}}d\nu_{0}  - \int f\left(\frac{d\mu}{d\nu_{0}} \right)d\nu_{0},
\]
then, we obtain that
\[
J_{f}^{\star}(\varphi)
=
\int \left\{\varphi \cdot (f^{\prime})^{-1}(\varphi) - f\circ (f^{\prime})^{-1}(\varphi) \right\} d\nu_{0}.
\]
\end{proof}

\subsection{Proof of Lemma~\ref{IPMs-conjugate}}
\label{IPMs-conjugate-app}
\begin{proof}
By the definition of $\IPM$~(\ref{IMP-def-def}), we have
\[
J_{\IPM, \nu_0}(\mu)
=\sup_{\varphi \in \mathcal{C}(X)}
\int \varphi d\mu -\int \varphi d\nu_{0} - \chi\{ \varphi \in \cF \}. 
\]
By the Fenchel-Moreau theorem, we obtain that
\[
J_{\IPM, \nu_0}^{\star}(\varphi)= \int \varphi d \nu_{0} + \chi\{ \varphi \in \cF \}.
\]
\end{proof}

\subsection{Proof of Proposition~\ref{all-ex-prop}}
\label{proof-all-ex}
\begin{proof}
(2) holds due to the convexity of $k(\cdot)$.
(4) follows from the linearity of $\mu \mapsto \mathcal{K}_1(\psi, \mu, \varphi)$ and the norm $\frac{\gamma}{2} \left\| \cdot \right\|_{\mathcal{H}_{\sigma}}^{\star}$ induced by inner products.

For (3), it holds that
\[
\begin{split}
D_{\mathcal{K}_1(\cdot, \mu, \varphi)}(\psi_1|\psi_2)
&
=
\frac{1}{2} \int (\psi_1 - \psi_2)^2 d\mu
+
\frac{\gamma}{2} \left\|\psi_1-\psi_2 \right\|_{\mathcal{H}_{2\sqrt{2}\sigma }}^{2},
\end{split}
\] 
and by $\mu \in \Smu$, we have
\[
\begin{split}
\frac{1}{2} \int (\psi_1 - \psi_2)^2 d\mu
&
\leq 
\frac{1}{2} \int (\psi_1 - \psi_2)^2 d\mu_{u}
=
\frac{1}{2} \left\| \psi_1 - \psi_2 \right\|_{L^2(X, \mu_u)}^{2}.
\end{split}
\]
%
For (5), we estimate by using the $L_k$-smoothness of $k:(a,b) \to \mathbb{R}$ and $\nu_0 \in \Smu$
\[
\begin{split}
&
D_{-\mathcal{K}_1(\psi, \mu, \cdot)}(\varphi_1|\varphi_2)
=
D_{\int k(\cdot) d\nu_0 }(\varphi_1|\varphi_2)
\\
&
=
\int \left\{ k(\varphi_1)- k(\varphi_2) - k^{\prime}(\varphi_2) (\varphi_1 - \varphi_2) \right\} d\nu_0
\\
&
\leq
\frac{L_k}{2} \int |\varphi_1 - \varphi_2|^2 d\mu_{u}
\\
&
=
\frac{L_k}{2} \|\varphi_1 - \varphi_2 \|_{L^2(X, \mu_u)}^2.
\end{split}
\] 
%
Finally, we will prove (1).
To apply Proposition~\ref{example-joint-convex} as $\frac{1}{2}\ell(\cdot, \psi_0)=(\cdot - \psi_0)^2$, $V(\psi)=\frac{\gamma}{2}\left\|\psi\right\|^2_{\mathcal{H}_{2\sqrt{2}\sigma}}$, $W(\mu)=\frac{\gamma}{2}\left\|\mu \right\|^{\star 2}_{\mathcal{H}_{\sigma}}$ , we verify assumptions~(i), (ii), and (iii) in Proposition~\ref{example-joint-convex}.
(i) holds due to the convexity of $t \mapsto (t-s)^{2}$.
(iii) holds because  norms $ \left\| \cdot \right\|_{\mathcal{H}_{2\sqrt{2}\sigma}}$ and $\left\| \cdot \right\|_{\mathcal{H}_{\sigma}}^{\star}$ are induced by inner products.
We prove (ii) as followings:

By \citet[Proposition 14]{chu2020smoothness}, the RKSH norm $\left\|f \right\|_{\mathcal{H}_{\sigma}}$ is represented as 
\begin{equation}
\left\| f \right\|_{\mathcal{H}_{\sigma}}^2
=
\sum_{k=0}^{\infty}(\frac{1}{2}\sigma^2)^{k} 
\sum_{|\alpha|=k}
\frac{1}{\alpha!}
\left\| \partial^{\alpha}_{x} f \right\|_{L^2(\mathbb{R}^d)}^2,
\label{estimate-1-all-ex}
\end{equation}
for $f \in \mathcal{H_{\sigma}}$.
Here, we employ the multi-index notation with $d$-dimensional multi-index $\alpha=(\alpha_1,...,\alpha_d) \in \mathbb{N}_{0}^{d}$ where
the sum of its components denotes the $|\alpha|=\alpha_1 + \cdots + \alpha_d$.
Additionally, we define the factorial of the multi-index as $\alpha ! = \alpha_1 ! \cdots \alpha_d !$, and the partial derivative as $\partial_{x}^{\alpha}=\partial_{x_1}^{\alpha_1}\cdots \partial_{x_d}^{\alpha_d}$.

We estimate that
\begin{equation}
\begin{split}
&
\left\|\partial^{\alpha}_{x}
\left( (\psi_1 + \psi_2 - 2 \psi_0 )
(\psi_1 - \psi_2 ) \right) \right\|_{L^2(\mathbb{R}^d)}^{2}
\\
&
=
\left\|
\sum_{\beta \leq \alpha}
\begin{pmatrix}
\alpha \\
\beta
\end{pmatrix}
\partial^{\alpha-\beta}_{x}
(\psi_1 + \psi_2 - 2 \psi_0 )
\partial^{\beta}_{x}
(\psi_1 -\psi_2)
\right\|_{L^2(\mathbb{R}^d)}^{2}
\\
&
\leq 
\left(
\sum_{\beta \leq \alpha}
\begin{pmatrix}
\alpha \\
\beta
\end{pmatrix}
\underbrace{
\left\|
\partial^{\alpha-\beta}_{x}
(\psi_1 + \psi_2 - 2 \psi_0 )
\right\|_{L^\infty(\mathbb{R}^d)}
}_{\leq 4 C_{b}}
\left\|
\partial^{\beta}_{x}
(\psi_1 - \psi_2 ) 
\right\|_{L^2(\mathbb{R}^d)}
\right)^{2}
\\
&
\leq
(4 C_{b})^2 
\underbrace{
\left(
\sum_{\beta \leq \alpha}
\begin{pmatrix}
\alpha \\
\beta
\end{pmatrix}
\right)^{2}
}_{= (2^k)^2}
\times 
\underbrace{
\left(
\sum_{\beta \leq \alpha}
\left\|
\partial^{\beta}_{x}
(\psi_1 - \psi_2 ) 
\right\|_{L^2(\mathbb{R}^d)}
\right)^{2}
}_{
\leq 
2^k
\sum_{\beta \leq \alpha}
\left\|
\partial^{\beta}_{x}
(\psi_1 - \psi_2 ) 
\right\|_{L^2(\mathbb{R}^d)}^{2}
}
\\
&
\leq
16 C_{b}^2 
8^k
\sum_{\beta \leq \alpha}
\left\|
\partial^{\beta}_{x}
(\psi_1 - \psi_2 ) 
\right\|_{L^2(\mathbb{R}^d)}^{2},
\end{split}
\label{estimate-2-all-ex}
\end{equation}
where the first equality employs the Leibniz formula, the second inequality utilizes the Cauchy–Schwarz inequality and the result of $\psi_1, \psi_2,\psi_0 \in \Spsi$, and the third inequality makes use of the Cauchy–Schwarz inequality and multi-binomial theorem.

By using (\ref{estimate-1-all-ex}) and (\ref{estimate-2-all-ex}), we further estimate that
\begin{align*}
&
\left\| \frac{1}{2} (\psi_1 - \psi_0 )^2
- \frac{1}{2} (\psi_2 - \psi_0 )^2
\right\|_{\mathcal{H}_{\sigma}}^2\\
&=
\frac{1}{4}
\left\| 
(\psi_1 + \psi_2 - 2 \psi_0 )
(\psi_1 - \psi_2 )
\right\|_{\mathcal{H}_{\sigma}}^2
\\
&
=
4 C_{b}^2 
\sum_{k=0}^{\infty}(4\sigma^2)^{k} 
\sum_{|\alpha|=k}
\frac{1}{\alpha!}
\sum_{\beta \leq \alpha}
\left\|
\partial^{\beta}_{x}
(\psi_1 - \psi_2 ) 
\right\|_{L^2(\mathbb{R}^d)}^{2}
\\
&
=
4 C_{b}^2 
\sum_{k=0}^{\infty}
\sum_{|\alpha|=k}
\left[
\sum_{\beta \geq \alpha}
(4\sigma^2)^{|\beta|} 
\frac{1}{\beta!}
\right]
\left\|
\partial^{\alpha}_{x}
(\psi_1 - \psi_2 ) 
\right\|_{L^2(\mathbb{R}^d)}^{2}
\\
&
\leq
4 C_{b}^2 
\sum_{k=0}^{\infty}
(4 \sigma^2)^{k} 
\sum_{|\alpha|=k}
\frac{1}{\alpha!}
\left[
\sum_{\beta \geq \alpha}
(4\sigma^2)^{|\beta|-k} 
\right]
\left\|
\partial^{\alpha}_{x}
(\psi_1 - \psi_2)
\right\|_{L^2(\mathbb{R}^d)}^{2}
\\
&
\leq
4 C_{b}^2 
\sum_{k=0}^{\infty}
(\frac{1}{2} (2 \sqrt{2} \sigma)^2 )^{k} 
\sum_{|\alpha|=k}
\frac{1}{\alpha!}
\underbrace{
\left[
\sum_{\beta_d \geq \alpha_d}
\cdots 
\sum_{\beta_1 \geq \alpha_1}
(4\sigma^2)^{\beta_d-\alpha_d} 
\cdots 
(4\sigma^2)^{\beta_1-\alpha_1} 
\right]
}_{=
\left[
\sum_{j \geq k}
(4\sigma^2)^{j-k} 
\right]^d
=C_{\sigma}^d
}
\left\|
\partial^{\alpha}_{x}
(\psi_1 - \psi_2)
\right\|_{L^2(\mathbb{R}^d)}^{2}
\\
&
\leq
4 C_{b}^2 C_{\sigma}^d
\sum_{k=0}^{\infty}
(\frac{1}{2} (2 \sqrt{2} \sigma)^2 )^{k} 
\sum_{|\alpha|=k}
\frac{1}{\alpha!}
\left\|
\partial^{\alpha}_{x}
(\psi_1 - \psi_2)
\right\|_{L^2(\mathbb{R}^d)}^{2}
\\
&
=4 C_{b}^2 C_{\sigma}^d 
\left\| (\psi_1 - \psi_0 )
\right\|_{\mathcal{H}_{2\sqrt{2}\sigma}}^{2},
\end{align*}
where $C_{\sigma}=\sum_{j \in \mathbb{N}_{0}}
(4\sigma^2)^{j} < \infty$, which implies that
$
    \psi \mapsto \frac{1}{2}(\psi-\psi_0)^2
$
is $4 C_{b}^2 C_{\sigma}^d$-Lipschitz with respect to 
$
    \left\| \cdot\right\|_{\mathcal{H}_{\sigma}}
$ 
and 
$
    \left\| \cdot \right\|_{\mathcal{H}_{2\sqrt{2}\sigma}}
$.
Thus, by the assumption of $\gamma \geq 4 C_{b}^2 C_{\sigma}^d$ and applying Proposition~\ref{example-joint-convex} to our setting, we conclude that $(\psi, \mu) \mapsto \mathcal{K}_{1}(\psi, \mu, \varphi)$ is convex.
\end{proof}

\section{Proofs in Section~\ref{verifi-Nonconvex-concave}}
\label{Appendix3}
\subsection{Proof of Lemma~\ref{strong-convex-disc}}
\label{strong-convex-disc-app}
\begin{proof}
Since $\mathcal{I}: \cM(X) \to \mathbb{R}$ is $(1/\beta)$-smooth with respect to $\| \cdot \|_{\cM(X)}$, the convex conjugate $\mathcal{I}^{\star}: \cC(X) \to \mathbb{R}$ is $\beta$-strongly convex with respect to $\| \cdot \|_{\cM(X)}^{\star}$.
By this and
\[
(J_{\nu_0} \oplus \mathcal{I})^{\star}=J_{\nu_0}^{\star} + \mathcal{I}^{\star},
\]
then $(J_{\nu_0} \oplus \mathcal{I})^{\star}$ is $\beta$-strongly convex with respect to $\| \cdot \|_{\cM(X)}^{\star}$ due to the fact that strong convexity is preserved by adding convex functions.
\end{proof}
%
\subsection{Proof of Lemma~\ref{Gateaux-diff-f}}
\label{Gateaux-diff-f-app}
\begin{proof}
As $h$ is Lipschitz continuous, $h$ is absolutely continuous.
Thus, the derivative $\nabla h$ of $h$ is defined a.e. in $X$ with respect to the Lebesgue measure $m$. 
By the assumption $\mu<<m$, the derivative $\nabla h$ is also defined a.e. in $X$ with respect to probability measure $\xi$, which implies that
\begin{align*}
    \frac{d}{d\epsilon}
    \mathcal{J}_{h,\xi}(f+\epsilon g)
    \Bigr|_{\epsilon=0}
    = 
    \int_Z \nabla h(f(z)) \cdot g(z) \mu(dz).
\end{align*}
\end{proof}
%
%
%
%
%
%
\subsection{Proof of Proposition~\ref{all-ex-nonconvex-prop}}
\label{all-ex-nonconvex-prop-app}
\begin{proof}
(1) is given by Lemma~\ref{strong-convex-disc}.
(2) holds from Lemma~\ref{Gateaux-diff-f}, and G\^{a}teaux differentials are given by

\begin{align*}
    &
    d \cG_1(\cdot, f, \varphi)_{\psi} (\eta)
    = 2 \int (\psi \circ f - \psi_0 \circ f) \eta \circ f d\xi_0,
    \\
    &
    d \cG_1(\psi, \cdot, \varphi)_{f} (g)
    =
    2 \int (\psi \circ f - \psi_0 \circ f) \{(\nabla \psi - \nabla \psi_0) \circ f \}\cdot g d\xi_0
    +
    \int \{ \nabla \varphi \circ f \} \cdot g d \xi_0,
    \\
    &
    d \cG_1(\psi, f, \cdot)_{\varphi} (\phi)
    =
    \int \phi \circ f d \xi_0 
    - \int k^{\prime}(\varphi) \cdot \phi d \nu_0 
    - \beta \langle \phi , \varphi \rangle_{\mathcal{H}_{\sigma}}.
\end{align*}

We will confirm Assumption~\ref{L-smooth-nonconvex} as follows:

(a): for $\psi_1, \psi_2 \in \Spsic$, $f\in \Spsic$, $\varphi \in \Svarc$, and $\eta \in \Spsi$ with $\| \eta \|_{H^{1}(X)}\leq 1$, 
\begin{align*}
    d \cG_1(\cdot, f, \varphi)_{\psi_1} (\eta)
    -
    d \cG_1(\cdot, f, \varphi)_{\psi_2} (\eta)
    &
    =2 \int (\psi_1 \circ f - \psi_2 \circ f)  \eta \circ f d\xi_0
    \\
    &
    =2 \int (\psi_1 - \psi_2) \eta d(f_{\sharp}\xi_0)
    \\
    &
    \leq 2 C_3 \|\psi_1 - \psi_2\|_{L^2(X)}\|\eta\|_{L^2(X)}
    \\
    &
    \leq 2 C_3 \|\psi_1 - \psi_2\|_{H^1(X)},
\end{align*}
\begin{align*}
    \Rightarrow
    \|d \cG_1(\cdot, f, \varphi)_{\psi_1} 
    -
    d \cG_1(\cdot, f, \varphi)_{\psi} \|_{H^{1}(X)}^{\star}
    \leq 2 C_3 \|\psi_1 - \psi_2\|_{H^1(X)}.
\end{align*}

(b): for $\psi \in \Spsic$, $f_1, f_2 \in \Sfc$, $\varphi \in \Svarc$, and $\eta \in \Spsi$ with $\| \eta \|_{H^{1}(X)}\leq 1$, 
\begin{align*}
    &
    d \cG_1(\cdot, f_1, \varphi)_{\psi} (\eta)
    -
    d \cG_1(\cdot, f_2, \varphi)_{\psi} (\eta)
    \\
    &
    =2 \int (\psi \circ f_1 - \psi_0 \circ f_1)  \eta \circ f_1 d\xi_0
    -
    2 \int (\psi \circ f_2 - \psi_0 \circ f_2)  \eta \circ f_2 d\xi_0
    \\
    &
    \leq 
    2 \int |\psi \circ f_1 - \psi_0 \circ f_1| |\eta \circ f_1 -\eta \circ f_2 | d\xi_0
    \\
    &
    + 
    2 \int |\psi \circ f_1 - \psi \circ f_2| |\eta \circ f_2 | d\xi_0
    + 
    2 \int |\psi_0 \circ f_1 - \psi_0 \circ f_2| |\eta \circ f_2 | d\xi_0
    \\
    &
    \leq 
    4 C_2 \mathrm{Lip}(\eta) \int | f_1 - f_2 | d\xi_0
    +
    4 C_1 \int | f_1 - f_2 | |\eta \circ f_2| d\xi_0
    \\
    &
    \leq 
    4 C_2 \|\nabla \eta \|_{L^{\infty}(X)} |Z|^{1/2} \| f_1 - f_2 \|_{L^2(Z;X, \xi_0)}
    +
    4 C_1 C_3^{1/2} \|\eta\|_{L^2(X)} \| f_1 - f_2 \|_{L^2(Z;X, \xi_0)}
    \\
    &
    \leq 
    \left(4 C_2 \widetilde{C}_{d, X} |Z|^{1/2}+ 4 C_1 C_3^{1/2} \right) \| f_1 - f_2 \|_{L^2(Z;X, \xi_0)},
\end{align*}
\begin{align*}
    \Rightarrow
    \|d \cG_1(\cdot, f_1, \varphi)_{\psi} 
    -
    d \cG_1(\cdot, f_2, \varphi)_{\psi} \|_{H^{1}(X)}^{\star}
    \leq \left(4 C_2 \widetilde{C}_{d, X} |Z|^{1/2}+ 4 C_1 C_3^{1/2} \right)
    \| f_1 - f_2 \|_{L^2(Z;X, \xi_0)},
\end{align*}
where the last inequality results from Lemma~\ref{Lip-L2-lemma} where $\widetilde{C}_{d, X}>0$ is some constant depending on $d$ and $X$.

(c): for $\psi \in \Spsic$, $f\in \Spsic$, $\varphi_1, \varphi_2 \in \Svarc$, and $\eta \in \Spsi$ with $\| \eta \|_{H^{1}(X)}\leq 1$, 
\begin{align*}
    d \cG_1(\cdot, f, \varphi_1)_{\psi} (\eta)
    -
    d \cG_1(\cdot, f, \varphi_2)_{\psi} (\eta)
    &
    =0
\end{align*}
\begin{align*}
    \Rightarrow
    \|d \cG_1(\cdot, f, \varphi_1)_{\psi} 
    -
    d \cG_1(\cdot, f, \varphi_2)_{\psi} \|_{H^{1}(X)}^{\star}
    \leq c \|\varphi_1 - \varphi_2\|_{\mathcal{H}_{\sigma}},
\end{align*}
for any $c>0$.

(d): for $\psi \in \Spsic$, $f_1, f_2 \in \Sfc$, $\varphi \in \Svarc$, and $g \in \Sf$ with $\| g \|_{L^2(Z;X, \xi_0)}\leq 1$, 
\begin{align*}
    &
    d \cG_1(\psi, \cdot, \varphi)_{f_1} (g)
    -
    d \cG_1(\psi, \cdot, \varphi)_{f_2} (g)
    \\
    &
    =
    2 \int (\psi \circ f_1 - \psi_0 \circ f_1) \{(\nabla \psi - \nabla \psi_0) \circ f_1 \}\cdot g d\xi_0
    + \int \{ \nabla \varphi \circ f_1 \} \cdot g d \xi_0
    \\
    &
    - 2 \int (\psi \circ f_2 - \psi_0 \circ f_2) \{(\nabla \psi - \nabla \psi_0) \circ f_2 \}\cdot g d\xi_0
    - \int \{ \nabla \varphi \circ f_2 \} \cdot g d \xi_0
    \\
    &
    \leq 
    4 C_2 
    \int \left| (\nabla \psi - \nabla \psi_0) \circ f_1 -(\nabla \psi - \nabla \psi_0) \circ f_2  \right| |g| d\xi_0
    \\
    &
    + 
    4 C_2 
    \int \left\{ |\psi \circ f_1 - \psi \circ f_2| + |\psi_0 \circ f_1 - \psi_0 \circ f_2| \right\} |g| d\xi_0
    + \int |\nabla \varphi\circ f_1 - \nabla \varphi\circ f_2||g| d \xi_0,
    \\
    &
    \leq 
    (16 C_1 C_2 +  C_4) 
    \int \left| f_1 - f_2  \right| |g| d\xi_0
    \\
    &
    \leq 
    (16 C_1 C_2 +  C_4)  \| f_1 - f_2 \|_{L^2(Z;X, \xi_0)},
\end{align*}
\begin{align*}
    \Rightarrow
    \|d \cG_1(\psi, \cdot, \varphi)_{f_1}
    -
    d \cG_1(\psi, \cdot, \varphi)_{f_2} \|_{L^2(Z;X, \xi_0)}^{\star}
    \leq (16 C_1 C_2 +  C_4)
    \| f_1 - f_2 \|_{L^2(Z;X, \xi_0)}.
\end{align*}

(e): for $\psi_1, \psi_2 \in \Spsic$, $f \in \Sfc$, $\varphi \in \Svarc$, and $g \in \Sf$ with $\| g \|_{L^2(Z;X, \xi_0)}\leq 1$, 
\begin{align*}
    &
    d \cG_1(\psi_1, \cdot, \varphi)_{f} (g)
    -
    d \cG_1(\psi_2, \cdot, \varphi)_{f} (g)
    \\
    &
    =
    2 \int (\psi_1 \circ f - \psi_0 \circ f) \{(\nabla \psi_1 - \nabla \psi_0) \circ f \}\cdot g d\xi_0
    \\
    &
    - 2 \int (\psi_2 \circ f - \psi_0 \circ f) \{(\nabla \psi_2 - \nabla \psi_0) \circ f \}\cdot g d\xi_0
    \\
    &
    \leq 
    2 
    \int \left| \psi_1 \circ f - \psi_0 \circ f \right| 
    \left| \nabla \psi_1 \circ f - \nabla \psi_2 \circ f \right| |g| d\xi_0
    \\
    &
    + 
    2 
    \int \left| \psi_1 \circ f - \psi_2 \circ f \right| 
    \left| \nabla \psi_2 \circ f - \nabla \psi_0 \circ f \right| |g| d\xi_0
    \\
    &
    \leq 
    4 C_2
    \int  
    \left| \nabla \psi_1 \circ f - \nabla \psi_2 \circ f \right| |g| d\xi_0
    + 
    4 C_2 
    \int \left| \psi_1 \circ f - \psi_2 \circ f \right| 
    |g| d\xi_0
    \\
    &
    \leq 
    4 C_2 C_3^{1/2}
    \| \psi_1 - \psi_2 \|_{H^1(X)},
\end{align*}
\begin{align*}
    \Rightarrow
    \|d \cG_1(\psi_1, \cdot, \varphi)_{f} (g)
    -
    d \cG_1(\psi_2, \cdot, \varphi)_{f} (g) \|_{L^2(Z;X, \xi_0)}^{\star}
    \leq 4 C_2 C_3^{1/2}
    \| \psi_1 - \psi_2 \|_{H^1(X)}.
\end{align*}

(f): for $\psi \in \Spsic$, $f \in \Sfc$, $\varphi_1, \varphi_2 \in \Svarc$, and $g \in \Sf$ with $\| g \|_{L^2(Z;X, \xi_0)}\leq 1$, 
\begin{align*}
    &
    d \cG_1(\psi, \cdot, \varphi_1)_{f} (g)
    -
    d \cG_1(\psi, \cdot, \varphi_2)_{f} (g)
    \\
    &
    = \int (\nabla \varphi_1 \circ f) \cdot g - (\nabla \varphi_2 \circ f) \cdot g d \xi_0 
    \\
    &
    \leq 
    C_3^{1/2} \| \nabla \varphi_1 - \nabla \varphi_2 \|_{L^2(X)} \|g\|_{L^2(Z;X, \xi_0)}
    \\
    &
    \leq 
    C_3^{1/2} \widetilde{C}_{\sigma} \| \varphi_1 - \varphi_2 \|_{\mathcal{H}_{\sigma}} 
\end{align*}
\begin{align*}
    \Rightarrow
    \|d \cG_1(\psi, \cdot, \varphi_1)_{f} 
    -
    d \cG_1(\psi, \cdot, \varphi_2)_{f}
    \|_{L^2(Z;X, \xi_0)}^{\star}
    \leq C_3^{1/2} \widetilde{C}_{\sigma}
    \| \varphi_1 - \varphi_2 \|_{\mathcal{H}_{\sigma}},
\end{align*}
where the last inequality follows from (\ref{estimate-1-all-ex}) where $\widetilde{C}_{\sigma}>0$ is some constant depending on $\sigma$.

(g): for $\psi \in \Spsic$, $f\in \Spsic$, $\varphi_1, \varphi_2 \in \Svarc$, and $\phi \in \Svar$ with $\| \phi \|_{\mathcal{H}_{\sigma}}\leq 1$, 
\begin{align*}
    &
    d \cG_1(\psi, f, \cdot)_{\varphi_1} (\phi)
    -
    d \cG_1(\psi, f, \cdot)_{\varphi_2} (\phi)
    \\
    &
    = 
    \int \left\{ k^{\prime} (\varphi_2) - k^{\prime} (\varphi_1) \right\} \cdot \phi d \nu_0  
    +\beta \langle \phi, \varphi_2 - \varphi_1  \rangle_{\mathcal{H}_{\sigma}}
    \\
    &
    \leq
    L_k  \sup_{x \in X} \left|\frac{d \nu_0}{dm}(x) \right|^{1/2} 
    \| \varphi_1 -\varphi_2 \|_{L^2(X)}
    +\beta \|\varphi_1 - \varphi_2\|_{\mathcal{H}_{\sigma}}
    \\
    &
    \leq
    \left(
    L_k  \sup_{x \in X} \left|\frac{d \nu_0}{dm}(x) \right|^{1/2} 
    +\beta 
    \right)
    \|\varphi_1 - \varphi_2\|_{\mathcal{H}_{\sigma}},
\end{align*}
\begin{align*}
    \Rightarrow
    \|d \cG_1(\psi, f, \cdot)_{\varphi_1} 
    -
    d \cG_1(\psi, f, \cdot)_{\varphi_2} \|_{\mathcal{H}_{\sigma}}^{\star}
    \leq \left(
    L_k  \sup_{x \in X} \left|\frac{d \nu_0}{dm} \right|^{1/2} 
    +\beta 
    \right) \|\varphi_1 - \varphi_2\|_{\mathcal{H}_{\sigma}}.
\end{align*}

(h): for $\psi_1, \psi_2 \in \Spsic$, $f\in \Spsic$, $\varphi \in \Svarc$, and $\phi \in \Svar$ with $\| \phi \|_{\mathcal{H}_{\sigma}}\leq 1$, 
\begin{align*}
    d \cG_1(\psi_1, f, \cdot)_{\varphi} (\phi)
    - 
    d \cG_1(\psi_2, f, \cdot)_{\varphi} (\phi)
    & 
    =0,
\end{align*}
\begin{align*}
    \Rightarrow
    \|d \cG_1(\psi_1, f, \cdot)_{\varphi}
    - 
    d \cG_1(\psi_2, f, \cdot)_{\varphi} \|_{H^{1}(X)}^{\star}
    \leq c \|\psi_1 - \psi_2\|_{H^{1}(X)},
\end{align*}
for any $c>0$.

(i): for $\psi \in \Spsic$, $f_1, f_2 \in \Spsic$, $\varphi \in \Svarc$, and $\phi \in \Svar$ with $\| \phi \|_{\mathcal{H}_{\sigma}}\leq 1$, 
\begin{align*}
    d \cG_1(\psi, f_1, \cdot)_{\varphi} (\phi)
    -
    d \cG_1(\psi, f_2, \cdot)_{\varphi} (\phi)
    &
    = \int \phi \circ f_1 - \phi \circ f_2 d \xi_0
    \\
    &
    \leq 
    \mathrm{Lip}(\phi) \int | f_1 - f_2 | d \xi_0
    \\
    &
    \leq
    \widetilde{C}_{\sigma, d, X} \xi_0(Z)^{1/2} \|f_1 -f_2\|_{L^2(Z; X, \xi_0)}
\end{align*}
\begin{align*}
    \Rightarrow
    \|d \cG_1(\psi, f_1, \cdot)_{\varphi}
    -
    d \cG_1(\psi, f_2, \cdot)_{\varphi} \|_{\mathcal{H}_{\sigma}}^{\star}
    \leq \widetilde{C}_{\sigma, d, X} \xi_0(Z)^{1/2} \|f_1 -f_2\|_{L^2(Z; X, \xi_0)},
\end{align*}
where the last inequality results from Lemma~\ref{Lip-L2-lemma} where $\widetilde{C}_{\sigma, d, X}>0$ is some constant depending on $\sigma$, $d$, and $X$.
\end{proof}
We have employed the following fundamental Lemma in the proof of Proposition~\ref{all-ex-nonconvex-prop}.
\begin{lemma}\label{Lip-L2-lemma}
Let $\Omega \subset \mathbb{R}^d$ be a compact set, and let $f:\Omega \to \mathbb{R}$ be Lipschitz continuous.
Then, we have 
\begin{align*}
    \|f\|_{L^{\infty}(\Omega)}
    \leq
    \max
    \left\{
    \frac{(d+1)|\Omega|^{1/2}}{2^d}\|f\|_{L^2(\Omega)},    \left(\frac{(d+1)d^{d/2}|\Omega|^{1/2}}{2^d}\right)^{\frac{1}{d+1}}
   \|f\|_{L^{2}(\Omega)}^{\frac{1}{d+1}}
    \right\}.
\end{align*}
\end{lemma}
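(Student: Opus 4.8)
The plan is to localize the estimate at a point where $f$ attains its maximum modulus and exploit Lipschitz continuity to propagate a lower bound for $|f|$ into a neighborhood, then integrate. Since $\Omega$ is compact and $f$ is continuous, $M:=\|f\|_{L^\infty(\Omega)}=|f(x_0)|$ is attained at some $x_0\in\Omega$; replacing $f$ by $-f$ if necessary we may assume $f(x_0)=M\ge 0$. Writing $L:=\mathrm{Lip}(f)$ (which the applications keep bounded and absorb into the final constants, so one may normalize $L\le 1$), the Lipschitz bound $|f(x)|\ge M-L|x-x_0|$ furnishes a cone-shaped lower profile for $|f|$ centered at $x_0$.

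First I would fix a cube $Q_a$ centered at $x_0$ with half-side $a$, so that $|Q_a|=2^d a^d$ and every $x\in Q_a$ satisfies $|x-x_0|\le a\sqrt{d}$; hence $|f(x)|\ge M-La\sqrt{d}$ on $Q_a$. Integrating the cone profile over $Q_a\cap\Omega$ yields $\|f\|_{L^2(\Omega)}^2\ge (M-La\sqrt{d})_+^2\,|Q_a\cap\Omega|$, and in the regime where $Q_a$ sits in the bulk of $\Omega$ one has $|Q_a\cap\Omega|\gtrsim 2^d a^d$. This is the step that introduces the factors $2^d$ (cube volume) and $d^{d/2}$ (the $\sqrt{d}$ from the cube diameter raised to the $d$-th power after optimizing in $a$).

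Next I would invert this inequality in two regimes, producing the stated maximum. Choosing $a$ as the largest half-side compatible with $|\Omega|$ (so that $2^d a^d\approx|\Omega|$) gives the linear term, in which $M$ is controlled by a multiple of $\|f\|_{L^2(\Omega)}$; choosing instead the $a$ that maximizes $(M-La\sqrt{d})^2 a^d$ (namely $a$ proportional to $M/(L\sqrt{d})$) gives the fractional-power term, in which $M$ is controlled by a power of $\|f\|_{L^2(\Omega)}$ with exponent $1/(d+1)$, the $(d+1)$ arising from the optimization. Taking the worse of the two bounds yields the claimed estimate.

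The hard part will be the geometric control of $|Q_a\cap\Omega|$: for an arbitrary compact $\Omega$ this measure can degenerate (e.g.\ if $\Omega$ has measure zero or $x_0$ lies on a thin spike), so the argument tacitly requires $\Omega$ to be fat near its points — which holds for the domains $X$ and $Z$ used in Proposition~\ref{all-ex-nonconvex-prop} — or a convention tying $|Q_a\cap\Omega|$ to $|\Omega|$. The remaining difficulty is purely the bookkeeping of the two-parameter optimization needed to match the precise constants $(d+1)$, $2^d$, and $d^{d/2}$.
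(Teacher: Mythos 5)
Your overall strategy --- evaluate at a maximizer, propagate a cone-shaped lower bound for $|f|$ via the Lipschitz condition, integrate over a neighborhood, and split into two regimes --- is the same as the paper's, and your geometric caveat is well placed: the paper's own proof bounds $\int_\Omega 1_{\{|x-a|\le v\}}\,dx$ from below by the volume of a small cube centered at the maximizer, which tacitly requires $\Omega$ to contain such cubes (the statement is indeed false for thin compact sets, e.g.\ a single point, where $\|f\|_{L^2}=0$ but $\|f\|_{L^\infty}>0$). Likewise, both you and the paper suppress the Lipschitz constant; the bound as stated really needs $L\le 1$, so your instinct to normalize is correct (the paper's proof even assumes $L\ge 1$, which is inconsistent with its own final step).

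There is, however, a genuine gap in your quantitative step. You square the cone bound and integrate, getting $\|f\|_{L^2}^2 \ge (M-La\sqrt d)_+^2\,|Q_a\cap\Omega|$, and then claim that optimizing in $a$ produces the exponent $1/(d+1)$. It does not: maximizing $(M-La\sqrt d)^2a^d$ at $a\propto M/(L\sqrt d)$ yields $\|f\|_{L^2}^2\gtrsim M^{d+2}/L^d$, i.e.\ $M\lesssim L^{d/(d+2)}\|f\|_{L^2}^{2/(d+2)}$, and $2/(d+2)\ne 1/(d+1)$ for every $d\ge 1$, nor does such a bound imply the stated one for large $\|f\|_{L^2}$. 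The exponent $1/(d+1)$, and the factor $|\Omega|^{1/2}$ multiplying \emph{both} terms of the maximum, come from a different pipeline: first lower-bound the $L^1$ norm, $\int_\Omega|f|\,dx\gtrsim M^{d+1}/L^d$ --- the paper does this via the layer-cake identity $(M-L|x-a|)_+=L\int_0^{M/L}1_{\{|x-a|\le v\}}\,dv$ and Fubini, which also yields the clean constants $2^d/(d+1)$ and $2^d/\bigl((d+1)d^{d/2}\bigr)$ --- and only then pass to $L^2$ by Cauchy--Schwarz, $\|f\|_{L^1}\le|\Omega|^{1/2}\|f\|_{L^2}$. So the fix is not bookkeeping of constants: you must integrate $|f|$ itself (not $|f|^2$) over your cube, or over the sublevel sets as in the paper, and insert Cauchy--Schwarz at the end; with that replacement your argument goes through and recovers the stated form, with slightly worse constants if you keep the constant-on-a-cube bound in place of the cone integration.
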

\begin{proof}
Assume that $f:\Omega \to \mathbb{R}$ be $L$-Lipschitz with $L\geq 1$. 
We denote by
$$
a:= \mathrm{argmax}_{x \in \Omega}f(x),
$$
and 
$$
M:=\|f\|_{L^{\infty}(\Omega)}=\max_{x \in \Omega}f(x).
$$
Without loss of generality, we can assume that $M>0$.

As $f:\Omega \to \mathbb{R}$ is $L$-Lipschitz, that is,
\begin{align*}
    |f(x)-f(a)|\leq L|x-a|, \ x \in \Omega,
\end{align*}
we estimate that
\begin{align*}
    L \int_{0}^{M/L}1_{
    \{|x-a|\leq v\}}dv
    &
    \leq
    L\left( \frac{M}{L} - |x-a| \right)
    \\
    &
    \leq 
    -L|x-a|+M
    \leq f(x),
\end{align*}
which implies  that
\begin{align*}
    \int_{\Omega}|f(x)|dx
    &
    \geq 
    L \int_{\Omega}\int_{0}^{M/L}1_{
    \{|x-a|\leq v\}}dvdx
    \\
    &
    =
    L \int_{0}^{M/L}
    \int_{\Omega}1_{
    \{|x-a|\leq v\}}dx
    dv
    \\
    &
    \geq
    L \int_{0}^{M/L}
    \prod_{i=1}^{d}
    \int_{-R+a_i}^{R+a_i}
    1_{
    \{|x_i-a_i|\leq v/\sqrt{d}\}}dx
    dv
    \\
    &
    =L \int_{0}^{M/L} Q_{R,d}(v)dv,
\end{align*}
where 
\begin{align*}
   Q_{R,d}(v): = \left\{
   \begin{array}{ll}
   \left( \frac{2v}{\sqrt{d}} \right)^d & \frac{v}{\sqrt{d}} < R \\
   \left( 2R \right)^d & \frac{v}{\sqrt{d}} \geq R. 
   \end{array}
   \right.
\end{align*}
Here, $R>1$ is chosen large enough such that
\begin{align*}
    \Omega \subset \prod_{i=1}^{d}[-R+a_i, R+a_i].
\end{align*}
By direct computation, we can show that
\begin{align*}
    &
    \int_{\Omega}|f(x)|dx
    \\
    &
    \geq
    \left\{
   \begin{array}{ll}
   2^d R^d \left(M - RL \sqrt{d} \frac{d}{d+1}\right) & \frac{M}{L} >\sqrt{d} R \\
   2^d R^d \frac{\sqrt{d}}{d+1} LR^{d+1} & \frac{M}{L} \leq \sqrt{d} R
   \end{array}
   \right.
   \\
    &
    \geq
    \left\{
   \begin{array}{ll}
   \frac{2^d}{d+1}\|f\|_{L^{\infty}(\Omega)} & \frac{M}{L} >\sqrt{d} R \\
   \frac{2^d }{(d+1)d^{d/2}}
   \|f\|_{L^{\infty}(\Omega)}^{d+1}
   & \frac{M}{L} \leq \sqrt{d} R.
   \end{array}
   \right.
\end{align*}
Therefore, we conclude that
\begin{align*}
    &
    \|f\|_{L^{\infty}(\Omega)}
    \\
    &
    \leq
    \max
    \left\{
    \frac{d+1}{2^d}\|f\|_{L^1(\Omega)},
    \left(\frac{(d+1)d^{d/2}}{2^d}\right)^{\frac{1}{d+1}}
   \|f\|_{L^{1}(\Omega)}^{\frac{1}{d+1}}
    \right\}
    \\
    &
    \leq
    \max
    \left\{
    \frac{(d+1)|\Omega|^{1/2}}{2^d}\|f\|_{L^2(\Omega)},
    \left(\frac{(d+1)d^{d/2}|\Omega|^{1/2}}{2^d}\right)^{\frac{1}{d+1}}
   \|f\|_{L^{2}(\Omega)}^{\frac{1}{d+1}}
    \right\}.
\end{align*}
\end{proof}

\end{document}